%\documentclass{article} 
% \documentclass[letter, 10pt, conference]{ieeeconf}      % Use this line for a4 paper
% \IEEEoverridecommandlockouts
% \overrideIEEEmargins

\documentclass{IEEEtran}
\pdfoutput=1
\usepackage{cite}
\usepackage{color}
\usepackage{graphicx}
\usepackage{epstopdf}
%\usepackage[caption=false,font=footnotesize]{subfig}
%\usepackage{subfig}
%\usepackage{subcaption}
%\usepackage[draft]{graphicx}
%\graphicspath{{fig/}{jpeg/}}
\usepackage[cmex10]{amsmath}
\def\Real{\mathbb{R}}
\usepackage{bbm}
\usepackage{algorithm,algpseudocode}

\usepackage{amsmath,amssymb}
\usepackage{lipsum}
\usepackage{comment}
\usepackage{array}
\input{mysymbol.sty}
\usepackage{needspace}

% \nbsubsubsection{} provides a numbered subsection in bold without a line break. The section will contain at least three lines of text before a pagebreak

% \myparagraph provides a paragraph title in italics. 

% \myparagraphtc provides a paragraph title in italics. It adds an enter to the table of contents

\usepackage{amsthm,bm}
%\usepackage{tikz}
%\usetikzlibrary{shapes,arrows}
%\usepackage{bbm}
\usepackage{dsfont}
\usepackage{url}
\usepackage{threeparttable}

\usepackage{subfigure}
\usepackage{tikz}
\usepackage{pgfplots}
\usepgfplotslibrary{groupplots}
\usetikzlibrary{external}
\tikzexternalize
\tikzsetexternalprefix{figures/}
\tikzset{external/optimize=false}

% Comments
% \usepackage{color}
\definecolor{mygreen}{rgb}{0.10,0.50,0.10}

\usepackage{ifthen}
\newboolean{showcomments}
\setboolean{showcomments}{true}
\usepackage{todonotes}

\newcommand{\santiago}[1]{  \ifthenelse{\boolean{showcomments}}
{\todo[inline,color=cyan]{Santiago: #1}}{}}

\newcommand{\weiqin}[1]{  \ifthenelse{\boolean{showcomments}}
{\todo[inline,color=orange]{Weiqin: #1}}{}}

\newcommand{\reviewer}[1]{  \ifthenelse{\boolean{showcomments}}
{\todo[inline,color=red]{Reviewer: #1}}{}}

\newtheorem{proposition}{Proposition}

\newtheorem{theorem}{Theorem}
\newtheorem{definition}{Definition}
\newtheorem{lemma}{Lemma}
\newtheorem{corollary}{Corollary}

\theoremstyle{definition}

\author{Weiqin Chen$^\dagger$, 
Dharmashankar Subramanian$^\S$ and Santiago Paternain$^\dagger$
\thanks{$^\dagger$ Department of Electrical Computer and Systems Engineering, Renssealaer Polytechnic Institute. Email: chenw18@rpi.edu, paters@rpi.edu. $^\S$ IBM T.J. Watson Research Center. Email: dharmash@us.ibm.com
}}
 %\thanks{%Work supported by ARL DCIST CRA W911NF-17-2-0181 and the Intel Science and Technology Center for Wireless Autonomous Systems. 

\renewcommand{\comment}[1]{}
\newcolumntype{S}{>{\centering\arraybackslash} m{.10\linewidth} }
\newcolumntype{T}{>{\centering\arraybackslash} m{.30\linewidth} }

\title{Probabilistic Constraint for Safety-Critical Reinforcement Learning}

% head added in line 138
\usepackage{fancyhdr}
\fancyhead[C]{Accepted for publication in IEEE Transactions on Automatic Control}

\begin{document}

\maketitle

\thispagestyle{fancy}

\begin{abstract}
In this paper, we consider the problem of learning safe policies for probabilistic-constrained reinforcement learning (RL). Specifically, a safe policy or controller is one that, with high probability, maintains the trajectory of the agent in a given safe set. We establish a connection between this probabilistic-constrained setting and the cumulative-constrained formulation that is frequently explored in the existing literature. We provide theoretical bounds elucidating that the probabilistic-constrained setting offers a better trade-off in terms of optimality and safety (constraint satisfaction). The challenge encountered when dealing with the probabilistic constraints, as explored in this work, arises from the absence of explicit expressions for their gradients. Our prior work provides such an explicit gradient expression for probabilistic constraints which we term Safe Policy Gradient-REINFORCE (SPG-REINFORCE). In this work, we provide an improved gradient SPG-Actor-Critic that leads to a lower variance than SPG-REINFORCE, which is substantiated by our theoretical results. A noteworthy aspect of both SPGs is their inherent algorithm independence, rendering them versatile for application across a range of policy-based algorithms. Furthermore, we propose a Safe Primal-Dual algorithm that can leverage both SPGs to learn safe policies. It is subsequently followed by theoretical analyses that encompass the convergence of the algorithm, as well as the near-optimality and feasibility on average. In addition, we test the proposed approaches by a series of empirical experiments. These experiments aim to examine and analyze the inherent trade-offs between the optimality and safety, and serve to substantiate the efficacy of two SPGs, as well as our theoretical contributions.
\end{abstract}

%!TEX root = root.tex
\section{Introduction}\label{sec_intro}

Reinforcement learning (RL) has succeeded in solving a wide variety of sequential decision-making problems, such as playing video games~\cite{mnih2013playing}, control problems~\cite{shreve1978alternative}, robotic manipulation~\cite{levine2016end} and robot locomotion~\cite{duan2016benchmarking}. RL problems are generally formulated as Markov Decision Processes (MDPs). When dynamic models of MDPs are accessible, optimal \emph{policies} can be derived using dynamic programming~\cite{bertsekas1996neuro}. However, in cases where the underlying dynamics are unknown, the policy needs to be learned from system samples. Numerous RL algorithms~\cite{williams1992simple, lillicrap2015continuous, schulman2015trust, schulman2017proximal, haarnoja2018soft}, exhibiting diverse degrees of effectiveness, have been developed to tackle these problems. Nevertheless, in general, RL algorithms are only concerned with maximizing the expected cumulative reward~\cite{watkins1992q,sutton1999policy}, which may lead to unsafe behaviors~\cite{garcia2015comprehensive} in realistic domains.

Safety constitutes a foundational aspect in the design of control systems for physical entities. Specifically, controllers employed in power systems are meticulously crafted to avert voltage instabilities~\cite{van2000voltage}, which may lead to hazardous operational conditions. Similarly, in the realm of robot navigation, ensuring collision avoidance~\cite{kahn2018self} is essential for their proper functioning, and to ensure the preservation of human safety in the vicinity. Taking into account the safety requirements motivates the development of policy optimization under safety guarantees~\cite{geibel2006reinforcement,kadota2006discounted,chow2017risk}. Several approaches consider risk-aware objectives where the reward is modified to take into account the safety requirements~\cite{howard1972risk,sato2001td,geibel2005risk}. A limitation of these approaches is that reward shaping (the process of combining reward with safety requirements) is, in general, a time-consuming process of hyper-parameter tuning that requires human intervention and it is problem dependent~\cite{leike2017ai,mania2018simple}.

To mitigate this issue, a common approach is to employ the framework of Constrained MDPs (CMDPs)~\cite{altman1999constrained},
where additional expected cumulative (or average) cost needs to be maintained within a desired threshold. This framework has gained widespread adoption for inducing safe behaviors~\cite{borkar2005actor,bhatnagar2012online,liang2018accelerated, achiam2017constrained, tessler2018reward, yang2020projection, zhang2020first, liu2020ipo, shen2022penalized}. To solve these constrained optimization problems, regularization methods~\cite{censor1977pareto} and primal-dual algorithms~\cite{borkar2005actor,bhatnagar2012online,liang2018accelerated,achiam2017constrained, tessler2018reward} are generally considered.
In this setting, it is noteworthy that even safety violations in all trajectories are acceptable as long as the amount of violations does not exceed the desired threshold. This makes them often not suitable for safety-critical applications~\cite{cheng2019end, zhang2020cautious, corsi2021formal}. For instance, in the context of autonomous driving, even one single collision with obstacles or pedestrians is unacceptable.

A more suitable notion for safety-critical contexts, is to guarantee that the whole trajectory of the system remains within a set that is deemed to be safe. Ideally, one would like to achieve this goal for every possible trajectory. This being an ambitious goal, in this work we settle for high probability guarantees. We describe this setting in detail in Section \ref{sec_problem_formulation}. Problems with such probabilistic constraints have been considered in~\cite{geibel2006reinforcement,delage2010percentile}. The main challenge in solving them is that policy gradient-like expressions for the probabilistic constraints are not readily available. This, in turn, prevents from running classical policy-based algorithms such as REINFORCE~\cite{williams1992simple}, as well as state-of-the-art methods like  DDPG~\cite{lillicrap2015continuous}, PPO~\cite{schulman2017proximal}, SAC~\cite{haarnoja2018soft}, CQL~\cite{kumar2020conservative}, CSC~\cite{bharadhwaj2020conservative}, and constrained Q learning~\cite{kalweit2020deep} in the probabilistic-constrained RL setting. 
% \blue{Likewise, the cutting-edge algorithms conservative Q-learning (CQL)~\cite{kumar2020conservative} and conservative safety critics (CSC)~\cite{bharadhwaj2020conservative} do not apply to probabilistic-constrained RL. On the other hand, to the best of our knowledge CQL is better to be applied to offline RL~\cite{levine2020offline} and CSC is more suitable for safe exploration~\cite{garcia2015comprehensive} which are not the focus and interest of this work.} 
Perhaps, for this reason, cumulative constraints are normally considered in the literature~\cite{paternain2022safe,calvo2021towards}. In Section~\ref{sec_Safe_Reinforcement_Learning_Problems_Properties}, we delve into the interrelationships between probabilistic-constrained and cumulative-constrained settings in terms of the objective optimality and safety satisfaction. The cumulative constraint can be construed as a more lenient form of the probabilistic constraint, providing a relaxed framework for achieving desired outcomes. In addition, we establish theoretical bounds indicating that working directly with the probabilistic constraint provides advantages over the cumulative setting, i.e., an improved optimality-safety trade-off (Theorem \ref{theorem_P_star_Ptilder_star}).

Described in Section~\ref{Gradient of the Probabilistic Constraint}, our prior work~\cite{chen2023policy} provides, to the best of our knowledge, the first explicit expression for the gradient of the probabilistic constraint, which can be naturally applied to the aforementioned classical and state-of-the-art algorithms. We term the gradient established in \cite{chen2023policy} Safe Policy Gradient-REINFORCE (SPG-REINFORCE). Building on \cite{chen2023policy}, we here introduce an improved gradient, which we term SPG-Actor-Critic that exhibits reduced variance (Theorem \ref{theorem_lower_variance}). Section~\ref{Losses_of_Imposing_Relaxation} proposes a Safe Primal-Dual algorithm that can leverage both SPGs to learn safe policies. Furthermore, we establish the convergence of the algorithm, as well as the feasibility and near-optimality of the algorithm on average. Other than concluding remarks (Section~\ref{sec_conclusions}), this paper finishes with a series of numerical experiments in Section~\ref{Numerical_Experiments},
%\red{. We first consider a continuous navigation task that illustrates (i) the ability to implement Safe Primal-Dual algorithm that leverage both SPGs to train safe policies and (ii) the theoretical bounds established in Section~\ref{sec_Safe_Reinforcement_Learning_Problems_Properties} relating the problems with cumulative and probabilistic constraints. In particular, the latter traces a better optimality-safety trade-off. Subsequently, a lunar lander problem~\cite{brockman2016openai} showcases the successful application of SPGs in systems characterized by intricate dynamics.}
%
where we illustrate (i) the ability to learn safe policies with the Safe Primal-Dual algorithm leveraging both SPGs, (ii) the reduced variance of SPG-Actor-Critic, and (iii) the improved optimality-safety trade-offs that probabilistic constraints provide over cumulative constraints.

\section{Problem Formulation}\label{sec_problem_formulation}

% \red{clarify the notation

% what is $\mu$ sample space, event space. It is not comprehensive}
In this work, we consider the problem of finding optimal policies for Markov Decision Processes (MDPs) under probabilistic safety guarantees. In particular, we are interested in situations where the state transition distributions are unknown, and thus the policies need to be computed from system samples. An MDP~\cite{sutton2018reinforcement} is defined by a tuple ($\mathcal{S}, \mathcal{A}, r, \mathbb{P}, \mu, T$), where $\mathcal{S}$ is the state space, $\mathcal{A}$ is the action space, $r: \mathcal{S} \times \mathcal{A} \to \Real$ is the reward function describing the quality of the decision. For any $\hat{\mathcal{S}} \subset \mathcal{S}, s_t \in \mathcal{S}, a_t \in \mathcal{A}, t\in\left\{0,1,\ldots, T\right\}$,  {$\mathbb{P}_{s_t \to s_{t+1}}^{a_t} (\hat{\mathcal{S}}):=\mathbb{P}(s_{t+1} \in \hat{\mathcal{S}} \mid s_t, a_t)$ is the transition probability describing the dynamics of the system, $\mu (\hat{\mathcal{S}}):= \mathbb{P}(s \in \hat{\mathcal{S})}$ is the initial state distribution,} and $T$ is the time horizon. The state and action at time $t\in\left\{0,1,\ldots, T\right\}$ are random variables denoted respectively by $S_t$ and $A_t$. A \emph{policy} is a conditional distribution $\pi_\theta (a|s)$ parameterized by $\theta \in \Real^d$ (for instance the weights and biases of neural networks), from which the agent draws action $a \in \mathcal{A}$ when in the corresponding state $s \in \mathcal{S}$. In the context of MDPs the objective is to find a policy that maximizes the value function which is defined as 
\begin{equation}\label{eqn_val_func_finite}
    V (\theta) =  \mathbb{E}_{\mathbf{a} \sim \pi_\theta (\mathbf{a}|\mathbf{s}), S_0 \sim \mu} \left[\sum\limits_{t=0}^{T} r(S_t, A_t)\right],
\end{equation}
where $\mathbf{a}$ and $\mathbf{s}$ denote the sequences of actions and states for the whole episode, this is, from time $t=0$ to $t=T$. Subscripts of the expectation are omitted in the remainder of this paper for notation simplicity.

% For any state $S_t \in \mathcal{S}$, the selected action $A_t \in \mathcal{A}$ drives the agent to the next state $S_{t+1} \in \mathcal{S}$ in accordance with the transition dynamics of the system demonstrated by a conditional probability $\mathbb{P}_{S_t \rightarrow S_{t+1}}^{A_t} (\hat{\mathcal{S}}) := \mathbb{P}(S_{t+1} \in \hat{\mathcal{S}} | S_t, A_t)$, where $\hat{\mathcal{S}} \subset \mathcal{S}$. By virtue of the Markov property of the system $\mathbb{P}(S_{t+1} \in \hat{\mathcal{S}} | (S_i, A_i), \forall i \leq t )=\mathbb{P}(S_{t+1} \in \hat{\mathcal{S}} | S_t, A_t)$, it is known as a MDP.

By attempting to maximize a single objective, policies may be unsafe or result in risky behaviors. Thus, we impose requirements in terms of probabilistic safety to overcome this limitation. We formally define this notion next. 
\begin{definition}
\label{definition_safety}
A policy $\pi_\theta$ is $(1-\delta)$-safe for the set $\mathcal{S}_\text{safe} \subset \mathcal{S}$ if and only if $\mathbb{P} \left(\cap_{t=0}^{T} \{ S_t \in \mathcal{S}_\text{safe}\} |\pi_\theta \right) \geq 1-\delta$.
\end{definition}
In the previous definition  $\cap_{t=0}^T\left\{S_t\in \ccalS_{safe}\right\}$ refers to the intersection of events of the form $\left\{S_t\in \ccalS_{safe}\right\}$ for all times. This is, we require the state $S_t$ to belong to the safe set for all times $t=0,\ldots T$ with high probability.
% \blue{Notice that
% $\mathbb{P} \left(\cap_{t=0}^{T} \{ S_t \in \mathcal{S}_\text{safe}\} |\pi_\theta \right)$ in the previous definition denotes the probability of the whole episode being safe given the policy $\pi_\theta$, i.e., every state $S_t$ (from $t=0$ to $t=T$) remains within the safe set $\mathcal{S}_\text{safe}$.}
Under Definition~\ref{definition_safety}, we formulate the probabilistic safe RL problem as the following constrained optimization problem
\begin{align}\label{eqn_problem1}
 P^\star = &\max\limits_{\theta \in \Real^d} \, V(\theta)      \nonumber \\
 &\text{s.t.} \quad \mathbb{P} \left(\bigcap\limits_{t=0}^{T} \{ S_t \in \mathcal{S}_\text{safe}\} |\pi_\theta \right) \geq 1-\delta. 
\end{align}

To solve problem~\eqref{eqn_problem1}, it is conceivable to employ gradient-based methods e.g., regularization~\cite{censor1977pareto} and primal-dual~\cite{arrow1958studies} to achieve local optimal solutions. For instance, consider the regularization method with a fixed penalty. This is, for~$\lambda>0$ we formulate the following \emph{unconstrained} problem as an approximation to the \emph{constrained} problem \eqref{eqn_problem1}
% \begin{align}\label{eqn_agmt_obj}
%  \mathbb{E} \left[\sum\limits_{t=0}^{T} r(S_t, A_t) | \pi_\theta \right] \! + \! \lambda \! \left(\!  \mathbb{P}  \left(\bigcap\limits_{t=0}^{T} \{ S_t \in \mathcal{S}_\text{safe}\} |\pi_\theta \right) \!- \! (1-\delta) \! \right),
% \end{align}
\begin{align}\label{eqn_agmt_obj}
 \max\limits_{\theta \in \Real^d} \, V(\theta)  +  \lambda  \left( \mathbb{P}  \left(\bigcap\limits_{t=0}^{T} \{ S_t \in \mathcal{S}_\text{safe}\} |\pi_\theta \right) -  (1-\delta) \right).
\end{align}
It is important to point out that, in general, there is no guarantee that a fixed coefficient $\lambda$ achieves the same solution as \eqref{eqn_problem1} (an exception is, for instance, in cases where \eqref{eqn_problem1} is convex~\cite{boyd2004convex}). However, $\lambda$ trades-off optimality and safety. Indeed, for large values of $\lambda$ solutions to \eqref{eqn_agmt_obj} will prioritize safe behaviors, whereas for small values of $\lambda$ the solutions will focus on maximizing the value function. Alternatively, one approach to automatically search appropriate values of $\lambda$ is to use iterative methods such as the primal-dual algorithm (See Section \ref{Losses_of_Imposing_Relaxation}).

% {\color{red}{Similarly, the primal-dual method operates by updating $\lambda$ at each iteration. In this context, we present a novel approach called Safe Primal-Dual algorithm, which is detailed in Section~\ref{Losses_of_Imposing_Relaxation}. To validate the effectiveness of this algorithm, we conduct numerical experiments, and the results are discussed in Section~\ref{Numerical_Experiments}.}}

To solve problem~\eqref{eqn_agmt_obj} locally, gradient ascent~\cite{bertsekas1997nonlinear} or its stochastic versions can be used. Note that the gradient of $V(\theta)$ in~\eqref{eqn_agmt_obj} can be computed using the Policy Gradient Theorem~\cite{sutton1999policy}. Nevertheless, the lack of an expression for the gradient of the probabilistic safety, i.e., $\nabla_\theta \mathbb{P} \left(\cap_{t=0}^{T} \{ S_t \in \mathcal{S}_\text{safe}\} |\pi_\theta \right)$ prevents us from applying this family of methods to solve \eqref{eqn_agmt_obj}. 

It is worth pointing out that other state-of-the-art algorithms that are developed to solve constrained RL problems, e.g., CPO~\cite{achiam2017constrained}, RCPO~\cite{tessler2018reward}, PCPO~\cite{yang2020projection}, FOCOPS~\cite{zhang2020first} also rely on computing the gradients of objective functions and constraints. As such, it is not surprising that when considering safe RL formulations, instead of dealing with problems of the form \eqref{eqn_problem1} they consider cumulative constraints. In these problems, an auxiliary reward function $r_c:\ccalS\times\ccalA\to \mathbb{R}$ is defined, and the following problem is formulated
\begin{align}\label{eqn_problem2}
 \tilde{P}^\star (\xi) = &\max\limits_{\theta \in \Real^d} \, V(\theta)      \nonumber \\
 &\text{s.t.} \quad V_c(\theta):=~\mathbb{E} \left[\sum\limits_{t=0}^{T} r_c(S_t, A_t)\right]\geq \xi,
\end{align}
where $\xi$ is a hyper-parameter that induces different levels of safety. Problems of the form \eqref{eqn_problem2} are known as CMDPs~\cite{altman1999constrained} and they can be tackled through classic policy gradient, e.g.,~\cite{sutton1999policy}. When formulating safe RL as a CMDP, the function $r_c$ is designed so that it induces safe behaviors when attaining large values. This results in a problem-dependent design and in general, it necessitates a time-consuming process of hyper-parameter tuning. Moreover, the probabilistic safety as in Definition \ref{definition_safety} is not guaranteed by formulation \eqref{eqn_problem2}. Both of these limitations are avoided under formulation \eqref{eqn_problem1}, which we focus on in this work.

While problems \eqref{eqn_problem1} and \eqref{eqn_problem2} may appear distinct, they share a significant connection. The next section delves into this relationship in terms of objective optimality. Specifically, we draw insights from~\cite{paternain2022safe,wagener2021safe} to establish explicit bounds on safety guarantees and optimal values for problems \eqref{eqn_problem1} and \eqref{eqn_problem2}.

%%%%%%%%%%%%%%%%%%%%%%%%%%%%%
%%%%%%%%%%%%%%%%%%%%%%%%%%%%%
%%%%%%%%%%%%%%%%%%%%%%%%%%%%%
%%%%%%%%%%%%%%%%%%%%%%%%%%%%%
%%%%%%%%%%%%%%%%%%%%%%%%%%%%%
%%%%%%%%%%%%%%%%%%%%%%%%%%%%%
%%%%%%%%%%%%%%%%%%%%%%%%%%%%%
%%%%%%%%%%%%%%%%%%%%%%%%%%%%%
%%%%%%%%%%%%%%%%%%%%%%%%%%%%%
%%%%%%%%%%%%%%%%%%%%%%%%%%%%%
%%%%%%%%%%%%%%%%%%%%%%%%%%%%%
%%%%%%%%%%%%%%%%%%%%%%%%%%%%%
\section{Properties of Safe Reinforcement Learning}
\label{sec_Safe_Reinforcement_Learning_Problems_Properties}
In the context of guaranteeing that the agent remains in a safe subset of the state space, a possibility~\cite{geibel2005risk} for choosing $r_c(S_t, A_t)$ in \eqref{eqn_problem2} is $r_c(S_t,A_t) = \mathbbm{1}(S_t \in \mathcal{S}_\text{safe}) / (T+1)$. In this case, the cumulative safety constraint is related to the function 
\begin{equation}\label{eqn_cumulative_safety}
V_c(\theta)= \mathbb{E} \left[ \frac{1}{T+1}  \sum\limits_{t=0}^{T} \mathbbm{1} ( S_t \in \mathcal{S}_\text{safe}) |\pi_\theta \right].
\end{equation}
The advantage of selecting the indicator function to address safety is that it is problem independent. In addition, by selecting an appropriate level of constraint satisfaction one can guarantee safe policies in the sense of Definition~\ref{definition_safety}. To be more precise, replace the left-hand side of the constraint of problem~\eqref{eqn_problem2} with \eqref{eqn_cumulative_safety} and set $\xi$ to be $1-\delta/(T+1)$ 
\begin{align}\label{eqn_problem2_mirror}
&\hat{P}^\star = \max\limits_{\theta \in \Real^d} \, V(\theta) \nonumber \\
& \text{s.t.} \quad \mathbb{E} \left[\frac{1}{T+1} \sum\limits_{t=0}^{T} \mathbbm{1} \left( S_t \in \mathcal{S}_\text{safe} \right) |\pi_\theta \right] \geq 1-\frac{\delta}{T+1}.
\end{align}
The formulation presented here guarantees that any $\theta$ that satisfies the constraint in \eqref{eqn_problem2_mirror} is guaranteed to be safe as in Definition~\ref{definition_safety} with probability $1-\delta$. We formally state this claim in the following proposition.

\begin{proposition}[\cite{paternain2022safe}, Theorem 1]
\label{proposition_U_cP_1-delta}
Denote by $\tilde{\theta}$ a feasible solution to problem \eqref{eqn_problem2_mirror}. Then, $\tilde{\theta}$ is a feasible solution to problem \eqref{eqn_problem1} as well, i.e., the policy induced by $\tilde{\theta}$ guarantees safety in the sense of  Definition~\ref{definition_safety}.
\end{proposition}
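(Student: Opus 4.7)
The plan is to establish the implication by complementing events and invoking a standard Boole/Markov-style inequality. Recall that feasibility of $\tilde\theta$ for \eqref{eqn_problem2_mirror} means
\[
\mathbb{E}\!\left[\frac{1}{T+1}\sum_{t=0}^{T}\mathbbm{1}(S_t\in\mathcal{S}_\text{safe})\,\big|\,\pi_{\tilde\theta}\right]\geq 1-\frac{\delta}{T+1},
\]
which, after multiplying by $T+1$ and subtracting from $T+1$, is equivalent to
\[
\mathbb{E}\!\left[\sum_{t=0}^{T}\mathbbm{1}(S_t\notin\mathcal{S}_\text{safe})\,\big|\,\pi_{\tilde\theta}\right]\leq \delta.
\]
So the cumulative-indicator constraint translates directly into a bound on the \emph{expected} number of unsafe time steps.

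Next I would relate this expected count to the probability that \emph{any} unsafe state is visited. The cleanest way is to apply the union bound:
\[
\mathbb{P}\!\left(\bigcup_{t=0}^{T}\{S_t\notin\mathcal{S}_\text{safe}\}\,\big|\,\pi_{\tilde\theta}\right)\;\leq\;\sum_{t=0}^{T}\mathbb{P}(S_t\notin\mathcal{S}_\text{safe}\,|\,\pi_{\tilde\theta})=\mathbb{E}\!\left[\sum_{t=0}^{T}\mathbbm{1}(S_t\notin\mathcal{S}_\text{safe})\,\big|\,\pi_{\tilde\theta}\right]\leq \delta.
\]
(Equivalently, one can apply Markov's inequality to the nonnegative integer-valued random variable $N=\sum_{t=0}^{T}\mathbbm{1}(S_t\notin\mathcal{S}_\text{safe})$, noting that $\{\cup_{t}\{S_t\notin\mathcal{S}_\text{safe}\}\}=\{N\geq 1\}$ so that $\mathbb{P}(N\geq 1)\leq \mathbb{E}[N]$.)

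Finally, taking complements yields
\[
\mathbb{P}\!\left(\bigcap_{t=0}^{T}\{S_t\in\mathcal{S}_\text{safe}\}\,\big|\,\pi_{\tilde\theta}\right)\;\geq\;1-\delta,
\]
which is exactly feasibility in \eqref{eqn_problem1}, and hence $\tilde\theta$ induces a $(1-\delta)$-safe policy in the sense of Definition~\ref{definition_safety}. There is no real obstacle here; the whole content of the proposition is the observation that a union bound converts an $O(1/(T+1))$ tightening of the \emph{per-step} violation budget into a \emph{trajectory-wise} safety guarantee, and the factor $1/(T+1)$ in the threshold of \eqref{eqn_problem2_mirror} is precisely chosen to make this bookkeeping work.
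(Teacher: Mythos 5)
Your proof is correct: the algebraic translation of the constraint into $\mathbb{E}\left[\sum_{t=0}^{T}\mathbbm{1}(S_t\notin\mathcal{S}_\text{safe})\,\big|\,\pi_{\tilde\theta}\right]\leq\delta$, followed by the union bound (equivalently Markov's inequality on the count of unsafe steps) and De Morgan's law, gives exactly the feasibility claim for \eqref{eqn_problem1}. The paper itself does not reprove this proposition but delegates it to Theorem 1 of the cited reference, and the argument there is the same union-bound bookkeeping you give, so your proposal is essentially the standard proof filled in explicitly.
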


Although Proposition~\ref{proposition_U_cP_1-delta} confirms that any feasible solution to problem~\eqref{eqn_problem2_mirror} is also feasible for problem~\eqref{eqn_problem1}, the converse is not true. This indicates that $\hat{P}^\star$ in \eqref{eqn_problem2_mirror} is smaller than or equal to $P^\star$ in \eqref{eqn_problem1}. We formalize this claim in the next Theorem. We also establish an upper bound for the latter in terms of the former. Before doing so, we recall the definition of the optimal Lagrange multiplier associated with \eqref{eqn_problem2_mirror}. This important quantity will be used to state and prove several of the theoretical developments that follow. Let us start by defining the dual function associated with \eqref{eqn_problem2_mirror}
\begin{align}\label{eqn_dual_function}
 \hat{d}(\lambda) = \max\limits_{\theta \in \Real^d} \, V(\theta) + \lambda (V_c(\theta) - (1-\delta/(T+1))),
\end{align}
where $\lambda \geq 0$. The dual function provides an upper bound on problem~\eqref{eqn_problem2_mirror}~\cite[Chapter 5]{boyd2004convex}. Thus in general, one is interested in finding the $\lambda$ that provides the tightest of the upper bounds%. The $\hat{\lambda}$ where this is achieved is termed the optimal Lagrange multiplier
\begin{align}\label{eqn_dual_solution}
   \hat{\lambda}^\star = \argmin\limits_{\lambda \in \Real_+} \, \hat{d}(\lambda).
\end{align}
Having defined $\hat{\lambda}^\star$, we are in the stage of stating the bounds between the optimal values of \eqref{eqn_problem1} and \eqref{eqn_problem2_mirror}. This is the subject of the following theorem.

\begin{theorem}\label{theorem_P_star_Ptilder_star}
Let $\mathcal{P}_s(\mathcal{A})$ be the stochastic kernels with source in the state space $\mathcal{S}$ and target the action space $\mathcal{A}$. Assume that for any policy $\pi\in\mathcal{P}(s)$ and for all $(s,a)\in\mathcal{S}\times \mathcal{A}$ there exists a parameter $\theta \in \Real^d$ such that $\pi_\theta (a|s) = \pi (a|s)$. Consider optimal values $P^\star$ and $\hat{P}^\star$ in problems \eqref{eqn_problem1} and \eqref{eqn_problem2_mirror}, as well as the safety level $\delta$ and time horizon $T$. Denote by $\hat{\lambda}^\star$ the solution to the dual problem associated with problem~\eqref{eqn_problem2_mirror}, as defined in \eqref{eqn_dual_solution}. Then it holds that 
\begin{align}
\label{upper_lower_bound}
    \hat{P}^\star + \hat{\lambda}^\star \delta \frac{ T}{T+1}
    \geq P^\star\geq \hat{P}^\star.
\end{align}
\end{theorem}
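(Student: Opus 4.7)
\medskip

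The plan is to prove the two inequalities in \eqref{upper_lower_bound} separately. The lower bound $P^\star\geq \hat{P}^\star$ is essentially an immediate corollary of Proposition \ref{proposition_U_cP_1-delta}: that proposition shows that the feasible set of problem \eqref{eqn_problem2_mirror} is contained in the feasible set of problem \eqref{eqn_problem1}. Since the two problems share the same objective $V(\theta)$, maximizing over the larger feasible set can only yield a value at least as large, which gives $P^\star\geq \hat{P}^\star$.

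The upper bound $\hat{P}^\star+\hat{\lambda}^\star \delta T/(T+1)\geq P^\star$ will come from weak (in fact strong) duality for problem \eqref{eqn_problem2_mirror}, combined with a pointwise comparison of the probabilistic and cumulative safety quantities. First I will invoke the richness hypothesis on the policy class $\pi_\theta$ (the assumption that every $\pi\in\mathcal{P}_s(\mathcal{A})$ is realized by some parameter $\theta$), which is exactly the condition identified in \cite{paternain2022safe} that yields \emph{zero duality gap} for constrained MDPs of the form \eqref{eqn_problem2_mirror}. This means $\hat{P}^\star=\hat{d}(\hat{\lambda}^\star)$, so that for every $\theta\in\mathbb{R}^d$,
\begin{equation*}
\hat{P}^\star \;\geq\; V(\theta)+\hat{\lambda}^\star\bigl(V_c(\theta)-(1-\delta/(T+1))\bigr).
\end{equation*}
Evaluating at a maximizer $\theta^\star$ of \eqref{eqn_problem1} (so $V(\theta^\star)=P^\star$) will give
\begin{equation*}
\hat{P}^\star \;\geq\; P^\star + \hat{\lambda}^\star\bigl(V_c(\theta^\star)-(1-\delta/(T+1))\bigr).
\end{equation*}

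The remaining step is the key quantitative link: I will show that on the feasible set of \eqref{eqn_problem1} the cumulative safety $V_c$ is lower bounded by $1-\delta$. The argument is short. On the event $E:=\bigcap_{t=0}^T\{S_t\in\mathcal{S}_\text{safe}\}$ one has $\sum_{t=0}^T\mathbbm{1}(S_t\in\mathcal{S}_\text{safe})=T+1$, so dropping the contribution from $E^c$ gives
\begin{equation*}
V_c(\theta^\star) \;\geq\; \mathbb{E}\!\left[\tfrac{1}{T+1}\sum_{t=0}^T\mathbbm{1}(S_t\in\mathcal{S}_\text{safe})\,\mathbbm{1}_E\right] \;=\; \mathbb{P}(E\mid \pi_{\theta^\star}) \;\geq\; 1-\delta.
\end{equation*}
Plugging this into the displayed inequality and simplifying $1-\delta-(1-\delta/(T+1))=-\delta T/(T+1)$ rearranges to $\hat{P}^\star+\hat{\lambda}^\star\delta T/(T+1)\geq P^\star$, completing the proof.

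The main obstacle is the appeal to strong duality: without zero duality gap, $\hat{d}(\hat{\lambda}^\star)$ would only upper bound $\hat{P}^\star$ rather than equal it, and the chain of inequalities would run the wrong way. This is precisely why the hypothesis on the parametrization (ensuring that any stochastic kernel is achievable by some $\pi_\theta$) is imposed; it lets us invoke the strong duality result of \cite{paternain2022safe} directly. The remaining pieces, namely the event-based bound $V_c\geq \mathbb{P}(E)$ and the algebraic rearrangement, are routine.
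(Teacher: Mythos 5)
Your proof is correct, and it reaches the upper bound by a somewhat more direct route than the paper. The lower bound $P^\star\geq\hat P^\star$ is argued identically (feasible-set inclusion via Proposition \ref{proposition_U_cP_1-delta}). For the upper bound, the paper introduces the intermediate perturbed problem \eqref{eqn_problem_mirrormirror} with constraint level $1-\delta$, bounds its value via a sensitivity lemma (Lemma \ref{lemma_bound_zero_duality_gap}, itself built on the zero-duality-gap result of \cite{paternain2022safe} applied at both constraint levels), and then uses the inclusion $\mathcal{F}\subseteq\bar{\mathcal{F}}$ (Lemma \ref{proposition_figure}) to relate $\bar P^\star$ to $P^\star$. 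You bypass the perturbed problem entirely: you invoke zero duality gap only for \eqref{eqn_problem2_mirror} itself, write $\hat P^\star=\hat d(\hat\lambda^\star)\geq V(\theta)+\hat\lambda^\star\bigl(V_c(\theta)-(1-\delta/(T+1))\bigr)$ for all $\theta$, evaluate at a maximizer of \eqref{eqn_problem1}, and close the gap with the pointwise bound $V_c\geq\mathbb{P}\bigl(\cap_{t}\{S_t\in\mathcal{S}_{\text{safe}}\}\bigr)\geq 1-\delta$ --- which is exactly the content of the second inclusion in the paper's Lemma \ref{proposition_figure} (cf. \eqref{eqn_lemma1_aux2}) --- together with $\hat\lambda^\star\geq 0$. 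What your route buys is economy: one appeal to strong duality instead of two, no auxiliary problem, and the role of $\hat\lambda^\star$ as a sensitivity coefficient is visible in a single line; what the paper's route buys is a reusable perturbation inequality ($\tilde P^\star(\xi_1)\leq\tilde P^\star(\xi_0)+\tilde\lambda^\star(\xi_0)(\xi_0-\xi_1)$) that it exploits again in Corollary \ref{corollary_dual_bound}. Two minor points to tighten: if the maximum in \eqref{eqn_problem1} is not attained, run your chain along a maximizing feasible sequence $\theta_k$ with $V(\theta_k)\to P^\star$ and pass to the limit; and state explicitly that $\hat\lambda^\star\geq 0$ (by definition of the dual domain) when you replace $V_c(\theta^\star)-(1-\delta/(T+1))$ by $-\delta T/(T+1)$.
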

\begin{proof}
We start by proving the rightmost inequality. Denote by $\tilde{\theta}^\dagger$ the optimal solution to problem~\eqref{eqn_problem2_mirror}. By virtue of Proposition~\ref{proposition_U_cP_1-delta} the policy $\pi_{\tilde{\theta}^\dagger}$ is $(1-\delta)$-safe in the sense of Definition~\ref{definition_safety}. Consequently, it is a feasible solution to problem~\eqref{eqn_problem1}. It follows, by definition of the optimal solution to problem \eqref{eqn_problem1}, that  $P^\star \geq V(\tilde{\theta}^\dagger) = \hat{P}^\star$. 

Having established the rightmost inequality of the claim, we set our focus on proving the leftmost inequality. To do so, consider the following perturbation to problem \eqref{eqn_problem2_mirror}
\begin{align}\label{eqn_problem_mirrormirror}
&\bar{P}^\star = \max\limits_{\theta \in \Real^d} \, V(\theta) \nonumber \\
& \text{s.t.} \quad \mathbb{E} \left[\frac{1}{T+1} \sum\limits_{t=0}^{T} \mathbbm{1} \left( S_t \in \mathcal{S}_\text{safe} \right) |\pi_\theta \right] \geq 1-\delta.
\end{align}
Note that problem~\eqref{eqn_problem_mirrormirror} is the same problem as \eqref{eqn_problem2_mirror} with a looser constraint. Applying Lemma~\ref{lemma_bound_zero_duality_gap} in Appendix~\ref{appendix_proposition_figure} to \eqref{eqn_problem2_mirror} and \eqref{eqn_problem_mirrormirror} yields the following relationship between their optimal values
\begin{align}\label{eqn_concavity}
    \bar{P}^\star \leq \hat{P}^\star + \hat{\lambda}^\star \delta\frac{T}{T+1}.
\end{align}
Analogous to the proof of the rightmost inequality in \eqref{upper_lower_bound}, employing Lemma~\ref{proposition_figure} in Appendix~\ref{appendix_proposition_figure} yields $\bar{P}^\star \geq P^\star$. Combining this fact with \eqref{eqn_concavity} completes the proof of Theorem~\ref{theorem_P_star_Ptilder_star}.
\end{proof}

It is important to highlight that Theorem~\ref{theorem_P_star_Ptilder_star} relies on the assumption of a universal parametrization. This fact is used in Lemma \ref{lemma_bound_zero_duality_gap} to establish the inequality \eqref{eqn_concavity}. A similar result can be stated if the policy belongs to a class of function approximators known as $\epsilon$-universal, e.g., radial basis functions networks~\cite{park1991universal}, reproducing kernel Hilbert spaces~\cite{sriperumbudur2010relation}, and deep neural networks~\cite{hornik1989multilayer}. In such cases, Theorem~\ref{theorem_P_star_Ptilder_star} remains valid, albeit with the introduction of an error to the bound in the leftmost inequality of \eqref{upper_lower_bound}. This error can be arbitrarily small by increasing the dimension of the parameterization. We state the result in this form for simplicity in the exposition.

Having clarified the assumption, we next focus on on the claim of Theorem~\ref{theorem_P_star_Ptilder_star}. The result provides lower and upper bounds on problem \eqref{eqn_problem1} that depends on the the optimal value of problem \eqref{eqn_problem2_mirror}, the level of safety $\delta$ and its optimal Lagrange multiplier $\hat{\lambda}^\star$. Note that for large horizons the optimality gap between the two problems can be approximated by $\hat{\lambda}^\star \delta$. This suggests that increasing the safety requirement, i.e., $\delta$ approaching zero would make the two problems equivalent. In fact, when $\delta = 0$ the two problems are equivalent. A caveat is that the Lagrange multiplier in turn depends on $\delta$. An interpretation of Lagrange multipliers is that they provide a measurement of how hard it is to satisfy the constraint in \eqref{eqn_problem2_mirror}. Thus, for systems, where satisfying the constraint is easier (smaller Lagrange multiplier), the bound between \eqref{eqn_problem1} and \eqref{eqn_problem2_mirror} is tighter. This interplay between the hardness of the problem and the bound is demonstrated numerically in Section \ref{Numerical_Experiments}.

% \red{
% Nevertheless, it is clear that the upper bound of $P^\star$ in \eqref{upper_lower_bound} is problem dependent. More specifically, the upper bound depends on the time horizon $T$, safety level $\delta$, and \red{$\hat{\lambda}^\star$--the solution to the dual problem associated with problem~\eqref{eqn_problem2_mirror}.}}
% \blue{I don't understand this comment}
% \red{Consequently, an increasing tight requirement on the right hand side of \eqref{eqn_problem2_mirror} is required to maintain the same safety guarantees as that in \eqref{eqn_problem1}. As intuitively shown in Fig.~\ref{fig_three_constraints}, this will result in the shrinkage of the feasible set as compared to problem \eqref{eqn_problem1} so that one could obtain only sub-optimal solutions to \eqref{eqn_problem1}.}

Notwithstanding the important observations made, solving the non-convex constrained optimization problem \eqref{eqn_problem1} remains challenging. One may instead attempt to solve its dual relaxation. In Corollary \ref{corollary_dual_bound} we characterize the duality gap between the optimal value of the dual problem and the primal problem. To do so, we define the dual function associated with \eqref{eqn_problem1} for any $\lambda > 0$, similar to \eqref{eqn_dual_function},
\begin{align}\label{eqn_dual_funtion2}
     d(\lambda) = \max\limits_{\theta \in \Real^d} \, V(\theta) \!+\! \lambda \!\left(\!\mathbb{P} \left(\bigcap\limits_{t=0}^{T} \{ S_t \in \mathcal{S}_\text{safe}\} |\pi_\theta \! \right) \! - \! (1-\delta) \! \right).
\end{align}
% Subsequently, akin to \eqref{eqn_dual_solution}, the optimal Lagrange multiplier is given by 
% \begin{align}\label{eqn_dual_solution2}
%     \lambda^\star = \argmin\limits_{\lambda \in \Real_+} \, d(\lambda).
% \end{align}
Subsequently, the dual optimum is defined by
\begin{align}\label{eqn_dual_optimum}
     {D}^\star = \min\limits_{\lambda \in \Real_+} d(\lambda).
\end{align}
We are now in conditions to state and prove a bound regarding the duality gap for problems \eqref{eqn_problem1} and \eqref{eqn_dual_optimum}.
\begin{corollary}\label{corollary_dual_bound}
Let hypotheses of Theorem 1 hold. Let $D^\star$ be the dual optimal value defined in \eqref{eqn_dual_optimum}. It holds that
\begin{equation}\label{corrolary_primal_dual}
   P^\star +\hat{\lambda}^\star\delta \frac{T}{T+1} \geq  {D}^\star \geq P^\star.
\end{equation}
\end{corollary}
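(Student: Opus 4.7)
\medskip
\noindent The right inequality $D^\star\geq P^\star$ is standard weak duality for \eqref{eqn_problem1} and \eqref{eqn_dual_optimum}: for every $\theta$ that is primal feasible and every $\lambda\geq 0$ one has $V(\theta)+\lambda(\mathbb{P}(\cap_{t=0}^T\{S_t\in\mathcal{S}_\text{safe}\}|\pi_\theta)-(1-\delta))\geq V(\theta)$, so $d(\lambda)\geq V(\theta)$; taking a supremum over feasible $\theta$ and an infimum over $\lambda\geq 0$ delivers the claim.

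For the left inequality, the plan is to sandwich $D^\star$ between the dual of \eqref{eqn_problem1} and the dual of a cumulative-constrained problem that is already controlled in the proof of Theorem~\ref{theorem_P_star_Ptilder_star}. The essential elementary fact is the pathwise inequality
$(T+1)\,\mathbbm{1}(\cap_{t=0}^T\{S_t\in\mathcal{S}_\text{safe}\})\leq \sum_{t=0}^T\mathbbm{1}(S_t\in\mathcal{S}_\text{safe})$,
which holds because both sides equal $T+1$ when every state is safe, while the left side is zero otherwise. Taking expectations yields the pointwise comparison $\mathbb{P}(\cap_{t=0}^T\{S_t\in\mathcal{S}_\text{safe}\}|\pi_\theta)\leq V_c(\theta)$ valid for every $\theta\in\Real^d$.

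Since $\lambda\geq 0$, substituting this bound into \eqref{eqn_dual_funtion2} gives $d(\lambda)\leq \bar d(\lambda):=\max_{\theta\in\Real^d}V(\theta)+\lambda(V_c(\theta)-(1-\delta))$, which is exactly the dual function of the perturbation problem $\bar P^\star$ introduced in \eqref{eqn_problem_mirrormirror}. Minimizing over $\lambda\geq 0$ yields $D^\star\leq \bar D^\star:=\min_{\lambda\geq 0}\bar d(\lambda)$. Under the universal-parametrization hypothesis, Lemma~\ref{lemma_bound_zero_duality_gap}, which already underlies Theorem~\ref{theorem_P_star_Ptilder_star}, supplies zero duality gap for \eqref{eqn_problem_mirrormirror}, hence $\bar D^\star=\bar P^\star$. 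Chaining this equality with inequality \eqref{eqn_concavity} and with $\hat P^\star\leq P^\star$ (the rightmost inequality of Theorem~\ref{theorem_P_star_Ptilder_star}) closes the argument via
\begin{equation*}
D^\star\leq \bar D^\star=\bar P^\star\leq \hat P^\star+\hat\lambda^\star\delta\frac{T}{T+1}\leq P^\star+\hat\lambda^\star\delta\frac{T}{T+1}.
\end{equation*}

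The main obstacle will be identifying $\bar P^\star$, the cumulative problem whose safety threshold $1-\delta$ coincides with the probabilistic one, as the correct intermediate object; once this is in place the zero-duality-gap machinery already assembled for Theorem~\ref{theorem_P_star_Ptilder_star} transfers verbatim, and the remaining ingredients collapse to weak duality and the elementary indicator inequality.
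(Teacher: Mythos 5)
Your proof is correct and takes essentially the same route as the paper's: both pass through the cumulative problem \eqref{eqn_problem_mirrormirror} by combining weak duality, the indicator inequality $\mathbb{P}\left(\cap_{t=0}^{T}\{S_t\in\mathcal{S}_\text{safe}\}|\pi_\theta\right)\le V_c(\theta)$, zero duality gap for \eqref{eqn_problem_mirrormirror}, the perturbation bound \eqref{eqn_concavity}, and $\hat{P}^\star\le P^\star$, with your pointwise dominance $d(\lambda)\le\bar{d}(\lambda)$ being only a slightly tidier packaging of the paper's evaluation at $\bar{\lambda}^\star$. One minor attribution point: zero duality gap for \eqref{eqn_problem_mirrormirror} is supplied by the strong-duality result of the cited prior work (invoked inside the proof of Lemma~\ref{lemma_bound_zero_duality_gap}), not by that lemma's statement itself, but this does not affect your argument.
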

\begin{proof}
The rightmost inequality follows directly from the property of \emph{weak duality}~{\cite[Equation 5.23]{boyd2004convex}}. We then focus on the proof of the leftmost inequality. For any $\lambda > 0$, combining \eqref{eqn_dual_optimum} with \eqref{eqn_dual_funtion2} yields
\begin{align}\label{eqn_dual_optimum2}
    {D}^\star \leq V(\theta^\star) \!+\! \lambda \!\left(\!\mathbb{P} \left(\bigcap\limits_{t=0}^{T} \{ S_t \in \mathcal{S}_\text{safe}\} |\pi_{\theta^\star} \! \right) \! - \! (1-\delta) \! \right),
\end{align}
where $\theta^\star$ denotes the maximizer of the Lagrangian \eqref{eqn_dual_funtion2}. Combining with \eqref{eqn_lemma1_aux2} the previous expression reduces to
\begin{align}
    {D}^\star \! \leq \! V(\theta^\star) \!+\! \lambda \!\!\left(\!\! \frac{1}{T+1} \mathbb{E} \! \left[\sum\limits_{t=0}^{T} \! \mathbbm{1} \left( S_t \in \mathcal{S}_\text{safe} \right) |\pi_{\theta^\star} \!\! \right]  \! - \! (1-\delta) \! \!\right).
\end{align}
In particular, the previous inequality holds for $\bar{\lambda}^\star$, which is the solution to the dual problem associated with problem \eqref{eqn_problem_mirrormirror}. Then it follows that
\begin{align}\label{eqn_dual_inequality}
    {D}^\star \! \leq \! V(\theta^\star) + \bar{\lambda}^\star \!\!\left(\!\! \frac{1}{T+1} \mathbb{E} \! \left[\sum\limits_{t=0}^{T} \! \mathbbm{1} \left( S_t \in \mathcal{S}_\text{safe} \right) |\pi_{\theta^\star} \!\! \right]  \! - \! (1-\delta) \! \!\right).
\end{align}
It is crucial to point out that $\theta^\star$ may not be the primal maximizer of the Lagrangian of problem \eqref{eqn_problem_mirrormirror}, which is denoted by $\bar{\theta}^\star$. Indeed, $\bar{\theta}^\star$ maximizes the right-hand side of \eqref{eqn_dual_inequality} for a fixed $\bar{\lambda}^\star$ thus yielding
\begin{align}
     {D}^\star \! &\leq \! V(\bar{\theta}^\star) \!+\! \bar{\lambda}^\star \!\!\left(\!\! \frac{1}{T+1} \mathbb{E} \! \left[\sum\limits_{t=0}^{T} \! \mathbbm{1} \left( S_t \in \mathcal{S}_\text{safe} \right) |\pi_{\bar{\theta}^\star} \!\! \right]  \! - \! (1-\delta) \! \!\right) \nonumber \\
     &=\bar{P}^\star,
\end{align}
where the last equation follows from the zero duality gap~\cite{paternain2019constrained} of problem \eqref{eqn_problem_mirrormirror}. Utilizing  \eqref{eqn_concavity} and the rightmost inequality of \eqref{upper_lower_bound} ${D}^\star$ can be further upper bounded as
\begin{align}
    {D}^\star \leq P^\star + \hat{\lambda}^\star \frac{\delta T}{T+1}.
\end{align}
This completes the proof of Corollary~\ref{corollary_dual_bound}.
\end{proof}
The previous result establishes the duality gap between \eqref{eqn_problem1} and \eqref{eqn_dual_optimum}. This result brings us a step closer to solve the probabilistic-constrained RL problem. Although, the minimization of the dual function is simple, since the dual function is always convex~{\cite[Chapter 5.2]{boyd2004convex}}, its computation in \eqref{eqn_dual_funtion2} still requires solving \eqref{eqn_agmt_obj}. However, as discussed in Section~\ref{sec_problem_formulation}, the absence of an expression for $\nabla_\theta \mathbb{P} \left(\cap_{t=0}^{T} \{ S_t \in \mathcal{S}_\text{safe}\} |\pi_\theta \right)$ poses a challenge in applying gradient-based methods to do so. Providing this expression as well as corresponding stochastic estimates is the subject of the next section.
%!TEX root = root.tex
\section{Safe Policy Gradients}
\label{Gradient of the Probabilistic Constraint}
To solve probabilistic-constrained RL formulations, i.e., problems of the form \eqref{eqn_problem1}, we are required to compute the gradient of the probabilistic constraints with respect to the policy parameter $\theta$. We proceed by defining an important quantity in what follows next. Let $G_t$ be the product of indicator functions $\mathbbm{1}\left(S_u\in\mathcal{S}_{\text{safe}}\right)$ from $u=t$ to $u=T$
\begin{equation}\label{def_G_cumulative_product}
G_t = \prod_{u=t}^T\mathbbm{1}\left(S_u\in\mathcal{S}_{\text{safe}}\right).    
\end{equation}
Having defined this quantity we are now in conditions of providing an expression for the gradient of the probabilistic constraint. This is the subject of the following Theorem.
\begin{theorem}
\label{theorem_safe_policy_gradient}
Let $S_0 \in \mathcal{S}_\text{safe}$, the gradient of the probabilistic safety for a given policy $\pi_\theta$ yields 
\begin{align}\label{eqn_theorem}
   &\nabla_\theta \mathbb{P} \left(\bigcap\limits_{t=0}^{T} \{ S_t \in \mathcal{S}_\text{safe}\} |\pi_\theta, S_0 \right) \nonumber \\
   &=\mathbb{E}\left[ \sum\limits_{t=0}^{T-1}G_1\nabla_{\theta}\log\pi_\theta(A_t\mid S_t)\mid \pi_\theta, S_0\right].
\end{align}
\end{theorem}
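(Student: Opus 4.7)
My plan is to reduce the gradient of the probabilistic constraint to a standard log-derivative (REINFORCE-style) identity over the whole trajectory, and then to show that one of the resulting terms vanishes, which is what allows the sum on the right-hand side to run only up to $T-1$ instead of $T$.

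First, since the assumption $S_0\in\ccalS_{\text{safe}}$ gives $\mathbbm{1}(S_0\in\ccalS_{\text{safe}})=1$, I would rewrite the probability as an expectation
\begin{equation*}
\mathbb{P}\!\left(\bigcap_{t=0}^{T}\{S_t\in\ccalS_{\text{safe}}\}\,\big|\,\pi_\theta, S_0\right)=\mathbb{E}\!\left[\prod_{u=0}^{T}\mathbbm{1}(S_u\in\ccalS_{\text{safe}})\,\big|\,\pi_\theta, S_0\right]=\mathbb{E}[G_1\mid\pi_\theta, S_0],
\end{equation*}
using the definition \eqref{def_G_cumulative_product}. Next, I would write this expectation as an integral against the trajectory distribution $p_\theta(\tau\mid S_0)=\prod_{t=0}^{T}\pi_\theta(A_t\mid S_t)\prod_{t=0}^{T-1}\mathbb{P}_{S_t\to S_{t+1}}^{A_t}(dS_{t+1})$ over $\tau=(A_0,S_1,A_1,\dots,S_T,A_T)$, and apply the classical log-derivative trick $\nabla_\theta p_\theta = p_\theta\,\nabla_\theta\log p_\theta$ to push the gradient inside. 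Since the transition kernel does not depend on $\theta$, this yields
\begin{equation*}
\nabla_\theta\,\mathbb{E}[G_1\mid\pi_\theta, S_0]=\mathbb{E}\!\left[G_1\sum_{t=0}^{T}\nabla_\theta\log\pi_\theta(A_t\mid S_t)\,\bigg|\,\pi_\theta, S_0\right].
\end{equation*}

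The remaining and most delicate step is to dispose of the $t=T$ term in the inner sum, and this is what I expect to be the main (though still mild) obstacle, because it is the only place where the particular index range in the theorem statement has to be justified carefully. The key observation is that $G_1$ is measurable with respect to the $\sigma$-algebra generated by $S_1,\ldots,S_T$, while $A_T$ enters only through $\pi_\theta(A_T\mid S_T)$ and has no effect on any of the safety indicators. Conditioning on the history up to $S_T$ and using the standard score-function identity
\begin{equation*}
\mathbb{E}\!\left[\nabla_\theta\log\pi_\theta(A_T\mid S_T)\mid S_T\right]=\nabla_\theta\!\int\pi_\theta(a\mid S_T)\,da=\nabla_\theta(1)=0,
\end{equation*}
the tower property then gives $\mathbb{E}[G_1\nabla_\theta\log\pi_\theta(A_T\mid S_T)]=0$. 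Substituting this back into the previous display leaves exactly the claimed expression \eqref{eqn_theorem}, completing the proof. A minor technical point that I would check along the way is that interchanging $\nabla_\theta$ and the integral over $\tau$ is licit, which follows from dominated convergence since $G_1$ is bounded by $1$ and the policy is assumed smooth in $\theta$; no further assumptions beyond those already implicit in the policy-gradient framework are needed.
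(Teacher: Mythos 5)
Your proof is correct, but it follows a genuinely different route from the paper's. You apply the likelihood-ratio (score-function) trick once, at the level of the entire trajectory density $p_\theta(\tau\mid S_0)$, obtain $\mathbb{E}\bigl[G_1\sum_{t=0}^{T}\nabla_\theta\log\pi_\theta(A_t\mid S_t)\mid S_0\bigr]$, and then remove the $t=T$ term by noting that $G_1$ is $\sigma(S_1,\dots,S_T)$-measurable and that the score has zero conditional mean, $\mathbb{E}[\nabla_\theta\log\pi_\theta(A_T\mid S_T)\mid S_T]=0$; this correctly justifies the truncation of the sum at $T-1$. The paper instead proceeds recursively: after the same reduction to $\nabla_\theta\mathbb{E}[G_1\mid S_0]$, it invokes a one-step recursion for $\nabla_\theta\mathbb{E}[G_t\mid S_{t-1}]$ (Lemma~\ref{lemma_safe_policy_gradient_G1}), unrolls it (Lemma~\ref{lemma_nabla_E_G1_S0_GT_ST-1}) to get the sum up to $T-2$ plus a boundary term involving $\nabla_\theta\mathbb{E}[G_T\mid S_{T-1}]$, and then shows by an explicit conditional-density computation and the log-trick that this boundary term is exactly the missing $t=T-1$ summand. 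Your argument is shorter and more transparent, at the price of a single global interchange of $\nabla_\theta$ with the integral over trajectory space (which you address via boundedness of $G_1$ and smoothness of the policy, the same level of rigor the paper implicitly assumes step by step); the paper's stepwise conditioning applies the log-trick only to one-step conditionals at a time and produces intermediate conditional-expectation identities whose structure ($G_t^c$ and $\mathbb{E}[G_{t+1}\mid S_t,A_t]$) feeds directly into the actor-critic form in the subsequent corollary. One small point worth making explicit in your write-up: whether or not $A_T$ is actually sampled in the episode, your treatment is consistent, since if it is not, the sum stops at $T-1$ automatically, and if it is, your zero-mean score argument disposes of it.
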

\begin{proof}
See Appendix~\ref{appendix_theorem_safe_policy_gradient}.
\end{proof}
This result has been stated in our prior work \cite{chen2023policy}. We provide the proof here for completeness. 
It is worth pointing out that the proof of this result is similar to Policy Gradient Theorems in the literature ~\cite{williams1992simple,sutton1999policy}. To draw parallelisms between them, let us define the cumulative rewards until horizon $T$ starting from $t$%, i.e., the so-called ``reward to-go''~\cite{vitay2020deep} from the transition $(S_t, A_t)$
\begin{align}\label{eqn_return}
  R_t =  \sum\limits_{u=t}^{T} r(S_u, A_u).
\end{align}
Then, the gradient of the value function \eqref{eqn_val_func_finite} can be computed using Policy Gradient Theorem \cite{sutton1999policy} as 
\begin{align}\label{eqn_classical_PG}
   \nabla_\theta V(\theta) =\mathbb{E}\left[\sum\limits_{t=0}^{T-1} R_t \nabla_{\theta}\log\pi_\theta(A_t\mid S_t)\mid \pi_\theta, S_0\right]. 
\end{align}
Despite the similarities between \eqref{eqn_theorem} and \eqref{eqn_classical_PG}  an important difference is evident. In \eqref{eqn_classical_PG} the gradient of the logarithm at time $t$ is multiplied by the cumulative reward from time $t$ until the end of the episode. This means that action $A_t$ is only concerned with the rewards in the episode's future. While in \eqref{eqn_theorem}, this is not the case, and action $A_t$ is concerned with the whole episode. Intuitively, if the agent is unsafe for the first few steps the actions in the future are irrelevant since the episode is considered unsafe as a whole. This will pose additional challenges in estimating the gradient.

Let us start, however, by describing a challenge in the computation of \eqref{eqn_theorem} which is also shared by the computation of $\nabla_\theta V(\theta)$. This is the need to compute expectations with respect to the trajectories of the system. In order to avoid sampling a multitude of trajectories, stochastic approximation methods~\cite{robbins1951stochastic} are often considered. Namely, one can use only one sample trajectory to approximate~\eqref{eqn_classical_PG} 
\begin{align}\label{eqn_classical_PG_estimator}
   \hat{\nabla}_\theta V(\theta) =\sum\limits_{t=0}^{T-1} R_t \nabla_{\theta}\log\pi_\theta(A_t\mid S_t).
\end{align}
% where $R_t$ is the cumulative return from time $t$ until the end of the episode as defined in \eqref{eqn_return}. 
Likewise, applying the stochastic approximation to \eqref{eqn_theorem} yields
\begin{align}\label{eqn_theorem_estimator}
   \hat{\nabla}_\theta \mathbb{P} \! \left(\! \bigcap\limits_{t=0}^{T} \{ S_t \in \mathcal{S}_\text{safe}\} \! \mid \! \pi_\theta, S_0 \! \right) = \sum\limits_{t=0}^{T-1}G_1\nabla_{\theta}\log\pi_\theta(A_t \! \mid \! S_t).
\end{align}
% Notice that by computing the expressions in \eqref{eqn_classical_PG_estimator} and \eqref{eqn_theorem_estimator} over the randomly drawn horizon the estimates derived are unbiased. 
% \santiago{Here we are considering a finite horizon right? Why is the horizon random?} 
%
We term \eqref{eqn_theorem_estimator} ``SPG-REINFORCE''. From \eqref{eqn_theorem_estimator} the additional challenges in estimating the gradient of the probabilistic safety constraint become explicit. Unlike the classic policy gradient \eqref{eqn_classical_PG_estimator} (where the policy parameter is updated at each iteration), under this framework the parameter in \eqref{eqn_theorem_estimator} is only updated when every step of the trajectory is safe, i.e., when $G_1=\prod_{t=1}^T\mathbbm{1}\left(S_t\in\mathcal{S}_{\text{safe}}\right)=1$. To address this issue, one can use a different expression for the gradient which we provide in the next corollary. To do so, we rely on a crucial quantity that captures the product of indicator functions $\mathbbm{1}\left(S_u\in\mathcal{S}_{\text{safe}}\right)$ from $u=0$ to $u=t$
\begin{align}\label{eqn_def_g_t_c}
G_t^c = \prod_{u=0}^t \mathbbm{1}\left(S_u\in\mathcal{S}_{\text{safe}}\right).   
\end{align}
By defining $G_t^c$ we can now provide an alternative expression for the gradient in the following corollary.
\begin{corollary}
Under the hypothesis of Theorem~\ref{theorem_safe_policy_gradient} it holds that
\begin{align}\label{eqn_corollary2}
   &\nabla_\theta \mathbb{P} \left(\bigcap\limits_{t=0}^{T} \{ S_t \in \mathcal{S}_\text{safe}\} |\pi_\theta, S_0 \right) \\
   &=\mathbb{E} \left[ \sum\limits_{t=0}^{T-1} G_t^c \mathbb{E}\left[G_{t+1} \! \mid \! S_t,A_t\right] \nabla_{\theta}\log\pi_\theta(A_t\mid S_t) \! \mid  \! \pi_\theta, S_0 \right].  \nonumber
\end{align}
\end{corollary}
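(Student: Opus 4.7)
The plan is to derive the corollary from Theorem~\ref{theorem_safe_policy_gradient} by a term-by-term rewriting of the summands inside the expectation, followed by an application of the tower property together with the Markov property. Concretely, since $S_0\in\mathcal{S}_\text{safe}$ by hypothesis, $\mathbbm{1}(S_0\in\mathcal{S}_\text{safe})=1$, and so for every $t\in\{0,\ldots,T-1\}$ the identity
\begin{equation*}
    G_1 = \prod_{u=1}^{T}\mathbbm{1}(S_u\in\mathcal{S}_\text{safe}) = \prod_{u=0}^{t}\mathbbm{1}(S_u\in\mathcal{S}_\text{safe})\cdot\prod_{u=t+1}^{T}\mathbbm{1}(S_u\in\mathcal{S}_\text{safe}) = G_t^c\, G_{t+1}
\end{equation*}
holds almost surely. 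First, I would substitute this factorization into the expression provided by Theorem~\ref{theorem_safe_policy_gradient} and use linearity of expectation to rewrite the gradient as $\sum_{t=0}^{T-1}\mathbb{E}[G_t^c\,G_{t+1}\,\nabla_\theta\log\pi_\theta(A_t\mid S_t)\mid \pi_\theta,S_0]$.

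Next, for each fixed $t$, I would condition on the history $\mathcal{F}_t:=\sigma(S_0,A_0,\ldots,S_t,A_t)$. Since both $G_t^c$ and $\nabla_\theta\log\pi_\theta(A_t\mid S_t)$ are $\mathcal{F}_t$-measurable (they depend only on $S_0,\ldots,S_t$ and on $(S_t,A_t)$, respectively), the tower property yields
\begin{equation*}
    \mathbb{E}\bigl[G_t^c\,G_{t+1}\,\nabla_\theta\log\pi_\theta(A_t\mid S_t)\bigm| \pi_\theta,S_0\bigr]
    = \mathbb{E}\bigl[G_t^c\,\nabla_\theta\log\pi_\theta(A_t\mid S_t)\,\mathbb{E}[G_{t+1}\mid \mathcal{F}_t]\bigm|\pi_\theta,S_0\bigr].
\end{equation*}
The Markov property then collapses $\mathbb{E}[G_{t+1}\mid \mathcal{F}_t]$ to $\mathbb{E}[G_{t+1}\mid S_t,A_t]$, because $G_{t+1}$ is a function of the future states $S_{t+1},\ldots,S_T$, whose conditional law depends only on $(S_t,A_t)$ once the policy is fixed. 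Re-assembling the sum and pulling the expectation back outside produces exactly the right-hand side of \eqref{eqn_corollary2}.

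The routine step is the decomposition of $G_1$; the only subtlety is verifying the measurability and Markovian claims that allow the inner conditional expectation to reduce to $\mathbb{E}[G_{t+1}\mid S_t,A_t]$. I do not anticipate a genuine obstacle, since all ingredients are immediate consequences of the MDP structure and the assumption $S_0\in\mathcal{S}_\text{safe}$, which is inherited directly from Theorem~\ref{theorem_safe_policy_gradient}.
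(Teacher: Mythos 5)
Your proposal is correct and follows essentially the same route as the paper: factor $G_1 = G_t^c\,G_{t+1}$ (using $S_0\in\mathcal{S}_\text{safe}$) and condition each summand on the information at time $t$ so that $G_{t+1}$ is replaced by $\mathbb{E}[G_{t+1}\mid S_t,A_t]$. If anything, your version is slightly more careful than the paper's, which conditions directly on $(S_t,A_t)$ and loosely asserts that $S_0,\ldots,S_t$ are ``measurable with respect to $S_t,A_t$,'' whereas you condition on the full history $\mathcal{F}_t$ and then invoke the Markov property to collapse the conditional expectation.
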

\begin{proof}
Conditioning the right hand side of \eqref{eqn_theorem} with respect to $S_t$ and $A_t$ yields 
\begin{align}\label{eqn_corro_ac}
   &\nabla_\theta \mathbb{P} \left(\bigcap\limits_{t=0}^{T} \{ S_t \in \mathcal{S}_\text{safe}\} |\pi_\theta, S_0 \right)  \\ &=\mathbb{E} \left[\mathbb{E}\left[ \sum\limits_{t=0}^{T-1}G_1\nabla_{\theta}\log\pi_\theta(A_t\mid S_t)\mid S_{t},A_t\right]\mid \pi_\theta, S_0 \right]. \nonumber
\end{align}
Using the fact that $G_1 = G_{t+1} \, G_t^c$ and that $S_0,\ldots,S_t$ are measurable with respect to $S_t,A_t$ yields \eqref{eqn_corollary2}.
\end{proof}
Note that  $\mathbb{E}\left[G_{t+1}\mid S_t,A_t\right]$ is the probability of remaining safe from time $t+1$ until the end of the episode. This information about the future is akin to the Q function in RL problems~\cite[Chapter 6]{sutton2018reinforcement}. Hence, assuming that an estimate of the probability is available, one can run actor-critic~\cite{konda1999actor} type algorithms e.g., DDPG~\cite{lillicrap2015continuous}, TRPO~\cite{schulman2015trust}, PPO~\cite{schulman2017proximal} in this setting as well. In particular, the estimate takes the form 
\begin{align}\label{eqn_corro_ac_estimator}
   &\hat{\nabla}_\theta \mathbb{P} \left(\bigcap\limits_{t=0}^{T} \{ S_t \in \mathcal{S}_\text{safe}\} \mid \pi_\theta, S_0 \right) \\
   &= \sum\limits_{t=0}^{T-1} G_t^c \, \mathbb{E}\left[G_{t+1} | S_t, A_t\right] \nabla_{\theta}\log\pi_\theta(A_t\mid S_t).  \nonumber
\end{align}
We term \eqref{eqn_corro_ac_estimator} ``SPG-Actor-Critic''. An advantage of ``SPG-Actor-Critic'' is that if the episode is safe until some time $0<t<T$, we have $G_u^c =1$, with $u\leq t$. Hence the estimate of the gradient is not zero and the policy will be updated. This is in contrast to the expression in \eqref{eqn_theorem_estimator}. This fact can also be interpreted as an improved estimate with reduced variance as in actor-critic methods. The next theorem is a step forward in confirming this intuition.
%
%
%
% \begin{theorem}\label{theorem_lower_variance}
% Let $X=\sum_{t=0}^{T-1} G_1$ and $Y=\sum_{t=0}^{T-1} G_t^c \, \mathbb{E}\left[G_{t+1} | S_t, A_t\right]$, where $G_1$ and $G_{t+1}$ are defined by \eqref{def_G_cumulative_product}, and $ G_t^c$ follows the definition in \eqref{eqn_def_g_t_c}. It holds that $\var{X} > \var{Y}$.
% \end{theorem}
\begin{theorem}\label{theorem_lower_variance}
Consider $G_1, \, G_{t+1}$ as defined in \eqref{def_G_cumulative_product} and $ G_t^c$ in \eqref{eqn_def_g_t_c}. Let $X=\sum_{t=0}^{T-1} G_1$ and $Y=\sum_{t=0}^{T-1} G_t^c \, \mathbb{E}\left[G_{t+1} | S_t, A_t\right]$. It holds that $\var{X} > \var{Y}$.
\end{theorem}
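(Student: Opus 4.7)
The plan is to exploit a hidden martingale structure in $Y$. Under the hypothesis $S_0 \in \ccalS_{\text{safe}}$ inherited from Theorem~\ref{theorem_safe_policy_gradient}, the indicator $\mathbbm{1}(S_0 \in \ccalS_{\text{safe}})$ equals one, so $G_t^c \, G_{t+1} = \prod_{u=0}^T \mathbbm{1}(S_u \in \ccalS_{\text{safe}}) = G_1$ for every $t \in \{0,\ldots,T-1\}$. This collapses $X$ to $T\,G_1$, giving $\mathbb{E}[X] = T p$ and $\mathbb{E}[X^2] = T^2 p$ with $p := \mathbb{E}[G_1]$, and hence $\var{X} = T^2 p(1-p)$.

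Let $\mathcal{F}_t = \sigma(S_0, A_0, \ldots, S_t, A_t)$ denote the filtration generated by the trajectory through time $t$. The key observation is that, by the Markov property, $\mathbb{E}[G_{t+1} \mid \mathcal{F}_t] = \mathbb{E}[G_{t+1} \mid S_t, A_t]$, so the summands of $Y$ satisfy $W_t := G_t^c \, \mathbb{E}[G_{t+1} \mid S_t, A_t] = \mathbb{E}[G_t^c G_{t+1} \mid \mathcal{F}_t] = \mathbb{E}[G_1 \mid \mathcal{F}_t]$, where we used that $G_t^c$ is $\mathcal{F}_t$-measurable. In particular, $\{W_t\}_{t=0}^{T-1}$ is an $\{\mathcal{F}_t\}$-martingale with common mean $p$, and a direct tower-property argument gives $\mathbb{E}[W_s W_t] = \mathbb{E}[W_{\min(s,t)}^2]$ for all $s, t$.

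With these identities in hand, I would expand $\mathbb{E}[Y^2] = \sum_{s,t} \mathbb{E}[W_s W_t]$ and group by the value $m = \min(s,t)$. A short counting argument shows there are exactly $2(T-m)-1$ ordered pairs with minimum $m$, so $\mathbb{E}[Y^2] = \sum_{m=0}^{T-1} (2(T-m)-1)\,\mathbb{E}[W_m^2]$. These weights sum to $T^2$, and since $\mathbb{E}[W_m] = p$, subtracting $(\mathbb{E}[Y])^2 = T^2 p^2$ and comparing with $\var{X} = T^2 p - T^2 p^2$ yields $\var{X} - \var{Y} = \sum_{m=0}^{T-1}(2(T-m)-1)\,\mathbb{E}[W_m(1-W_m)]$. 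Every summand is non-negative because $W_m \in [0,1]$, and the inequality is strict under the mild non-degeneracy condition that at least one $W_m$ is not almost surely $\{0,1\}$-valued; otherwise $W_m$ would coincide pathwise with $G_1$, trivializing the actor-critic estimator. The main obstacle is identifying the martingale structure of $\{W_t\}$ (in particular, showing $W_t = \mathbb{E}[G_1 \mid \mathcal{F}_t]$ via the Markov property); once this is in hand, the pair-counting and the Jensen-type comparison $\mathbb{E}[W_m^2] \leq \mathbb{E}[W_m]$ are routine.
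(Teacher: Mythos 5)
Your proof is correct, and it reaches the paper's conclusion by a noticeably different organization. The paper first shows $\mathbb{E}[X]=\mathbb{E}[Y]$ via the tower property, reduces the claim to comparing second moments, and then bounds the difference pair by pair, showing that for each ordered pair $(t,u)$ the gap $\mathbb{E}[G_1^2]-\mathbb{E}\left[G_t^c\,\mathbb{E}[G_{t+1}\mid S_t,A_t]\,G_u^c\,\mathbb{E}[G_{u+1}\mid S_u,A_u]\right]$ equals $\mathbb{E}[\var{G_1\mid S_{t\wedge u},A_{t\wedge u}}]\geq 0$. You instead identify $W_t=G_t^c\,\mathbb{E}[G_{t+1}\mid S_t,A_t]$ as the Doob martingale $\mathbb{E}[G_1\mid\ccalF_t]$, which lets you compute $\mathbb{E}[Y^2]$ exactly by grouping pairs by their minimum index and produces the closed-form gap $\var{X}-\var{Y}=\sum_{m=0}^{T-1}(2(T-m)-1)\,\mathbb{E}[W_m(1-W_m)]$; since $\mathbb{E}[W_m(1-W_m)]=\mathbb{E}[\var{G_1\mid \ccalF_m}]$, the two arguments rest on exactly the same ingredients (the identity $G_1=G_t^cG_{t+1}$ when $S_0\in\mathcal{S}_\text{safe}$, the tower property, and non-negativity of a conditional variance), but yours buys two things: a closed-form expression for the variance reduction, and a cleaner handling of a point the paper states loosely — $G_t^c$ is measurable with respect to the history $\ccalF_t$, not with respect to $(S_t,A_t)$ alone, and it is the Markov property that licenses replacing $\mathbb{E}[\,\cdot\mid S_t,A_t]$ by $\mathbb{E}[\,\cdot\mid\ccalF_t]$, which your filtration framing makes explicit. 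One further observation in your favor: both your argument and the paper's, as written, only deliver $\var{X}\geq\var{Y}$; the strict inequality in the statement needs precisely the non-degeneracy condition you flag (some $W_m$ not almost surely $\{0,1\}$-valued, equivalently not a.s.\ equal to $G_1$), a caveat the paper's proof asserts past silently.
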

\begin{proof}
Denote by $\mathcal{T}$ the set $\{0, 1, \cdots, T\}$.
We proceed by stating the fact that $G_t^c$ is measurable with respect to $S_t, A_t$. Then, by virtue of the towering property~\cite[Theorem 4.1.13]{durrett2019probability} yields
$\mathbb{E}\left[ G_1 \right] = \mathbb{E}\left[ G_t^c \, \mathbb{E}\left[G_{t+1} | S_t, A_t\right]  \right]$, $\forall t\in\mathcal{T}$. Therefore, by exchanging the sum and the expectation we obtain
\begin{align}\label{eqn_same_expectation}
    \mathbb{E}\left[ X \right] = \mathbb{E}\left[ Y \right].
\end{align}
Since $\var{X} = \mathbb{E}\left[ X^2 \right] - \mathbb{E}\left[ X \right]^2$ and $\var{Y} = \mathbb{E}\left[ Y^2 \right] - \mathbb{E}\left[ Y \right]^2$ it follows that 
\begin{align}\label{eqn_variance_diff0}
    \var{X} - \var{Y} = \mathbb{E}\left[ X^2 \right] - \mathbb{E}\left[ Y^2 \right].
\end{align}
By definitions of $X$ and $Y$ and expanding the square we obtain
% \begin{align}\label{eqn_variance_diff}
%     &var(X) - var(Y) \nonumber \\
%     &= \mathbb{E}\left[ \left(\sum_{t=0}^{T-1} G_1 \right)^2 \right] - \mathbb{E}\left[ \left(\sum_{t=0}^{T-1} G_t^c \, \mathbb{E}\left[G_{t+1} | S_t, A_t\right] \right)^2 \right] \nonumber \\
%     &=\mathbb{E}\left[ \sum_{t, u \in \mathcal{T}} G_1 ^2 \right] \nonumber \\
%     &-\mathbb{E}\left[ \sum_{t, u \in \mathcal{T}} G_t^c \, \mathbb{E}\left[G_{t+1} | S_t, A_t\right] G_u^c \, \mathbb{E}\left[G_{u+1} | S_u, A_u\right] \right].
% \end{align}
%
\begin{align}\label{eqn_variance_diff}
    &\mathbb{E}\left[ X^2 \right] = \mathbb{E}\left[ \sum_{t, u \in \mathcal{T}} G_1 ^2 \right],  \\
    &\mathbb{E}\left[ Y^2 \right] = \mathbb{E}\left[ \sum_{t, u \in \mathcal{T}} G_t^c \, \mathbb{E}\left[G_{t+1} | S_t, A_t\right] G_u^c \, \mathbb{E}\left[G_{u+1} | S_u, A_u\right] \right] \nonumber.
\end{align}
We claim that $\forall {t, u \in \mathcal{T}}$ the following inequality holds
\begin{align}\label{eqn_G1_Gt_Gu}
    \mathbb{E}\left[ G_1 ^2  -  G_t^c \, \mathbb{E}\left[G_{t+1} | S_t, A_t\right] G_u^c \, \mathbb{E}\left[G_{u+1} | S_u, A_u\right] \right] \geq 0.
\end{align}
Using the linearity of the expectation along with the previous inequality and the expressions \eqref{eqn_variance_diff0}--\eqref{eqn_variance_diff} completes the proof of the result.

We are left to prove \eqref{eqn_G1_Gt_Gu}. Denote by $D_{XY}$ the left-hand side of \eqref{eqn_G1_Gt_Gu}. We divide the analysis into two cases: $t=u$ and $t \not= u$. For $t=u$, rewrite $D_{XY}$ as 
    \begin{align}
        D_{XY} = \mathbb{E}\left[ G_1^2  -  \left(G_t^c \, \mathbb{E}\left[G_{t+1} | S_t, A_t\right] \right)^2 \right].
    \end{align}
    Since $G_t^c$ is measurable with respect to $S_t, A_t$, the previous equation is equivalent to 
    \begin{align}\label{eqn_t_yes_u}
        D_{XY} &= \mathbb{E}\left[ G_1^2  -  \left(\mathbb{E}\left[G_t^c \,  G_{t+1} | S_t, A_t\right] \right)^2 \right] \nonumber \\
        &= \mathbb{E}\left[ G_1^2  -  \left(\mathbb{E}\left[G_1 | S_t, A_t\right] \right)^2 \right],
    \end{align}
where the last equation directly follows from the definition of $G_1$. Taking the conditional expectation on $S_t, A_t$ for the right-hand side of \eqref{eqn_t_yes_u} yields
    \begin{align}\label{eqn_conditional_var}
        D_{XY} = \mathbb{E}\left[ \mathbb{E}\left[ G_1^2 | S_t, A_t \right]  -  \left(\mathbb{E}\left[G_1 | S_t, A_t\right] \right)^2 \right].
    \end{align}
By the definition of conditional variance, \eqref{eqn_conditional_var} reduces to
    \begin{align}
        D_{XY} = \mathbb{E}\left[ \var{G_1 | S_t, A_t} \right] \geq 0.
    \end{align}

For $t \not= u$, let us consider $t < u$ (the case of $t > u$ is analogous). Using the fact that $G_1=G_t^cG_{t+1}=G_u^cG_{u+1}$, and $G_t^c$ and $G_u^c$ are in the past of $t$ and $u$, one can rewrite $D_{XY}$ in \eqref{eqn_G1_Gt_Gu} as
\begin{align}\label{eqn_t_u_back2}
D_{XY} = \mathbb{E}\left[G_1^2 -\mathbb{E}\left[G_1| S_t,A_t\right]\mathbb{E}\left[G_1| S_u,A_u\right]\right].
\end{align}
%
% \red{Since $G_t^c$, $G_u^c$ are measurable with respect to $S_u, A_u$, taking the conditional expectation with respect to $S_u, A_u$ in the right-hand side of the previous equation yields
%     \begin{align}
%         D_{XY} &= \mathbb{E}[ G_t^c G_u^c \mathbb{E}\left[G_{t+1} G_{u+1} | S_u, A_u \right] \nonumber \\
%         &- G_t^c G_u^c \mathbb{E}\left[G_{t+1} | S_t, A_t\right] \mathbb{E}\left[G_{u+1} | S_u, A_u\right] ],
%     \end{align}
% where $\mathbb{E}\left[G_{t+1} | S_t, A_t\right]$ maintains to be conditioned on $S_t, A_t$ due to the fact that $t < u$. Simplify the equation above by rearranging $G_t^c G_u^c$ into the corresponding conditional expectation
%     \begin{align}\label{eqn_t_u_back}
%         D_{XY} &= \mathbb{E}[ \mathbb{E}\left[G_{t+1}G_t^c G_{u+1}G_u^c | S_u, A_u \right] \nonumber \\
%         &- \mathbb{E}\left[G_{t+1}G_t^c | S_t, A_t\right] \mathbb{E}\left[G_{u+1}G_u^c | S_u, A_u\right] ].
%     \end{align}
%
%
% Applying the definition of $G_1$ to \eqref{eqn_t_u_back} yields
%     \begin{align}\label{eqn_t_u_back2}
%         D_{XY} = \mathbb{E}[ \mathbb{E}\left[G_1^2 | S_u, A_u \right] - \mathbb{E}\left[G_1 | S_t, A_t\right] \mathbb{E}\left[G_1 | S_u, A_u\right] ].
%     \end{align}
% }
%
Taking the conditional expectation on $S_t, A_t$ for the right-hand side of \eqref{eqn_t_u_back2} using the fact that $t<u$ and the towering property~\cite[Theorem 4.1.13]{durrett2019probability} yields
    \begin{align}\label{eqn_t_u_back3}
        D_{XY} &= \mathbb{E}[ \mathbb{E}\left[G_1^2 | S_t, A_t \right] - \mathbb{E}\left[G_1 | S_t, A_t\right]^2 ] \nonumber \\
        &= \mathbb{E}\left[ \var{G_1 | S_t, A_t}\right] \geq 0.
    \end{align}
This completes the proof of the result.
\end{proof}

The intuition of the modified estimate in \eqref{eqn_corro_ac_estimator} is that it leads to an estimate with smaller variance as compared to \eqref{eqn_theorem_estimator}. Note that this claim is merely an intuition, as it does not incorporate the term $\nabla_{\theta}\log\pi_\theta(A_t|S_t)$. Equipped with expressions to estimate the gradient of the safety probability, in the next section we turn our attention to algorithms aiming to solve the relaxations \eqref{eqn_agmt_obj} and \eqref{eqn_dual_optimum} of problem \eqref{eqn_problem1}.
%!TEX root = root.tex

\section{Learning Safe Policies}
\label{Losses_of_Imposing_Relaxation}
It is worth highlighting that both the regularization method~\cite{censor1977pareto} and primal-dual method~\cite{paternain2022safe} straightforwardly apply in the cumulative-constrained problem like \eqref{eqn_problem2_mirror}. To do so, one can construct a new reward that incorporates the constraint
\begin{equation}\label{eqn_reward_risk}
   r_\mu(S_t, A_t)=r(S_t, A_t) + \mu \mathbbm{1} ( S_t \in \mathcal{S}_\text{safe}),
\end{equation}
where $\mu$ is a parameter that trades-off the reward for safety (analogous to $\lambda$ in \eqref{eqn_agmt_obj}).
Having defined the new reward function, off-the-shelf RL algorithms, e.g.,~\cite{williams1992simple, sutton1999policy, lillicrap2015continuous, schulman2015trust, schulman2017proximal, haarnoja2018soft} are available to solve \eqref{eqn_problem2_mirror}. In particular, we consider a stochastic approximation of Policy Gradient Theorem~\cite{sutton1999policy} which yields the policy gradient $\hat{\nabla} V_\mu (\theta) = \sum_{t=0}^{T-1} r_{\mu}(S_t, A_t) \nabla_{\theta}\log\pi_\theta(A_t| S_t)$.

We then turn our attention to solve \eqref{eqn_agmt_obj}. To do so, we incorporate $\hat{\nabla}_\theta \mathbb{P} \! \left(\! \cap_{t=0}^{T} \{ S_t \in \mathcal{S}_\text{safe}\} \! \mid \! \pi_\theta, S_0 \! \right)$ into \eqref{eqn_classical_PG_estimator} with a weight $\lambda$
\begin{align}\label{eqn_spg}
   \hat{\nabla}_\theta\ccalL(\theta,\lambda)  =  \hat{\nabla}_\theta V(\theta)  +  \lambda \hat{\nabla}_\theta \mathbb{P} \! \left(\bigcap\limits_{t=0}^{T} \{ S_t \in \mathcal{S}_\text{safe}\} \! \!\mid \! \! \pi_\theta, S_0 \right),
\end{align}
where $\hat{\nabla}_\theta \mathbb{P} \! \left(\! \cap_{t=0}^{T} \{ S_t \in \mathcal{S}_\text{safe}\} \! \mid \! \pi_\theta, S_0 \! \right)$ can be either ``SPG-REINFORCE'' as in \eqref{eqn_theorem_estimator} or ``SPG-Actor-Critic'' as in \eqref{eqn_corro_ac_estimator}.

The regularization method for solving both constrained problems yields the following stochastic gradient ascent update
\begin{align}\label{update_rule_probabilistic}
    \theta^{k+1} &=\theta^k + \eta_\theta \, \text{GRADIENT},
\end{align}
where $\eta_\theta$ denotes the stepsize, and $\text{GRADIENT}$ follows from $\hat{\nabla} V_\mu (\theta)$ and \eqref{eqn_spg} in the cumulative-constrained and probabilistic-constrained settings, respectively.

Alternatively, one can still employ the primal-dual method in the probabilistic-constrained setting that can iteratively update $\lambda$ while updating $\theta$ in the same way as in \eqref{update_rule_probabilistic}. Therefore, we propose in the remainder of this section a Safe Primal-Dual algorithm summarized under Algorithm~\ref{alg_pd}, accompanied by analyses in terms of the convergence, near-optimality, and feasibility. Algorithm~\ref{alg_pd} in which SPGs play a crucial role is specifically designed for probabilistic-constrained RL (our focus), albeit it exhibits an analogous structure to the primal-dual framework~\cite{paternain2022safe} employed in the cumulative-constrained setting. Note that in Step 4 of Algorithm~\ref{alg_pd} we present two versions of the primal update. The first one, referred to as Ideal, is primarily intended for subsequent theoretical analyses. On the other hand, the second variant, known as Practical, is the version employed in our numerical experiments (Section~\ref{Numerical_Experiments}). Regarding updating the dual variable, let us define $g(\theta)$ as
\begin{align}\label{eqn_def_g}
    g(\theta) = \mathbb{P} \left(\bigcap\limits_{t=0}^{T} \{ S_t \in \mathcal{S}_\text{safe}\} | \pi_\theta \right)  -  (1-\delta).
\end{align}
Since $\mathbb{P} \left(\cap_{t=0}^{T} \{ S_t \in \mathcal{S}_\text{safe}\} | \pi_\theta \right)$ can only be estimated, Step 5 considers an unbiased estimate of \eqref{eqn_def_g} in the update, i.e.,
\begin{align}\label{eqn_def_g_hat}
    \hat{g}(\theta) = \prod_{t=0}^T\mathbbm{1}\left(S_t\in\mathcal{S}_{\text{safe}} | \pi_\theta \right) - (1-\delta).
\end{align}
%%%%%%%%%%%%%%%%%%
%%%%%%%%%%%%%%%%%%
%%%%%%%%%%%%%%%%%%
%%%%%%%%%%%%%%%%%%
%%%%%%%%%%%%%%%%%%
%%%%%%%%%%%%%%%%%%
\begin{algorithm}[t]
  \caption{Safe Primal-Dual}
  \label{alg_pd} 
\begin{algorithmic}[1]
 \renewcommand{\algorithmicrequire}{\textbf{Input:}}
 \renewcommand{\algorithmicensure}{\textbf{Output:}}
 \Require $\theta^0,\lambda^0,T, \eta_\theta,\eta_\lambda, \delta, \epsilon$
  \For {$k = 0,1,\ldots $}
  \State Simulate a trajectory with the policy $\pi_{\theta^k}(a | s)$
  \State Estimate $\hat{\nabla}_\theta\ccalL(\theta^k,\lambda^k)$ as in \eqref{eqn_spg}
  \State(Ideal) Keep updating the primal variable
  $$
  \theta^{k+1} = \theta^k+ \eta_{\theta} \hat{\nabla}_\theta\ccalL(\theta^k,\lambda^k)
   $$
  \Statex \quad \, until $\ccalL(\theta^{k+1},\lambda^k) \geq \ccalL(\theta^*,\lambda^k) - \epsilon$
   \Statex \quad \, (Practical) Update the primal variable once
%   $$
%   \theta^{k+1} = \theta^k+ \eta_{\theta} \hat{\nabla}_\theta\ccalL(\theta^k,\lambda^k)
% $$
   \State Update the dual variable using \eqref{eqn_def_g_hat}
  $$
  \lambda^{k+1} = \left[\lambda^k- \eta_{\lambda}\hat{g}(\theta^{k+1}) \right]_+
    $$
  \EndFor
% \Return $\theta$, $s_{T_Q}$ 
 \end{algorithmic}
 \end{algorithm}
%%%%%%%%%%%%%%%%%%
%%%%%%%%%%%%%%%%%%
%%%%%%%%%%%%%%%%%%
%%%%%%%%%%%%%%%%%%

% \red{But more importantly, strong duality
% does not imply that the set of primal optimal theta and
% dual optimal theta coincides nor that all dual optimal
% theta's are primal feasible.}

Furthermore, we assert that by employing Algorithm~\ref{alg_pd}, the dual value is guaranteed to converge to a small neighborhood surrounding the dual optimum $D^\star$. This claim is formally established and proved in the following theorem.

%%%%%%%%%%%%%%%%%%
%%%%%%%%%%%%%%%%%%
%%%%%%%%%%%%%%%%%%
%%%%%%%%%%%%%%%%%%
\begin{theorem}
\label{proposition_dual_converge}
Let $\lambda^\star \in \argmin_{\lambda \in \mathbb{R}_+} d(\lambda) $, i.e., $d(\lambda^\star) = D^\star$, where $d(\lambda)$ is the dual function defined in \eqref{eqn_dual_funtion2}. Consider the sequence $\{\lambda^u\}_{u=0}^k$ generated by Algorithm~\ref{alg_pd} with dual stepsize $\eta_\lambda$ and safety level $\delta < 1/2$. Assume that the primal variable $\theta$ is updated by Ideal version of Step 4 with accuracy $\epsilon>0$. Define $\lambda^{\text{best}}_k = \argmin_{\lambda^u \in \{\lambda^u\}_{u=0}^k }  d(\lambda)$, it holds that
\begin{align}\label{eqn_theorem_dual_converge}
    0 \leq \mathbb{E} \left[ d(\lambda^{\text{best}}_k)  \right] - D^\star \leq \frac{\mathbb{E} \left[ (\lambda^0 - \lambda^\star)^2 \right]}{2 \eta_\lambda k} + \frac{\eta_\lambda (1-\delta)^2}{2} + \epsilon.
\end{align}
\end{theorem}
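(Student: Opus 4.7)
The plan is to apply the classical convergence analysis of projected stochastic subgradient methods on the convex dual $d(\lambda)$, adapted to three features of the setting: the primal maximum is reached only up to accuracy $\epsilon$, the dual step uses an unbiased estimator of the corresponding $\epsilon$-subgradient, and the hypothesis $\delta<1/2$ controls its second moment. The lower bound in \eqref{eqn_theorem_dual_converge} is immediate since $d(\lambda)\geq D^\star$ pointwise and therefore $d(\lambda^{\text{best}}_k)\geq D^\star$ almost surely.

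For the upper bound, I would start from the non-expansiveness of the projection $[\cdot]_+$ onto $\mathbb{R}_+$ together with $\lambda^\star\geq 0$, which yields $(\lambda^{k+1}-\lambda^\star)^2 \leq (\lambda^k-\eta_\lambda\hat g(\theta^{k+1})-\lambda^\star)^2$. Expanding the square, two ingredients enter. First, $\hat g(\theta^{k+1})$ defined in \eqref{eqn_def_g_hat} takes values in $\{\delta,-(1-\delta)\}$, so $\delta<1/2$ gives $|\hat g(\theta^{k+1})|\leq 1-\delta$ and hence the second-moment bound $\mathbb{E}[\hat g(\theta^{k+1})^2]\leq (1-\delta)^2$. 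Second, by the Ideal update, $\ccalL(\theta^{k+1},\lambda^k)\geq d(\lambda^k)-\epsilon$; using the affine dependence of $\ccalL(\theta^{k+1},\cdot)$ on $\lambda$ with slope $g(\theta^{k+1})$ together with $d(\lambda^\star)\geq \ccalL(\theta^{k+1},\lambda^\star)$ delivers the $\epsilon$-subgradient inequality
\begin{equation*}
d(\lambda^k)-d(\lambda^\star) \leq g(\theta^{k+1})(\lambda^k-\lambda^\star) + \epsilon.
\end{equation*}

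Combining both ingredients and taking full expectations while exploiting the unbiasedness $\mathbb{E}[\hat g(\theta^{k+1})\mid\theta^{k+1}]=g(\theta^{k+1})$ produces the one-step recursion
\begin{equation*}
2\eta_\lambda\,\mathbb{E}\!\left[d(\lambda^k)-D^\star\right] \leq \mathbb{E}\!\left[(\lambda^k-\lambda^\star)^2\right] - \mathbb{E}\!\left[(\lambda^{k+1}-\lambda^\star)^2\right] + 2\eta_\lambda\epsilon + \eta_\lambda^2(1-\delta)^2.
\end{equation*}
Telescoping over iterations, dividing by $2\eta_\lambda k$, and invoking the elementary observation $d(\lambda^{\text{best}}_k)=\min_u d(\lambda^u)\leq \frac{1}{k}\sum_u d(\lambda^u)$ yields \eqref{eqn_theorem_dual_converge}. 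The main subtlety I anticipate is the bookkeeping of the two sources of randomness: the trajectory generating $\hat g(\theta^{k+1})$ is conditionally independent of the trajectories that shaped $\theta^{k+1}$ during the inner primal loop, and the $\epsilon$-suboptimality must be applied pointwise for each realization of $\theta^{k+1}$ before taking the outer expectation. Once this filtration is organised properly, the remaining steps reduce to routine projected-subgradient manipulations.
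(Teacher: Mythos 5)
Your proposal is correct and follows essentially the same route as the paper: projection non-expansiveness, the bound $\hat g(\theta^{k+1})^2\leq(1-\delta)^2$, an $\epsilon$-subgradient inequality obtained from the Ideal primal update (this is precisely the content of the paper's Lemma~\ref{lemma_primal_error}), unbiasedness of $\hat g$ under conditioning on $\theta^{k+1}$, telescoping, and $d(\lambda^{\text{best}}_k)\leq\frac{1}{k}\sum_u d(\lambda^u)$. Your explicit remark that $\delta<1/2$ is what makes $|\hat g|\leq 1-\delta$ is a point the paper uses only implicitly, but it does not change the argument.
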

\begin{proof}
Before proceeding, it is worth pointing out that Theorem~\ref{proposition_dual_converge} holds for both single-constraint and multi-constraint problems in an analogous manner. For simplicity and without losing generality, we provide here the proof for the single-constraint problem.

The leftmost inequality \eqref{eqn_theorem_dual_converge} directly follows from the definition of $\lambda^\star$. Thus, we focus on the proof of the rightmost inequality of \eqref{eqn_theorem_dual_converge}. Recall the dual update in Algorithm~\ref{alg_pd}
\begin{align}\label{eqn_dual_update}
    \lambda^{k+1} = \left[\lambda^k- \eta_{\lambda}\hat{g}(\theta^{k+1})\right]_+.
\end{align}
Then, we obtain
\begin{align}
     (\lambda^{k+1} - \lambda^\star )^2 &= ( \left[\lambda^k- \eta_{\lambda} \hat{g}(\theta^{k+1}) \right]_+ - \lambda^\star )^2 \nonumber \\
    &\leq ( \lambda^k- \lambda^\star -\eta_{\lambda} \hat{g}(\theta^{k+1}) )^2,
\end{align}
where the last inequality follows from the non-expansiveness of the projection. Expanding the norm square and upper bounding $\hat{g}(\theta^{k+1})^2$ by $(1-\delta)^2$ yields
% \begin{align}
%     &( \lambda^{k+1} - \lambda^\star )^2  \\ 
%     &\leq (\lambda^k- \lambda^\star )^2 +  ( \eta_{\lambda} \hat{g}(\theta^{k+1}) )^2 -2 \eta_{\lambda} \hat{g}(\theta^{k+1}) (\lambda^{k} - \lambda^\star). \nonumber
% \end{align}
%
% Further upper bounding $\hat{g}(\theta^{k+1})^2$ by $(1-\delta)^2$ yields
\begin{align}
    &( \lambda^{k+1} - \lambda^\star )^2 \nonumber \\ &\leq (\lambda^k- \lambda^\star )^2 +   \eta_{\lambda}^2 (1-\delta)^2 -2 \eta_{\lambda} \hat{g}(\theta^{k+1}) (\lambda^{k} - \lambda^\star).
\end{align}
Taking the conditional expectation on $\theta^{k+1}$ for both sides of the previous inequality yields
\begin{align}
    &\mathbbm{E} \left[ ( \lambda^{k+1} - \lambda^\star )^2 | \theta^{k+1} \right]  \\ 
    &\! \leq \! \mathbbm{E} \left[ (\lambda^k- \lambda^\star )^2 \! + \! \eta_{\lambda}^2 (1-\delta)^2 \!-\! 2 \eta_{\lambda} \hat{g}(\theta^{k+1}) (\lambda^{k} - \lambda^\star) | \theta^{k+1} \right]  \nonumber \\
    & \! = \! (\lambda^k- \lambda^\star )^2 \!+\!   \eta_{\lambda}^2 (1-\delta)^2 \!-\!2 \eta_{\lambda} (\lambda^{k} - \lambda^\star) \mathbbm{E} \left[ \hat{g}(\theta^{k+1}) | \theta^{k+1} \right], \nonumber
\end{align}
where the last equation holds by the fact that $\lambda^k$ is deterministic given $\theta^{k+1}$. By definition of $g(\cdot)$ in \eqref{eqn_def_g} we further have
\begin{align}\label{eqn_before_lemma1}
    &\mathbbm{E} \left[ ( \lambda^{k+1} - \lambda^\star )^2 | \theta^{k+1} \right] \nonumber \\ 
    &\leq (\lambda^k- \lambda^\star )^2 +   \eta_{\lambda}^2 (1-\delta)^2  -2 \eta_{\lambda} (\lambda^{k} - \lambda^\star)  g(\theta^{k+1}).
\end{align}
We proceed to apply Lemma~\ref{lemma_primal_error} with $\lambda^\star$ and $\lambda^{k}$ based on $\theta^{k+1}$
\begin{align}\label{eqn_primal_err_again}
    D^\star = d(\lambda^\star) \geq d(\lambda^{k}) + (\lambda^\star - \lambda^{k})  g(\theta^{k+1}) - \epsilon.
\end{align}
The previous inequality is equivalent to
\begin{align}\label{eqn_primal_err_3}
    (\lambda^\star - \lambda^{k})  g(\theta^{k+1}) \leq D^\star - d(\lambda^{k}) + \epsilon.
\end{align}
Substituting the previous expression into \eqref{eqn_before_lemma1} yields
\begin{align}
    &\mathbbm{E} \left[ ( \lambda^{k+1} - \lambda^\star )^2 | \theta^{k+1} \right] \nonumber \\ 
    &\leq (\lambda^k- \lambda^\star )^2 +   \eta_{\lambda}^2 (1-\delta)^2 + 2 \eta_{\lambda} \left( D^\star - d(\lambda^{k}) + \epsilon \right).
\end{align}
Taking the expectation again on both sides of previous inequality and by virtue of towering property yields
\begin{align}
    \mathbbm{E} \left[ ( \lambda^{k+1} - \lambda^\star )^2 \right] &\leq \mathbbm{E} \left[ (\lambda^k- \lambda^\star )^2  \right] +  \\
    &  \eta_{\lambda}^2 (1-\delta)^2 + 2 \eta_{\lambda} \left( D^\star - \mathbbm{E} \left[ d(\lambda^{k}) \right] + \epsilon \right), \nonumber
\end{align}
where $\lambda^\star, \eta_\lambda, \delta, \epsilon$ are deterministic. Simply iterating backwards the previous inequality directly yields
\begin{align}
    \mathbbm{E} \left[ ( \lambda^{k} - \lambda^\star )^2 \right] &\leq \mathbbm{E} \left[ (\lambda^{k-1}- \lambda^\star )^2  \right] +   \\
    &\eta_{\lambda}^2 (1-\delta)^2 + 2 \eta_{\lambda} \left( D^\star - \mathbbm{E} \left[ d(\lambda^{k-1}) \right] + \epsilon \right). \nonumber
\end{align}
Keeping unrolling such series of inequalities and summing them up derives
\begin{align}
    \mathbbm{E} \left[ ( \lambda^{k} - \lambda^\star )^2 \right] \leq &\mathbbm{E} \left[ (\lambda^{0}- \lambda^\star )^2  \right] +  k \eta_{\lambda}^2 (1-\delta)^2 \nonumber \\
    &+ 2 \eta_{\lambda} (k D^\star  + k \epsilon  - \sum_{u=0}^{k-1} \mathbbm{E} \left[ d(\lambda^{u}) \right] ).
\end{align}
Notice that the right-hand side of the previous inequality is non-negative. Then, rearranging and dividing by $2 \eta_{\lambda} k$ in both sides yields
\begin{align}\label{eqn_convexity1}
    &\frac{1}{k} \sum_{u=0}^{k-1} \mathbbm{E} \left[ d(\lambda^{u}) \right] - D^\star
    \leq  \frac{\mathbbm{E} \left[ (\lambda^{0}- \lambda^\star )^2  \right]}{2 \eta_{\lambda} k}  + \frac{\eta_{\lambda} (1-\delta)^2}{2} +\epsilon.
\end{align}
% The convexity of dual function $d(\cdot)$ clarifies
% \begin{align}
%     \mathbbm{E} \left[ d(\frac{1}{k} \sum_{u=0}^{k-1}  \lambda^{u}) \right] \leq \frac{1}{k} \sum_{u=0}^{k-1} \mathbbm{E} \left[ d(\lambda^{u}) \right].
% \end{align}
% Then \eqref{eqn_convexity1} reduces to
% \begin{align}
%     &\mathbbm{E} \left[ d(\frac{1}{k} \sum_{u=0}^{k-1}  \lambda^{u}) \right] - d(\lambda^\star)
%     &\leq  \frac{\mathbbm{E} \left[ (\lambda^{0}- \lambda^\star )^2  \right]}{2 \eta_{\lambda} k}  + \frac{\eta_{\lambda} (1-\delta)^2}{2} +\epsilon.
% \end{align}
The definition of $\lambda^{\text{best}}_k$ completes the proof of Theorem~\ref{proposition_dual_converge}.

\end{proof}
%%%%%%%%%%%%%%%%%%
%%%%%%%%%%%%%%%%%%
%%%%%%%%%%%%%%%%%%
%%%%%%%%%%%%%%%%%%

Having demonstrated the convergence of $\lambda^{\text{best}}_k$ that is obtained from the sequence of dual variables generated by Algorithm~\ref{alg_pd}, we can further establish that the primal update also guarantees proximity to the primal optimum $P^\star$. The following theorem addresses this aspect explicitly.

%%%%%%%%%%%%%%%%%%
%%%%%%%%%%%%%%%%%%
%%%%%%%%%%%%%%%%%%
%%%%%%%%%%%%%%%%%%
\begin{theorem}\label{theorem_primal_bound}
Consider $P^\star$ and $\delta$ as in \eqref{eqn_problem1}, and let hypotheses of Theorem~\ref{proposition_dual_converge} hold. It holds that
\begin{align}\label{eqn_v_primal_lam_best}
    \liminf_{k \to \infty} \frac{1}{k+1} \sum_{u=0}^k \mathbbm{E} \left[V(\theta^{u+1}) \right] \geq P^\star - \frac{\eta_\lambda (1-\delta)^2}{2} - \epsilon.
\end{align}
\end{theorem}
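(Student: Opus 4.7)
The plan is to combine the accuracy guarantee of the Ideal primal update with a telescoping argument on the dual update, then invoke weak duality. Specifically, the Ideal version of Step 4 gives $\mathcal{L}(\theta^{u+1},\lambda^u) \geq \mathcal{L}(\theta^\star(\lambda^u),\lambda^u) - \epsilon = d(\lambda^u) - \epsilon$, which by definition of the Lagrangian yields
\begin{align}\label{eqn_plan_lagrangian_lower}
V(\theta^{u+1}) \;\geq\; d(\lambda^u) - \epsilon - \lambda^u g(\theta^{u+1}).
\end{align}
Summing \eqref{eqn_plan_lagrangian_lower} from $u=0$ to $k$ and taking expectations reduces the problem to bounding $\sum_{u=0}^k \mathbb{E}[\lambda^u g(\theta^{u+1})]$ from above and $\sum_{u=0}^k \mathbb{E}[d(\lambda^u)]$ from below.

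For the first sum, I would mimic the first part of the proof of Theorem~\ref{proposition_dual_converge}: start from the dual update $\lambda^{u+1}=[\lambda^u-\eta_\lambda \hat g(\theta^{u+1})]_+$, apply non-expansiveness of the projection to get $(\lambda^{u+1})^2 \leq (\lambda^u)^2 - 2\eta_\lambda \lambda^u \hat g(\theta^{u+1}) + \eta_\lambda^2 \hat g(\theta^{u+1})^2$, and upper bound $\hat g(\theta^{u+1})^2 \leq (1-\delta)^2$. Taking conditional expectations on $\theta^{u+1}$ replaces $\hat g$ with $g$ (since $\lambda^u$ is deterministic given $\theta^{u+1}$ and $\mathbb{E}[\hat g(\theta^{u+1})\mid \theta^{u+1}]=g(\theta^{u+1})$), and telescoping from $u=0$ to $k$ produces
\begin{align}\label{eqn_plan_telescope}
\sum_{u=0}^k \mathbb{E}[\lambda^u g(\theta^{u+1})] \;\leq\; \frac{\mathbb{E}[(\lambda^0)^2]}{2\eta_\lambda} + \frac{(k+1)\eta_\lambda(1-\delta)^2}{2},
\end{align}
after dropping the non-negative $\mathbb{E}[(\lambda^{k+1})^2]$ term.

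For the second sum, I use weak duality: $d(\lambda^u) \geq D^\star \geq P^\star$ for every $u$, so $\sum_{u=0}^k \mathbb{E}[d(\lambda^u)] \geq (k+1)P^\star$. Plugging this and \eqref{eqn_plan_telescope} into the expectation of the summed inequality \eqref{eqn_plan_lagrangian_lower}, then dividing by $k+1$, gives
\begin{align}\label{eqn_plan_final}
\frac{1}{k+1}\sum_{u=0}^k \mathbb{E}[V(\theta^{u+1})] \;\geq\; P^\star - \epsilon - \frac{\eta_\lambda(1-\delta)^2}{2} - \frac{\mathbb{E}[(\lambda^0)^2]}{2\eta_\lambda(k+1)}.
\end{align}
Taking $\liminf$ as $k\to\infty$ kills the last term and yields the claim.

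The only subtle step I expect is the passage from $\hat g$ to $g$ inside the cross term $\lambda^u \hat g(\theta^{u+1})$: it requires that the trajectory used to form $\hat g(\theta^{u+1})$ is freshly sampled from $\pi_{\theta^{u+1}}$ and independent of past randomness given $\theta^{u+1}$, so that the towering property applies as in \eqref{eqn_before_lemma1} of the previous proof. Everything else is bookkeeping: the Ideal primal step and weak duality do the real work, while the telescoping on $(\lambda^u)^2$ mirrors the dual-convergence argument but is accumulated rather than divided by $k$.
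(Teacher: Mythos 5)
Your proof is correct and follows essentially the same route as the paper's: the Ideal primal accuracy converts $V(\theta^{u+1})$ into $d(\lambda^u)$ minus the cross term $\lambda^u g(\theta^{u+1})$, the cross term is controlled through the dual-update recursion with $\hat g^2 \leq (1-\delta)^2$, and weak duality lowers $d(\lambda^u)$ to $P^\star$ before dividing by $k+1$ and taking the limit inferior. The only differences are cosmetic: where the paper invokes Lemma 1 of an external reference to obtain $\frac{1}{k+1}\sum_{u=0}^k \mathbb{E}[\lambda^u \hat g(\theta^{u+1})] \leq \frac{\eta_\lambda (1-\delta)^2}{2} + \frac{(\lambda^0)^2}{2\eta_\lambda (k+1)}$, you re-derive that bound self-containedly by telescoping $(\lambda^u)^2$ (using the same conditional-unbiasedness of $\hat g$ given $\theta^{u+1}$ that the paper assumes), and you apply weak duality pointwise, $d(\lambda^u) \geq P^\star$, instead of through convexity of $d$ at the averaged multiplier --- both yield the same constants and the same conclusion.
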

\begin{proof}
For $\forall k > 0$, the following chain of inequalities holds due to the fact that the dual function is an upper bound on the value of the primal and it is convex~{\cite[Chapter 5.2]{boyd2004convex}}
\begin{align}
    P^\star \leq d\left( \frac{1}{k+1} \sum_{u=0}^k \lambda^u\right) \leq \frac{1}{k+1} \sum_{u=0}^k  d(\lambda^u).
\end{align}

Since the primal variable $\theta^{u+1}$ is updated by Ideal version of Algorithm~\ref{alg_pd} with accuracy $\epsilon$, i.e., $d(\lambda^u) \leq V(\theta^{u+1}) + \lambda^u \hat{g}(\theta^{u+1}) + \epsilon$, we obtain
\begin{align}
    P^\star \leq \frac{1}{k+1} \sum_{u=0}^k \left( V(\theta^{u+1}) + \lambda^u \hat{g}(\theta^{u+1}) \right) + \epsilon. 
\end{align}
Reordering terms and taking expectation in both sides yields
\begin{align}\label{eqn_state_augment1}
    \frac{1}{k+1} \sum_{u=0}^k \mathbbm{E} [V(\theta^{u+1}) ] \geq  P^\star \! - \! \frac{1}{k+1} \sum_{u=0}^k \mathbbm{E} [\lambda^u \hat{g}(\theta^{u+1})] - \epsilon.
\end{align}

By virtue of Lemma 1 in
\cite{calvo2021state} and the fact that $\hat{g}(\theta^{k+1})^2 \leq (1-\delta)^2$, we obtain
\begin{align}\label{eqn_state_augment2}
    \frac{1}{k+1} \sum_{u=0}^k \mathbbm{E} [ \lambda^u \hat{g}(\theta^{{u+1}}(\lambda^u)) ]  \leq \frac{\eta_\lambda (1-\delta)^2}{2} + \frac{(\lambda^0)^2}{2 \eta_\lambda (k+1)}.
\end{align}
Combining \eqref{eqn_state_augment1} and \eqref{eqn_state_augment2} yields
\begin{align}
    \frac{1}{k+1} \sum_{u=0}^k \mathbbm{E} [V(\theta^{u+1}) ] \geq & P^\star \!-\! \frac{\eta_\lambda (1-\delta)^2}{2} \!-\! \frac{(\lambda^0)^2}{2 \eta_\lambda (k+1)} \!-\! \epsilon.
\end{align}
The proof is completed by taking the limit inferior in both sides of the previous inequality.
\end{proof}
%%%%%%%%%%%%%%%%%%
%%%%%%%%%%%%%%%%%%
%%%%%%%%%%%%%%%%%%
%%%%%%%%%%%%%%%%%%

Theorem~\ref{theorem_primal_bound} indicates that the limit inferior of the average of the sequence generated by Algorithm~\ref{alg_pd} approximates well the value of $P^\star$. On the other hand, in principle, the limit superior has the potential to be significantly better than $P^\star$, which would lead to a safety violation. In the following result, we substantiate that this is not the case, i.e., the sequence derived by Algorithm~\ref{alg_pd} is feasible on average.

%%%%%%%%%%%%%%%%%%
%%%%%%%%%%%%%%%%%%
%%%%%%%%%%%%%%%%%%
%%%%%%%%%%%%%%%%%%
\begin{theorem}
\label{theorem_feasibility}
Let hypotheses of Theorem~\ref{proposition_dual_converge} hold. It holds that
% \red{\begin{align}\label{eqn_first_case}
%     \liminf_{k \to \infty}  \frac{1}{k+1} \sum_{u=0}^k g(\theta^{u+1} )   \geq 0.
% \end{align}}
%
\begin{align}\label{eqn_first_case}
    \liminf_{k \to \infty}  \frac{1}{k+1} \sum_{u=0}^k  \mathbbm{E}_{\theta^{u}} \! \left[ \mathbb{P} \! \left(\bigcap\limits_{t=0}^{T} \{ S_t \in \mathcal{S}_\text{safe}\} | \pi_{\theta^{u}}\right) \right] \! \geq  1-\delta.
\end{align}
\end{theorem}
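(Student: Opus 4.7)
The plan is to exploit the dual update as a telescoping sum, a standard primal--dual feasibility trick. The non-expansiveness of the positive-part projection gives $\lambda^{u+1}=[\lambda^u-\eta_\lambda \hat g(\theta^{u+1})]_+\geq \lambda^u-\eta_\lambda \hat g(\theta^{u+1})$, so
\begin{equation*}
\eta_\lambda \hat g(\theta^{u+1})\geq \lambda^u-\lambda^{u+1}.
\end{equation*}
Summing from $u=0$ to $u=k$ telescopes to $\eta_\lambda\sum_{u=0}^{k}\hat g(\theta^{u+1})\geq \lambda^0-\lambda^{k+1}\geq -\lambda^{k+1}$. Dividing by $\eta_\lambda(k+1)$ and taking expectations yields
\begin{equation*}
\frac{1}{k+1}\sum_{u=0}^{k}\mathbb{E}[\hat g(\theta^{u+1})]\ \geq\ -\frac{\mathbb{E}[\lambda^{k+1}]}{\eta_\lambda(k+1)}.
\end{equation*}

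Next I would note that $\hat g(\theta^{u+1})$ is an unbiased estimator of $g(\theta^{u+1})$, so conditioning on $\theta^{u+1}$ and applying the towering property gives $\mathbb{E}[\hat g(\theta^{u+1})]=\mathbb{E}[g(\theta^{u+1})]=\mathbb{E}[\mathbb{P}(\cap_t\{S_t\in\mathcal{S}_\text{safe}\}\mid\pi_{\theta^{u+1}})]-(1-\delta)$. Substituting and rearranging reduces the statement to showing that $\mathbb{E}[\lambda^{k+1}]/(k+1)\to 0$.

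The main technical step is this last control on $\mathbb{E}[\lambda^{k+1}]$. I would reuse the one-step inequality established in the proof of Theorem~\ref{proposition_dual_converge}, namely
\begin{equation*}
\mathbb{E}[(\lambda^{u+1}-\lambda^\star)^2]\leq \mathbb{E}[(\lambda^{u}-\lambda^\star)^2]+\eta_\lambda^2(1-\delta)^2+2\eta_\lambda(D^\star-\mathbb{E}[d(\lambda^{u})]+\epsilon).
\end{equation*}
Because $\lambda^\star$ minimizes $d$, the term $D^\star-\mathbb{E}[d(\lambda^{u})]$ is non-positive and may be discarded. Iterating the resulting bound from $0$ to $k$ yields $\mathbb{E}[(\lambda^{k+1}-\lambda^\star)^2]\leq \mathbb{E}[(\lambda^0-\lambda^\star)^2]+(k+1)(\eta_\lambda^2(1-\delta)^2+2\eta_\lambda\epsilon)=O(k)$. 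Combining with $(\lambda^{k+1})^2\leq 2(\lambda^{k+1}-\lambda^\star)^2+2(\lambda^\star)^2$ and Jensen's inequality gives $\mathbb{E}[\lambda^{k+1}]\leq \sqrt{\mathbb{E}[(\lambda^{k+1})^2]}=O(\sqrt{k})$, hence $\mathbb{E}[\lambda^{k+1}]/(k+1)\to 0$.

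Taking the liminf of the averaged inequality therefore produces $\liminf_k \frac{1}{k+1}\sum_{u=0}^{k}\mathbb{E}[g(\theta^{u+1})]\geq 0$, which is exactly the claim up to a harmless index shift by one in the Ces\`aro average (since shifting a single term does not affect a liminf of averages). The hard part is genuinely the $o(k)$ control on $\mathbb{E}[\lambda^{k+1}]$; the telescoping step is automatic, and the unbiasedness of $\hat g$ under $\pi_{\theta^{u+1}}$ is the only place where the stochastic character of the algorithm enters.
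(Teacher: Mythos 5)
Your proposal is correct, and it shares the paper's skeleton (non-expansiveness of the projection, telescoping the dual update, dividing by $\eta_\lambda(k+1)$, and using the unbiasedness of $\hat g$), but it diverges in the one genuinely technical step: showing $\mathbb{E}[\lambda^{k+1}]/(k+1)\to 0$. The paper handles this via Lemma~\ref{lemma_lam_bounded}, which assumes a Slater-type condition (a strictly feasible policy with $g(\tilde\pi_\theta)\geq C>0$ and bounded value), bounds any $\lambda$ by $\bigl(d(\lambda)-V(\tilde\pi_\theta)\bigr)/C$, invokes the averaged dual bound \eqref{eqn_convexity1} together with convexity of $d$, and finishes with Stolz--Ces\`aro to get the stronger conclusion $\limsup_k\mathbb{E}[\lambda^k]<\infty$. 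You instead recycle the one-step recursion on $\mathbb{E}[(\lambda^u-\lambda^\star)^2]$ from the proof of Theorem~\ref{proposition_dual_converge}, drop the non-positive term $D^\star-\mathbb{E}[d(\lambda^u)]$ (valid since $d(\lambda^u)\geq D^\star$ for $\lambda^u\geq 0$), and conclude $\mathbb{E}[(\lambda^{k+1}-\lambda^\star)^2]=O(k)$, hence $\mathbb{E}[\lambda^{k+1}]=O(\sqrt{k})$ by Jensen, which suffices. This buys you something real: your argument uses only the stated hypotheses of Theorem~\ref{proposition_dual_converge} (finite $\lambda^\star$ and the $\epsilon$-accurate Ideal primal step feeding Lemma~\ref{lemma_primal_error}), whereas the paper's route quietly imports the strict-feasibility assumption of Lemma~\ref{lemma_lam_bounded}, which is not listed in the theorem statement; the trade-off is that the paper's lemma yields boundedness of $\mathbb{E}[\lambda^k]$ rather than mere $o(k)$ growth, which is of independent interest. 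Your closing remark about the index shift is also fine: since $\hat g$ and $g$ take values in $[-(1-\delta),\delta]$, replacing $\theta^{u+1}$ by $\theta^u$ changes the Ces\`aro average by $O(1/k)$ and leaves the liminf unchanged (the paper's own proof has the same shift and does not comment on it).
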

\begin{proof}
Consider $\hat{g}(\cdot)$ as defined in \eqref{eqn_def_g_hat}. By virtue of the non-expansiveness
of the projection, \eqref{eqn_dual_update} can be re-written as
\begin{align}
    \lambda^{k+1} \geq \lambda^k- \eta_{\lambda}\hat{g}(\theta^{k+1}).
\end{align}
Using the previous upper bound recursively yields
\begin{align}
    \lambda^{k+1} \geq \lambda^0- \eta_{\lambda}\sum_{u=0}^{k} \hat{g}(\theta^{u+1}).
\end{align}
Rearranging the previous inequality and dividing by $\eta_\lambda (k+1)$ on both sides results in
\begin{align}
     \frac{1}{k+1} \sum_{u=0}^{k} \hat{g}(\theta^{u+1}) \geq  \frac{\lambda^0 - \lambda^{k+1}}{\eta_{\lambda} (k+1)}.
\end{align}
As described in \eqref{eqn_def_g} and \eqref{eqn_def_g_hat}, $\hat{g}(\cdot)$ is an unbiased estimate of $g(\cdot)$. Subsequently, taking the expectation on the whole sequence in both sides of the previous inequality yields
\begin{align}\label{eqn_result_theo6}
     \frac{1}{k+1} \sum_{u=0}^k \mathbbm{E}_{\theta^{u+1}} \left[g(\theta^{u+1}) \right] \geq  \frac{ \mathbbm{E} \left[ \lambda^0 -  \lambda^{k+1} \right]}{\eta_{\lambda} (k+1)}.
\end{align}
Since $\lambda^0, \eta_\lambda$ are bounded constant, by virtue of Lemma~\ref{lemma_lam_bounded} in Appendix~\ref{appendix_lemma_primal_error} the limit inferior of the right-hand side of \eqref{eqn_result_theo6} is zero.
Then, the definition of $g(\cdot)$ as shown in \eqref{eqn_def_g} completes the proof of the result.
\end{proof}
%%%%%%%%%%%%%%%%%%
%%%%%%%%%%%%%%%%%%
%%%%%%%%%%%%%%%%%%
%%%%%%%%%%%%%%%%%%
It is important to acknowledge that Theorem~\ref{theorem_feasibility} substantiates on average the expected feasibility of the sequence generated by Algorithm~\ref{alg_pd}, even the sequence possesses the potential to approach the optimal value $P^\star$. Having proposed the Safe Primal-Dual (Algorithm~\ref{alg_pd}), and conducted the analyses regarding the convergence, near-optimality, and feasibility. In the subsequent section, we will evidently evaluate Algorithm~\ref{alg_pd} as well as the regularization method, and validate the theoretical assertions made in the preceding sections.
\section{Numerical Results}
\label{Numerical_Experiments}
In this section, we demonstrate the numerical performance of the methods presented in Section~\ref{Losses_of_Imposing_Relaxation}. To do so we consider a navigation task in an environment with hazardous obstacles (see Figure~\ref{configuration}), the lunar lander problem in OpenAI Gym~\cite{brockman2016openai} characterized by complex dynamics (see Figure~\ref{lunar_lander}), and the safe RL benchmark simulator Safety Gym~\cite{ray2019benchmarking}.
% \red{
% In Section~\ref{Losses_of_Imposing_Relaxation}, we have proposed two SPGs as in \eqref{eqn_spg}, the corresponding regularization method as in \eqref{update_rule_cumulative} and \eqref{update_rule_probabilistic}, and the primal-dual method as in Algorithm~\ref{alg_pd}. In this section, to study the behavior of these proposed gradients and methods as well as to validate the claims in Theorem~\ref{theorem_P_star_Ptilder_star} and Theorem~\ref{theorem_lower_variance}, we consider a navigation task in an environment filled with hazardous obstacles (See Figure~\ref{configuration}) and a lunar lander problem in OpenAI Gym framework~\cite{brockman2016openai} characterized by complex dynamics (See Figure~\ref{lunar_lander}).}
%
\subsection{Navigation in a cluttered environment}
To validate the effectiveness of Algorithm~\ref{alg_pd} (Safe Primal-Dual) as well as the claims in Theorem~\ref{theorem_P_star_Ptilder_star} and Theorem~\ref{theorem_lower_variance}, we consider the problem of navigating an environment cluttered with obstacles as illustrated in Figure~\ref{configuration}. The state of the agent is the continuous $\mathcal{S}=[0, 10] \times [0, 10]$ representing the horizontal ($x$) and vertical ($y$) positions.

\begin{figure}
\centering
\includegraphics[width=0.6\columnwidth]{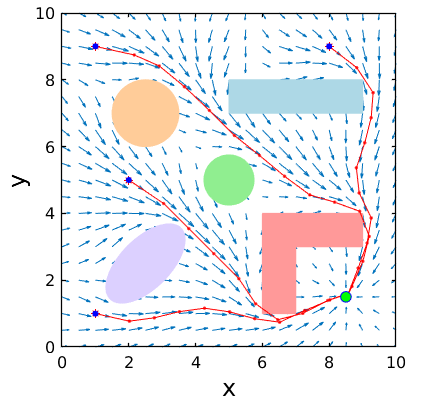}
\caption{Navigation policy learned after 250,000 episodes of training employing Algorithm~\ref{alg_pd} with SPG-REINFORCE and the practical version in Step 4 where $\eta_\theta = 0.02$, $\eta_\lambda = 0.002$, $1-\delta=0.95$. The agent is trained to navigate starting from (1, 1), (1, 9), (2, 5), (8, 9) to the goal (8.5, 1.5).}
\label{configuration}
\end{figure} 
The agent's action $a$ is a two-dimensional velocity. For a given state and action at time $t$, the state evolves according to $s_{t+1}=s_t+ a_t T_s$ with $T_s=0.05$. The policy of the agent is a multivariate Gaussian distribution
\begin{align}\label{gaussian_policy}
    \pi_\theta (a | s)\! =\! \frac{1}{\sqrt{(2\pi)^2 |\Sigma|}} \exp \! \left(\!\! -\frac{1}{2} \left(a-\phi_\theta(s)\right)^\top \!\! \Sigma^{-1} \!\! \left(a-\phi_\theta(s)\right) \! \! \right),
\end{align}
where $\phi_\theta(s)$ and $\Sigma$ denote the mean and covariance matrix of the Gaussian policy. We parameterize $\phi_\theta(s)$ as a linear combination of Radial Basis Functions (RBFs)
\begin{align}\label{RBF}
    \phi_\theta (s) = \sum_{k=1}^{d} \theta_k \exp \left(-\frac{||s-\Bar{s}_k||^2}{2\sigma^2} \right),
\end{align}
where $\theta = [\theta_1, \theta_2, \cdots, \theta_d]^\top$ are parameters that need to be learned, $\Bar{s}_k$ are centers of each RBFs kernel and $\sigma$ their bandwidth. In this experiment we set $\Sigma = \text{diag} (0.5, 0.5)$, $\sigma =0.5$, $d=441$  and $\Bar{s}_k = (x_k, y_k),  k=1, 2, \cdots, 441$ where $\Bar{s}_k$ forms a 2D uniform lattice with separation $0.5$.

The goal of the agent is to reach a goal position $s_{goal}=(8.5, 1.5)$ within the time horizon $T=20$, while avoiding the obstacles. To incentivize the agent to reach the goal, the reward function is formulated as the negative squared distance between the agent's current position and the goal position, i.e., $r(s_t)= -\lVert s_t - s_{goal} \rVert^2$. Moreover, the safe set $\mathcal{S}_\text{safe}$ is represented by the whole map/state space excluding regions of 5 obstacles in order to avoid the obstacles.

\begin{figure*}
\centering
\subfigure[Time Average Return]{
\begin{minipage}[t]{0.25\linewidth}
\centering
\includegraphics[width=\textwidth]{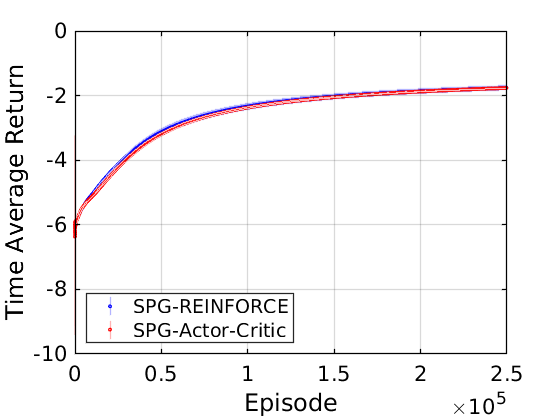}
%\caption{fig1}
\end{minipage}%
}%
\subfigure[Time Average Safety]{
\begin{minipage}[t]{0.25\linewidth}
\centering
\includegraphics[width=\textwidth]{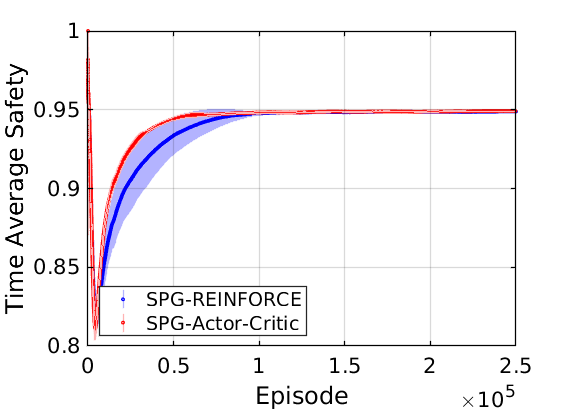}
%\caption{fig2}
\end{minipage}%
}%
\subfigure[Time Average Dual Variable (Lambda)]{
\begin{minipage}[t]{0.25\linewidth}
\centering
\includegraphics[width=\textwidth]{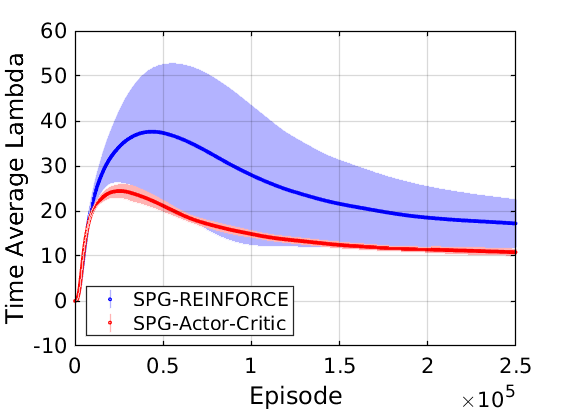}
%\caption{fig2}
\end{minipage}
}%
\subfigure[Visualization of $\hat{\mathbb{P}}$]{
\begin{minipage}[t]{0.25\linewidth}
\centering
\includegraphics[width=\textwidth]{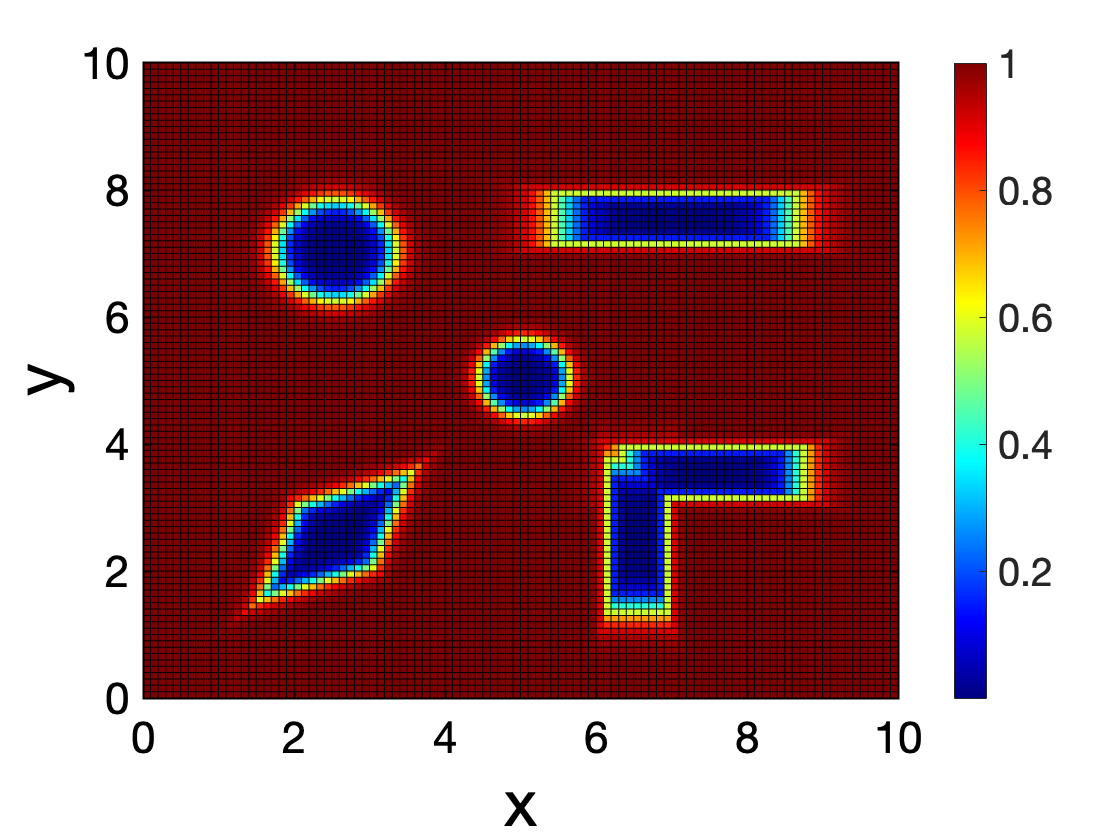}
%\caption{fig2}
\end{minipage}
}%
\centering
\caption{ Implementation of Algorithm~\ref{alg_pd} with the practical version in Step 4 for SPG-REINFORCE and SPG-Actor-Critic over 5 independent runs. Return (normalized by time horizon $T$), safety, and dual variable $\lambda$ are averaged over time. The hyper-parameters in both cases are set up as  $\eta_\theta = 0.02, \eta_\lambda = 0.002, 1-\delta=0.95, T=20$. Number of episodes is set to be 250,000. The solid line shows the mean and the shaded area depicts the standard deviation. }
\label{fig_primal_dual}
\end{figure*}

Having established the general RL settings as above, we are in the stage of validating Algorithm~\ref{alg_pd} with the two proposed SPGs. We implement Algorithm~\ref{alg_pd} employing both SPG-REINFORCE and SPG-Actor-Critic with the practical version in Step 4. In both implementations, the parameter $\theta$ is initialized as a zero vector in the $d$-dimensional real space. The initial value of $\lambda$ is set to be 0. The primal and dual step-sizes are set up as $\eta_\theta =0.02$ and $\eta_\lambda =0.002$. The desired level of safety, expressed as $1-\delta$, is set to be 0.95. The total number of iterations for the algorithm is established as 250,000.

To implement the stochastic SPG-Actor-Critic described in
\eqref{eqn_corro_ac_estimator}, estimating $\mathbb{E} [G_{t+1} |S_t, A_t]$ is the sole necessity due to the fact that $G_t^c$ and $\nabla_\theta \log \pi_\theta(A_t | S_t)$ can be computed easily from the historical data. By virtue of \eqref{def_G_cumulative_product}, we can rewrite $\mathbb{E} [G_{t+1} |S_t, A_t]$ as
\begin{align}
\mathbb{E} [G_{t+1} |S_t, A_t] = \mathbb{E} \left[ \prod_{u=t+1}^T\mathbbm{1}\left(S_u\in\mathcal{S}_{\text{safe}}\right) |S_t, A_t \right].
\end{align}
By the definition of indicator function, the previous equation is equivalent to
\begin{align}
\mathbb{E} [G_{t+1} |S_t, A_t] = \mathbb{P} \left(\bigcap\limits_{u=t+1}^{T} \{ S_u \in \mathcal{S}_\text{safe}\} |S_t, A_t  \right).
\end{align}
Therefore, $\mathbb{E} [G_{t+1} |S_t, A_t]$ is essentially the probability of agent being safe from $t+1$ to the end of the episode. Denote by $\hat{\mathbb{P}}$ the estimate of this probability. We next construct such an estimate as follows
\begin{align}\label{eqn_estimate_Gt1}
    \hat{\mathbb{P}} = \text{Sigmoid} \left(H_1 \left(\min_{i=1,\ldots,5} d_{i} -H_2 \right) \right),
\end{align}
where Sigmoid function ensures the result of \eqref{eqn_estimate_Gt1} to be between 0 and 1. $d_{i} (i=1, \ldots, 5)$ denotes the distance between the agent and the boundary of $i$-th obstacle. $H_1$ and $H_2$ represent two parameters that need to be learned within the activation function, which regulate the output of \eqref{eqn_estimate_Gt1} based on the distance between the agent and the obstacle. The underlying objective is to determine optimal values for $H_1$ and $H_2$ that lead to the estimate $\hat{\mathbb{P}}$ approaching 1 when the agent is located far away from any obstacles. However, as the agent moves closer to an obstacle, the estimate should gradually diminish. To learn $H_1$ and $H_2$, we minimize the loss function as follows with respect to $H_1$ and $H_2$, respectively
\begin{align}
    \text{loss} = \sum_{t=0}^{T} \left( \mathbb{E} [G_{t+1} |S_t, A_t] \!-\! \prod_{u=t+1}^T\mathbbm{1}\left(S_u\in\mathcal{S}_{\text{safe}}\right) \right)^2.
\end{align}

We now reach a point where we can evaluate the performance of Algorithm~\ref{alg_pd} with SPG-REINFORCE and SPG-Actor-Critic. Figure~\ref{fig_primal_dual} depicts the training curves in this setting across five independent runs. Figure~\ref{fig_primal_dual} (a), (b), and (c) illustrate the evolution of the time average return, safety (constraint), and dual variable $\lambda$, respectively. Depicted in red represents Algorithm~\ref{alg_pd} with SPG-Actor-Critic and depicted in blue employs SPG-REINFORCE. In all subfigures, the solid line shows the mean and the shaded area depicts the standard deviation. Both cases observe the convergence of the time average return with a comparable convergence rate and small standard deviation, settling at a value of approximately -2. This observation can be rationalized by the fact that the gradient of the reward component in both cases follows the identical expression as described in \eqref{eqn_classical_PG_estimator}.

The time average safety steadily converges towards the desired safety level, specifically $1-\delta=0.95$. Additionally, the time average dual variables $\lambda$ undergo a discernible convergent process as well, reaching a stable value of approximately 20 for SPG-REINFORCE and 10 for SPG-Actor-Critic. Notice that in Figure~\ref{fig_primal_dual} (b) and (c), SPG-Actor-Critic showcases a faster convergence rate and lower variance than SPG-REINFORCE, which substantiates our theoretical analysis in Theorem~\ref{theorem_lower_variance}. Figure~\ref{fig_primal_dual} (d) depicts the visualization of $\hat{\mathbb{P}}$ with the learned $H_1$ and $H_2$. As anticipated, the probability tends to converge to 1 when the agent is situated at a considerable distance from all obstacles. Conversely, the probability gradually diminishes and approaches 0 as the agent moves closer to any of the obstacles.

In order to showcase the effectiveness of Algorithm~\ref{alg_pd} in identifying optimal trajectories, we visualize the learned policy in Figure~\ref{configuration} (use SPG-REINFORCE as an example). Several sample trajectories have been displayed starting from (1, 1), (1, 9), (2, 5), and (8, 9) while adhering to the probabilistic safety constraints. This numerical example demonstrates the successful application of our proposed Safe Primal-Dual algorithm in a continuous navigation problem.

\begin{figure*}
\centering 
\subfigure[parameterized by \eqref{RBF}, $T=20$, 5 obstacles]
{
\begin{minipage}{8cm}
\centering    
\includegraphics[scale=0.4]{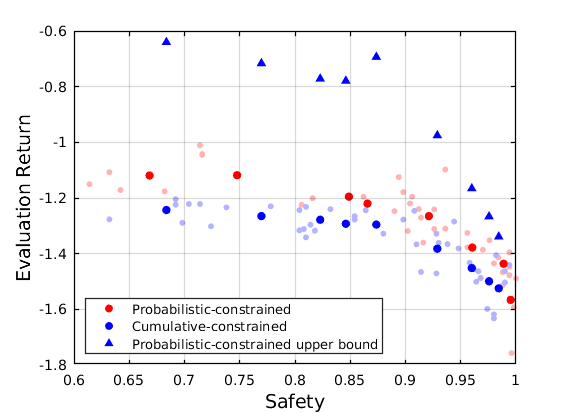}  
\end{minipage}
}
\subfigure[parameterized by \eqref{eqn_new_parameterization2}, $T=100$, 1 obstacle]
{
	\begin{minipage}{8cm}
	\centering 
	\includegraphics[scale=0.4]{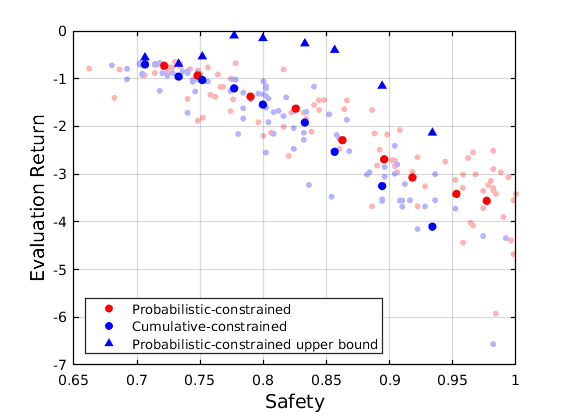}  
	\end{minipage}
}
\caption{
In both (a) and (b), depicted in blue dots is the safety-return for the cumulative-constrained formulation for different values ($\mu\in[40, 250]$ for (a) and $\mu\in[50, 3000]$ for (b)) and depicted in blue triangles is the corresponding upper bound given by Theorem~\ref{theorem_P_star_Ptilder_star}. Depicted in red dots is the safety-return for the probabilistic-constrained formulation for different values ($\lambda\in[10, 50]$ for (a) and $\lambda\in[50, 1500]$ for (b)). Both (a) and (b) observe that the probabilistic-constrained cases trace a better trade-off of evaluation return and safety, and are upper bounded by the blue triangles. Each large dot represents the mean under a fixed $\lambda$ or $\mu$. Each evaluation return (small dot) is averaged over 500 independent evaluations and the return is normalized by time horizon $T$.}
\label{fig_safety_reward_trade}
\end{figure*}
%
%
%

% \begin{figure*}
% \centering 
% \subfigure[]
% {
% \begin{minipage}{3cm}
% \centering    
% \includegraphics[scale=0.3]{figures/1return.png}  
% \end{minipage}
% }
% \subfigure[]
% {
% 	\begin{minipage}{3cm}
% 	\centering 
% 	\includegraphics[scale=0.3]{figures/2constraint.png}  
% 	\end{minipage}
% }
% \subfigure[]
% {
% 	\begin{minipage}{3cm}
% 	\centering 
% 	\includegraphics[scale=0.25]{figures/2constraint.png}  
% 	\end{minipage}
% }
% \subfigure[]
% {
% \begin{minipage}{3cm}
% \centering    
% \includegraphics[scale=0.1]{figures/4plotQ_3D.png}  
% \end{minipage}
% }
% \caption{Learning of the navigation problem employing SPG-REINFORCE and SPG-Actor-Critic with $\lambda=50$ over 5 runs. The solid line shows the mean and the shaded area depicts the standard deviation. (a). Evolution of the time averaged return with respect to episode.  (b). Evolution of the time averaged constraint with respect to episode. (c). Visualization of the estimation of $Q$ in SPG-Actor-Critic. SPG-Actor-Critic showcases a smaller variance than SPG-REINFORCE in terms of safety in (b).} 
% \label{fig_reinforce_ac}
% \end{figure*}

%
%
%
%
%
%
%
%
%
%
%
%
%
As discussed in Section~\ref{sec_problem_formulation}, $\mu$ and $\lambda$ trade-off performance in task completion (optimality) and safety. We run the regularization method under a fixed penalty with different weights. 
% For different values of $\lambda$ and $\mu$ different optimal step-sizes and total number of iterations can be used which results in faster or slower convergence of the algorithms. The worst case scenario requires $\eta_\theta = 0.004$ and needs 130,000 episodes. 
The safety in this example is the fraction of safe episodes (with all their states in the safe set $\mathcal{S_{\text{safe}}}$) that the trained policy yields over 500 independent episodes. The initial state is uniformly drawn from the safe set for each episode. Similarly, we evaluate the return by averaging the cumulative reward across the evaluation episodes. The scatter plot in Figure~\ref{fig_safety_reward_trade}(a) depicts the results of the safety and evaluation return under different values of $\mu \in [40, 250]$ and $\lambda \in [10, 50]$. Depicted in blue dots is the safety-return for the cumulative-constrained formulation (i.e., problem \eqref{eqn_problem2_mirror}), where unconstrained RL algorithms can directly apply through constructing a reward shaping $r_{\lambda}(S_t, A_t) = r(S_t, A_t) + \lambda /(T+1) \mathbbm{1} \left( S_t \in \mathcal{S}_\text{safe} \right)$. Depicted in blue triangles is the corresponding upper bound given by Theorem~\ref{theorem_P_star_Ptilder_star}, and 
depicted in red dots is the safety-return for the probabilistic-constrained formulation (i.e., problem \eqref{eqn_problem1}). Each large dot represents the mean under a fixed $\lambda$ or $\mu$. Observe that in both formulations, the higher the desired safety, the lower the average return is. Despite this common trend, for the same level of safety, the probabilistic-constrained problem yields an overall higher average return. This result is in accordance with the theoretical bounds provided in Theorem~\ref{theorem_P_star_Ptilder_star}.

Despite yielding a larger return, the probabilistic formulation still has upper bounds depicted as blue triangles in Figure~\ref{fig_safety_reward_trade}(a). These bounds are computed from $\hat{P}^\star (\text{blue dot}) + \hat{\lambda}^\star \delta T / (T+1)$. This result confirms the claim made in Theorem~\ref{theorem_P_star_Ptilder_star} as well. However, this bound could be conservative, as depicted in Figure~\ref{fig_safety_reward_trade}(a). On the other hand, the bound $\hat{\lambda}^\star \delta T / (T+1)$ demonstrates a tendency to contract as the level of safety increases. This intriguing observation can be attributed to the utilization of our well-designed parametrization (RBFs). As the level of safety gradually approaches 1 (i.e., $\delta \to 0$), it becomes evident that $\hat{\lambda}^\star$ does not need to be excessively large, indicating that the problem can be tackled relatively effortlessly. Consequently, the bound $\hat{\lambda}^\star \delta T / (T+1)$ exhibits a decreasing trend.

Next, we are intrigued by the behavior of the bound $\hat{\lambda}^\star \delta T / (T+1)$ in a ``challenging'' problem, or alternatively, when the policy's parametrization is not as optimal as RBFs. In Figure~\ref{fig_safety_reward_trade}(b), we consider the same navigation task except for setting $T=100$ and placing only one circular obstacle centered at (5, 5) with a radius $r=2$. In addition, we instead parameterize $\phi_\theta(s)$ as
\begin{align}\label{eqn_new_parameterization2}
    \phi_\theta(s) = \alpha(s) \theta_1^\top M(s) + (1-\alpha(s)) \theta_2^\top M(s),
\end{align}
where $\theta_1, \theta_2 \in \Real^{4 \times 2}$ denote the parameter matrices that need to be learned, and $M_s \in \Real^{4 \times 1}$ and $\alpha_s \in [0, 1]$ relates to the state $s=(x, y)$ as follows
\begin{align}\label{eqn_new_parameterization}
    &M(s) = [x-x_g, \, y-y_g, \, x-x_o, \, y-y_o]^\top,  \\
    &\alpha(s) = \text{Sigmoid} \left(H_1' \left(\sqrt{ (x-x_o)^2 + (y-y_o)^2}/r -H_2' \right) \right), \nonumber
\end{align}
where $(x_g, y_g), (x_o, y_o)$ represent the coordinates of the goal position and the obstacle, respectively. The hyper-parameters $H_1'$ and $H_2'$ serve a similar purpose to $H_1$ and $H_2$ in \eqref{eqn_estimate_Gt1}.
%
% In the experiment of Figure~\ref{fig_safety_reward_trade}(b), $H_1' = 50, H_2'=1.05$. 
By manipulating $\alpha(s)$, we ensure that it approaches 1 when the agent is far from the obstacle, and approaches 0 as the agent gets closer to the obstacle. Consequently, as $\alpha(s)$ approaches 1, the significance of $\theta_1$ becomes predominant in \eqref{eqn_new_parameterization2} which prioritizes maximizing rewards due to the agent being located far away from the obstacle. On the other hand, when $\alpha(s)$ approaches 0, $\theta_2$ takes precedence in \eqref{eqn_new_parameterization2}, prompting the agent to concentrate on avoiding the obstacle.

Figure~\ref{fig_safety_reward_trade}(b) depicts a similar observation as shown in Figure~\ref{fig_safety_reward_trade}(a). This empirical evidence further strengthens the validation of the theoretical bounds established in Theorem~\ref{theorem_P_star_Ptilder_star}. However, in contrast to Figure~\ref{fig_safety_reward_trade}(a), the upper bound $\hat{\lambda}^\star \delta T / (T+1)$ in Figure~\ref{fig_safety_reward_trade}(b) does not exhibit a noticeable trend of contraction as the safety level increases. The reason for this disparity lies in the suboptimal nature of the parametrization utilized in \eqref{eqn_new_parameterization2} compared to the more effective employment of RBFs, which consequently renders the navigation problem more challenging to solve. In comparison to RBFs, the novel parametrization in \eqref{eqn_new_parameterization2}  necessitates a higher value of $\hat{\lambda}^\star$ for a given safety level.
To conclude, we have verified the effectiveness of Algorithm~\ref{alg_pd} using a navigation task cluttered with obstacles, as demonstrated in Figure~\ref{configuration}. Moreover, the theoretical claims in Theorem~\ref{theorem_P_star_Ptilder_star} and Theorem~\ref{theorem_lower_variance} have been substantiated by the experiments implemented in Figure~\ref{fig_safety_reward_trade} and Figure~\ref{fig_primal_dual}, respectively.

%%%%%%%%%%%%%%%%%%%%%%%%
%%%%%%%%%%%%%%%%%%%%%%%%
%%%%%%%%%%%%%%%%%%%%%%%%
%%%%%%%%%%%%%%%%%%%%%%%%
%%%%%%%%%%%%%%%%%%%%%%%%
%%%%%%%%%%%%%%%%%%%%%%%%
%%%%%%%%%%%%%%%%%%%%%%%%
%%%%%%%%%%%%%%%%%%%%%%%%
%%%%%%%%%%%%%%%%%%%%%%%%
%%%%%%%%%%%%%%%%%%%%%%%%
%%%%%%%%%%%%%%%%%%%%%%%%
%%%%%%%%%%%%%%%%%%%%%%%%
%%%%%%%%%%%%%%%%%%%%%%%%
%%%%%%%%%%%%%%%%%%%%%%%%
%%%%%%%%%%%%%%%%%%%%%%%%
%%%%%%%%%%%%%%%%%%%%%%%%
%%%%%%%%%%%%%%%%%%%%%%%%
%%%%%%%%%%%%%%%%%%%%%%%%
%%%%%%%%%%%%%%%%%%%%%%%%
%%%%%%%%%%%%%%%%%%%%%%%%
%%%%%%%%%%%%%%%%%%%%%%%%
%%%%%%%%%%%%%%%%%%%%%%%%
%%%%%%%%%%%%%%%%%%%%%%%%
%%%%%%%%%%%%%%%%%%%%%%%%
%%%%%%%%%%%%%%%%%%%%%%%%
%%%%%%%%%%%%%%%%%%%%%%%%
%%%%%%%%%%%%%%%%%%%%%%%%
%%%%%%%%%%%%%%%%%%%%%%%%
%%%%%%%%%%%%%%%%%%%%%%%%
%%%%%%%%%%%%%%%%%%%%%%%%
%%%%%%%%%%%%%%%%%%%%%%%%
%%%%%%%%%%%%%%%%%%%%%%%%
%%%%%%%%%%%%%%%%%%%%%%%%
%%%%%%%%%%%%%%%%%%%%%%%%
%%%%%%%%%%%%%%%%%%%%%%%%
%%%%%%%%%%%%%%%%%%%%%%%%
%%%%%%%%%%%%%%%%%%%%%%%%
%%%%%%%%%%%%%%%%%%%%%%%%
%%%%%%%%%%%%%%%%%%%%%%%%
%%%%%%%%%%%%%%%%%%%%%%%%
%%%%%%%%%%%%%%%%%%%%%%%%
%%%%%%%%%%%%%%%%%%%%%%%%
%%%%%%%%%%%%%%%%%%%%%%%%
%%%%%%%%%%%%%%%%%%%%%%%%
%%%%%%%%%%%%%%%%%%%%%%%%
%%%%%%%%%%%%%%%%%%%%%%%%
%%%%%%%%%%%%%%%%%%%%%%%%
%%%%%%%%%%%%%%%%%%%%%%%%
%%%%%%%%%%%%%%%%%%%%%%%%
%%%%%%%%%%%%%%%%%%%%%%%%
%%%%%%%%%%%%%%%%%%%%%%%%
%%%%%%%%%%%%%%%%%%%%%%%%
%%%%%%%%%%%%%%%%%%%%%%%%
%%%%%%%%%%%%%%%%%%%%%%%%
%%%%%%%%%%%%%%%%%%%%%%%%
%%%%%%%%%%%%%%%%%%%%%%%%
%%%%%%%%%%%%%%%%%%%%%%%%
%%%%%%%%%%%%%%%%%%%%%%%%
%%%%%%%%%%%%%%%%%%%%%%%%
%%%%%%%%%%%%%%%%%%%%%%%%
%%%%%%%%%%%%%%%%%%%%%%%%
%%%%%%%%%%%%%%%%%%%%%%%%
%%%%%%%%%%%%%%%%%%%%%%%%
%%%%%%%%%%%%%%%%%%%%%%%%
\subsection{Lunar Lander}
Having numerically validated the efficacy of our previously proposed SPGs, Algorithm~\ref{alg_pd}, Theorem~\ref{theorem_P_star_Ptilder_star} and Theorem~\ref{theorem_lower_variance}, we further explore the performance of Algorithm~\ref{alg_pd} with SPG-REINFORCE and SPG-Actor-Critic in complex-dynamics scenarios. Here we consider a $\emph{LunarLander-v2}$ problem in OpenAI Gym toolkit~\cite{brockman2016openai}, which is visualized in Figure~\ref{lunar_lander}. The lander (spacecraft) subject to gravity, uses its thrusters to land on a landing pad (the region between two yellow flags) softly and fuel-efficiently. Surfaces on the “moon” are generated randomly in each episode, while the goal position is fixed at (0,0). The observation space encompasses a combination of six continuous state variables and two Booleans, specifically $(x, y, \dot{x}, \dot{y}, \theta, \dot{\theta}, \text{left\_leg}, \text{right\_leg})$ where $x, y$ denote the coordinates of the lander, $\dot{x}, \dot{y}$ denote the horizontal and vertical velocity, $\theta$ denotes the attitude, $ \dot{\theta}$ denotes angular velocity and the two Booleans indicate whether left/right leg is in contact with the ground. Moreover, there are four actions available in the action space: do nothing, fire main engine, fire left engine, fire right engine. Note that the lander can only select one of four actions at any given time. We refer to Figure~\ref{lunar_lander} for graphical representations.

% The dynamics of the lunar lander under the defined gravity $g$ and with reference to a Cartesian coordinate system with origin in the target point can be written as follows
% \begin{align}
%     &\ddot{x} = \frac{1}{m} ( \text{F}_\text{main} \sin(\theta)  - \text{F}_\text{side} \cos(\theta) ),  \nonumber \\
%     &\ddot{y} =  \frac{1}{m} ( \text{F}_\text{main} \cos(\theta) +  \text{F}_\text{side} \sin(\theta)  ) - g,  \nonumber \\
%     &\ddot{\theta} = \frac{ \text{F}_\text{side}  a}{I},
% \end{align}
% where $\text{F}_\text{main}$ is the main thrust, $\text{F}_\text{side}$ is net the side thrust, $m$ is the mass of the lander, $I$ is the lander's moment of inertia, and $a$ is the offset between side thrust axes and lander's center. 

In this example, we consider a Softmax policy
\begin{align}
\label{soft_max_policy}
\pi_\theta (a | s) =  \frac{ e^{\phi(s,a | \theta)}}{ \sum_{k=1}^4  e^{\phi(s,a_k | \theta)}},
\end{align}
where $\phi(s,a | \theta)$ is represented by the output of a fully-connected neural network (NN) and $\theta$ denotes the weights of NN. More specifically, NN that is used for solving $\emph{LunarLander-v2}$ consists of two hidden layers with 400 neurons in the first layer and 300 neurons in the second layer.

In each step of  $\emph{LunarLander-v2}$, as the lander approaches the landing pad, the reward increases, and as it moves away, the reward decreases. Similarly, the reward is increased/decreased the slower/faster the lander is moving, and the reward decreases as the lander tilts more.
In addition, each leg-ground contact earns a point of +10. Firing the main engine and side engine obtain -0.3 points and -0.03 points respectively for each frame. The episode finishes if the agent crashes or comes to rest, receiving additional -100/+100 points. The problem is deemed solved when achieving an averaged score over 200 points across 100 consecutive landing attempts. With this reward design, the agent is encouraged to reduce its distance to the landing pad, to decrease its speed smoothly, and to keep the angular speed at minimum to prevent rotation.
\begin{figure}
\centering
\includegraphics[width=0.7\columnwidth]{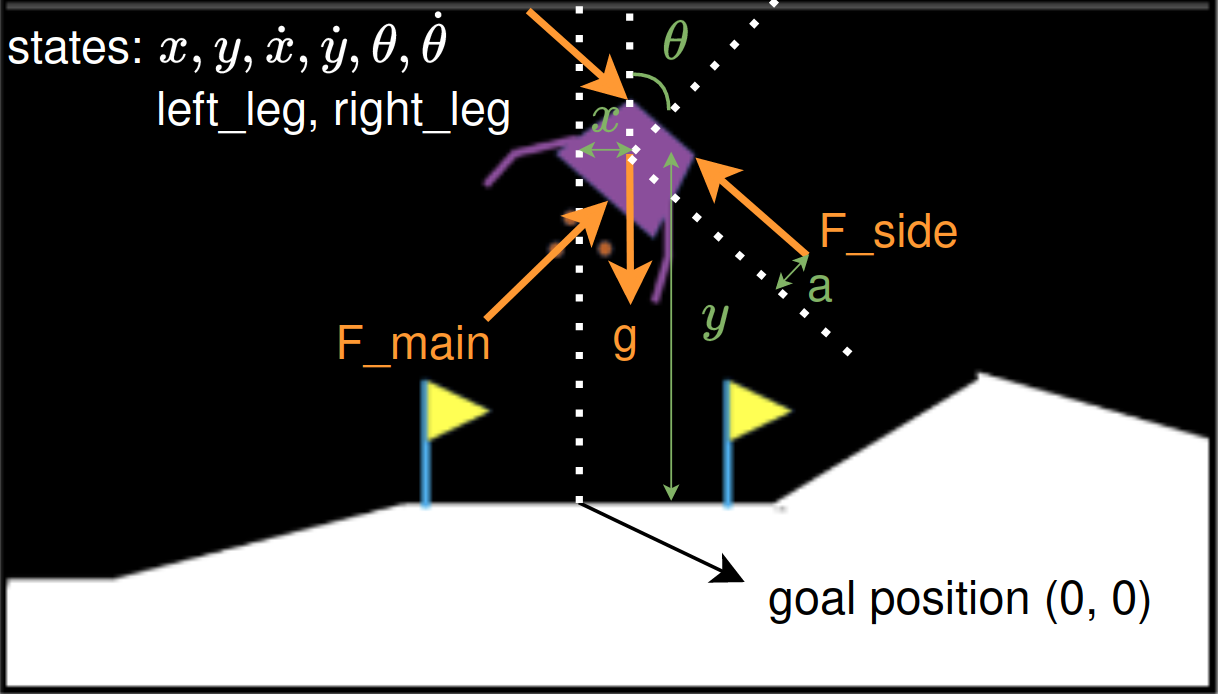}
\caption{Visualization of $\emph{LunarLander-v2}$. The state space consists of 6 continuous variables: horizontal coordinate $x$, vertical coordinate $y$, horizontal velocity $\dot{x}$, vertical velocity $\dot{y}$, angle $\theta$, angular velocity $\dot{\theta}$ and two Booleans for left and right legs indicating whether in contact with the ground. The goal position is fixed at (0, 0). The orange illustrates the main and side thrusts and gravity force configuration, in which $a$ is the offset between side thrust axes and lander's center.}
\label{lunar_lander}
\end{figure}

\begin{figure*}
\centering 
\subfigure[]
{
\begin{minipage}{8.5cm}
\centering    
\includegraphics[scale=0.4]{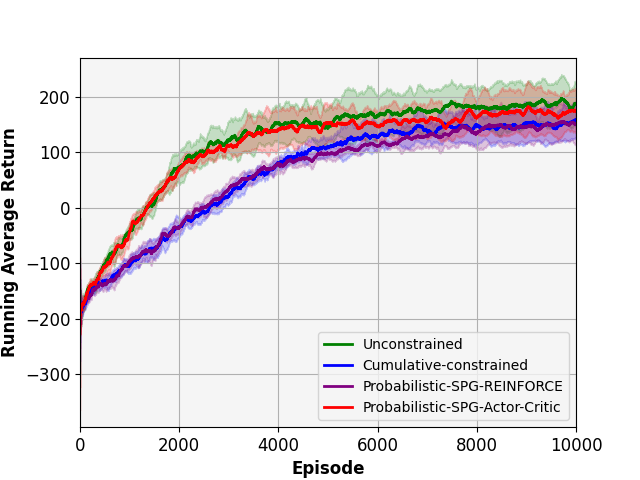}  
\end{minipage}
}
\subfigure[]
{
	\begin{minipage}{8.5cm}
	\centering 
	\includegraphics[scale=0.4]{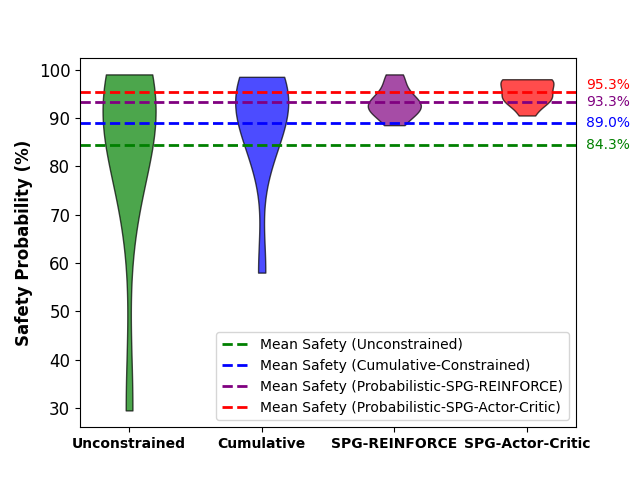}  
	\end{minipage}
}
\caption{Learning of $\emph{LunarLander-v2}$, averaged over 10 runs for each method. (a). Evolution of running average return over 100 consecutive landing attempts. The solid lines show the mean and the shaded areas depict the standard deviation. (b). Comparison  of safety probability. The safety probability of each run, denoted by the probability of $\{v < 0.9\}$ (desired velocity threshold), is computed by the number of safe evaluations divided by 200 independent evaluations, and the dash lines represent the mean of safety probability over 10 runs. Algorithm~\ref{alg_pd} and classical primal-dual method with $1-\delta=95\%$ are implemented in all experiments for the probabilistic-constrained and cumulative-constrained formulations, respectively.}
\label{fig_lunar_lander_curves}
\end{figure*}

% \begin{figure*}
% \centering 
% \subfigure[Evolution of $X$]
% {
% \begin{minipage}{0.33\linewidth}
% \centering    
% \includegraphics[scale=0.2]{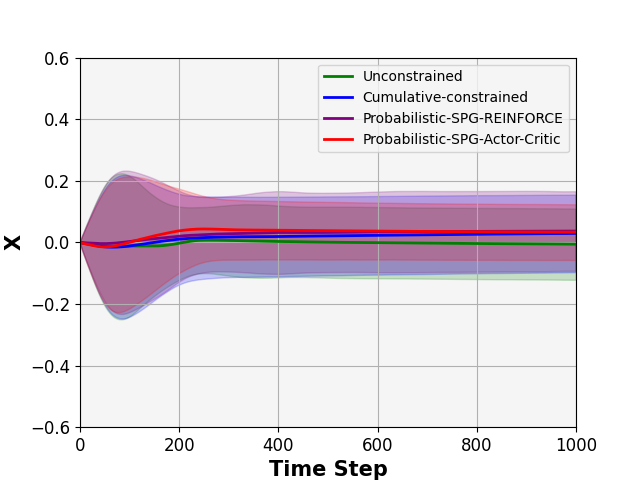}  
% \end{minipage}
% }
% \subfigure[Evolution of $Y$]
% {
% 	\begin{minipage}{0.33\linewidth}
% 	\centering 
% 	\includegraphics[scale=0.2]{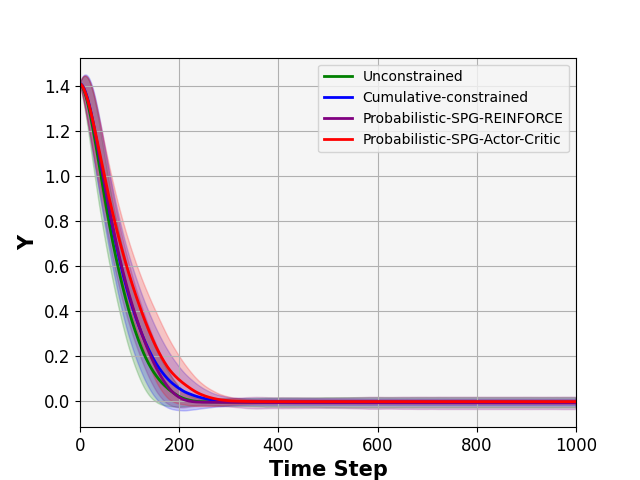}  
% 	\end{minipage}
% }
% \subfigure[Evolution of $V$]
% {
% 	\begin{minipage}{0.33\linewidth}
% 	\centering 
% 	\includegraphics[scale=0.2]{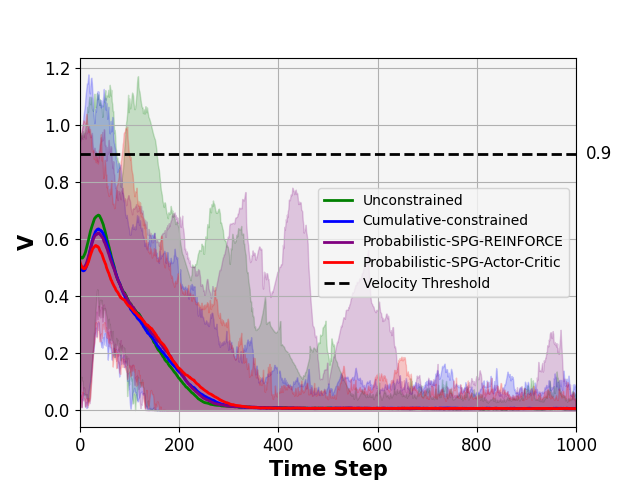}  
% 	\end{minipage}
% }
% \caption{Evaluation of $\emph{LunarLander-v2}$. Evolution of coordinates $x$ and $y$ of the lunar lander over time, averaged across 5 runs with each evaluating 100 episodes (500 evaluations in total). The solid line shows the mean and the shaded area depicts the standard deviation.}
% \label{fig_trajectory}
% \end{figure*}

\begin{figure*}[ht]
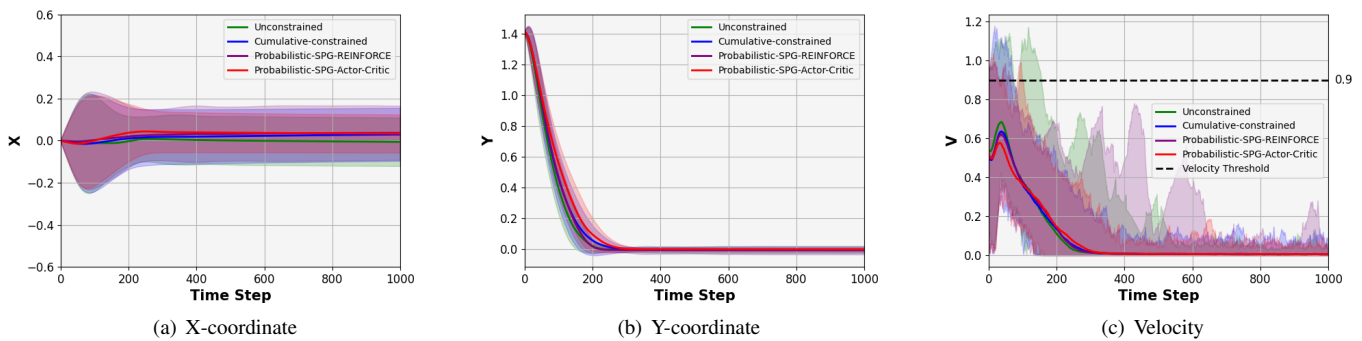

  \centering
  \subfigure[X-coordinate]{\includegraphics[width=0.32\linewidth]{figures/x_episode.png}}
  \hfill
  \subfigure[Y-coordinate]{\includegraphics[width=0.32\linewidth]{figures/y_episode.png}}
  \hfill
  \subfigure[Velocity]{\includegraphics[width=0.32\linewidth]{figures/v_episode.png}}
  \caption{Evaluation of \emph{LunarLander-v2}. Evolution of coordinates $X$, $Y$ and velocity $V$ of the lunar lander over time, averaged across 5 runs with each evaluating 100 episodes (500 evaluations in total). In (a) and (b), the solid lines show the mean and the shaded areas depict the standard deviation. In (c), the dash line represents the velocity threshold, and the solid lines represent the mean and the shaded areas demonstrate the minimum and maximum.}
  \label{fig_trajectory}
\end{figure*}

We depict in green in  Figure~\ref{fig_lunar_lander_curves} the training curve for a controller that focuses on maximizing the return without any safety consideration. While the problem can be considered ``solved'' it is worth pointing out that it often relies on large velocities. Accordingly, we constrain the velocity $V(t)=\sqrt{\dot{X}(t)^2+\dot{Y}(t)^2}$ to be less than 0.9 for all $t\in [1, 1000]$.

Subsequently, we test the effectiveness of Algorithm~\ref{alg_pd} with SPG-REINFORCE and SPG-Actor-Critic compared with the unconstrained case and the cumulative-constrained case. The learning curves are depicted in Figure~\ref{fig_lunar_lander_curves} (a), where the experiments are averaged over ten independent runs. We select the safety threshold of $1 - \delta=95\%$ and implement Algorithm~\ref{alg_pd} and classical primal-dual method in all experiments for the probabilistic-constrained and cumulative-constrained formulations, respectively. As demonstrated in Figure~\ref{fig_lunar_lander_curves}, the unconstrained case (green) successfully solves the task by achieving a running average return more than 200 points with the safety probability of $84.3\%$ on average. Safety probability is computed by the number of safe evaluations (the whole trajectory is safe) divided by 200 independent evaluations. Moreover, adding a probabilistic/cumulative constraint on the landing velocity (red, purple and blue) leads to higher safety probability but smaller average return. Despite that the cumulative-constrained case and the probabilistic-SPG-REINFORCE yield similar returns, the latter ($93.3\%$) maintains a higher safety probability on average than the former ($89.0\%$). It is worth highlighting that the probabilistic-SPG-Actor-Critic exhibits the highest value ($95.3\%$) and the lowest variance in terms of safety probability across ten independent runs, all the while securing the return that closely rivals the unconstrained scenario.

At the commencement of each episode, the lander's coordinates are randomly initialized around $X=0$ and $Y=1.4$, accompanied by non-zero linear and angular velocities. It starts with momentum and aims to navigate to reach the goal position at $(X, Y) = (0, 0)$. Figure~\ref{fig_trajectory} illustrates the success of the landing task. Figure~\ref{fig_trajectory} (a) and (b) provide an insight into the evolution of coordinates $X$ and $Y$ of the lander over time, and Figure~\ref{fig_trajectory} (c) depicts the evolution of the velocity $V$. They are averaged across 5 independent runs with each run consisting of 100 episodes. By evaluating a total of 500 episodes, we gain a comprehensive understanding of the lander's performance and its ability to consistently navigate towards the desired goal while constraining the velocity. Concretely, Figure~\ref{fig_trajectory} shows that SPG-Actor-Critic and SPG-REINFORCE can successfully accomplish the landing task while constraining the velocity well. In comparison, the two baselines (unconstrained and cumulative-constrained cases) lead to larger constraint violations. The lunar lander task showcases the successful application of Algorithm~\ref{alg_pd} with SPG-Actor-Critic and SPG-REINFORCE in a system characterized by complex dynamics.

%%%%%%%%%%%%%%%%%%%%%%%%
%%%%%%%%%%%%%%%%%%%%%%%%
%%%%%%%%%%%%%%%%%%%%%%%%
%%%%%%%%%%%%%%%%%%%%%%%%
%%%%%%%%%%%%%%%%%%%%%%%%
%%%%%%%%%%%%%%%%%%%%%%%%
%%%%%%%%%%%%%%%%%%%%%%%%
%%%%%%%%%%%%%%%%%%%%%%%%
%%%%%%%%%%%%%%%%%%%%%%%%
%%%%%%%%%%%%%%%%%%%%%%%%
%%%%%%%%%%%%%%%%%%%%%%%%
%%%%%%%%%%%%%%%%%%%%%%%%
%%%%%%%%%%%%%%%%%%%%%%%%
%%%%%%%%%%%%%%%%%%%%%%%%
%%%%%%%%%%%%%%%%%%%%%%%%
%%%%%%%%%%%%%%%%%%%%%%%%
%%%%%%%%%%%%%%%%%%%%%%%%
%%%%%%%%%%%%%%%%%%%%%%%%
%%%%%%%%%%%%%%%%%%%%%%%%
%%%%%%%%%%%%%%%%%%%%%%%%
%%%%%%%%%%%%%%%%%%%%%%%%
%%%%%%%%%%%%%%%%%%%%%%%%
%%%%%%%%%%%%%%%%%%%%%%%%
%%%%%%%%%%%%%%%%%%%%%%%%
%%%%%%%%%%%%%%%%%%%%%%%%
%%%%%%%%%%%%%%%%%%%%%%%%
%%%%%%%%%%%%%%%%%%%%%%%%
%%%%%%%%%%%%%%%%%%%%%%%%
%%%%%%%%%%%%%%%%%%%%%%%%
%%%%%%%%%%%%%%%%%%%%%%%%
%%%%%%%%%%%%%%%%%%%%%%%%
%%%%%%%%%%%%%%%%%%%%%%%%
%%%%%%%%%%%%%%%%%%%%%%%%
%%%%%%%%%%%%%%%%%%%%%%%%
%%%%%%%%%%%%%%%%%%%%%%%%
%%%%%%%%%%%%%%%%%%%%%%%%
%%%%%%%%%%%%%%%%%%%%%%%%
%%%%%%%%%%%%%%%%%%%%%%%%
%%%%%%%%%%%%%%%%%%%%%%%%
%%%%%%%%%%%%%%%%%%%%%%%%
%%%%%%%%%%%%%%%%%%%%%%%%
%%%%%%%%%%%%%%%%%%%%%%%%
%%%%%%%%%%%%%%%%%%%%%%%%
%%%%%%%%%%%%%%%%%%%%%%%%
%%%%%%%%%%%%%%%%%%%%%%%%
%%%%%%%%%%%%%%%%%%%%%%%%
%%%%%%%%%%%%%%%%%%%%%%%%
%%%%%%%%%%%%%%%%%%%%%%%%
%%%%%%%%%%%%%%%%%%%%%%%%
%%%%%%%%%%%%%%%%%%%%%%%%
%%%%%%%%%%%%%%%%%%%%%%%%
%%%%%%%%%%%%%%%%%%%%%%%%
%%%%%%%%%%%%%%%%%%%%%%%%
%%%%%%%%%%%%%%%%%%%%%%%%
%%%%%%%%%%%%%%%%%%%%%%%%
%%%%%%%%%%%%%%%%%%%%%%%%
%%%%%%%%%%%%%%%%%%%%%%%%
%%%%%%%%%%%%%%%%%%%%%%%%
%%%%%%%%%%%%%%%%%%%%%%%%
%%%%%%%%%%%%%%%%%%%%%%%%
%%%%%%%%%%%%%%%%%%%%%%%%
%%%%%%%%%%%%%%%%%%%%%%%%
%%%%%%%%%%%%%%%%%%%%%%%%
%%%%%%%%%%%%%%%%%%%%%%%%
\renewcommand{\arraystretch}{1.5}
\begin{table*}
	\centering
	\fontsize{7}{8}\selectfont
	\begin{threeparttable}
		\caption{
  Return and safety of three methods (unconstrained, cumulative-constrained, and probabilistic-constrained) in two Safety Gym environments: PointGoal1 and CarGoal1.}
		\label{table_safe_gym}
		\begin{tabular}{ccc|ccc}
        \hline
        \hline
			PointGoal1 & Return & Safety ($\%$) & CarGoal1 & Return & Safety ($\%$) \cr
        \hline
             Unconstrained (DDPG)  & ${\bf 14.79 \pm 5.15}$ & $80.59 \pm 14.00$ &
                    
			Unconstrained (DDPG) & ${\bf 17.65 \pm 2.93}$  & $79.88 \pm 39.94$ \cr
   
             Cumulative-constrained (DDPG-Lagrangian) & $5.63 \pm 5.90$ & $89.39 \pm 14.43$ &
            
            Cumulative-constrained (DDPG-Lagrangian) & $7.85 \pm 7.42$ & $89.38 \pm 29.81$ \cr
            
             Probabilistic-constrained (SPG-Actor-Critic) & $6.76 \pm 6.08$ & ${\bf 94.30 \pm 5.34}$ &
            
			Probabilistic-constrained (SPG-Actor-Critic) & ${14.62 \pm 2.32}$ & ${\bf 99.06 \pm 2.02}$ \cr
        \hline
        \hline
		\end{tabular}
	\end{threeparttable}
\end{table*}

\subsection{Safety Gym}
To further substantiate our proposed SPGs and Theorem~\ref{theorem_P_star_Ptilder_star} using safe RL benchmark simulators, we consider the navigation tasks PointGoal1 and CarGoal1 in the Safety Gym (see~\cite{ray2019benchmarking} for details). %Here we consider two environments: PointGoal1 and CarGoal1 (see \cite{ray2019benchmarking} for details). 
For the probabilistic-constrained case, we employ our proposed SPG-Actor-Critic, while the baselines include two state-of-the-art RL algorithms: DDPG~\cite{lillicrap2015continuous} (unconstrained case) and DDPG-Lagrangian~\cite{chow2018lyapunov} (cumulative-constrained case). For each algorithm, we train the policy across 10 independent runs and obtain 10 final policies. Then each policy goes through 5 independent evaluations. The numerical results are summarized and compared in Table~\ref{table_safe_gym}. The results validate that our proposed SPG-Actor-Critic finds a better return-safety trade-off than the two baseline state-of-the-art algorithms. More specifically, the unconstrained case (DDPG) can achieve the highest return in both environments with, however, low safety. Notice that imposing constraints can improve the safety. Indeed, the probabilistic-constrained case (our method  SPG-Actor-Critic) can achieve both higher return and higher safety than the cumulative-constrained case (DDPG-Lagrangian), which can be explained by our Theorem~\ref{theorem_P_star_Ptilder_star}: the solution of problem~\eqref{eqn_problem1} outperforms problem~\eqref{eqn_problem2_mirror}. In summary, the experiments in Safety Gym show that our proposed SPG-Actor-Critic outperforms cutting-edge cumulative-constrained algorithms in the standard safe RL benchmarks.

%!TEX root = root.tex
\section{Conclusions}\label{sec_conclusions}
In this work, we have studied the problem of learning safe policies under probabilistic constraints. Concretely, a safe policy is defined as one that guarantees, with high probability, that the agent remains in a desired safe set across the whole trajectory. We have established theoretical bounds on the improvement in the optimality-safety trade-off that the probabilistic-constrained formulation provides compared to the setting of cumulative constraints. We have also provided expressions (SPG-REINFORCE and SPG-Actor-Critic) for the gradient of the probabilistic constraint, along with the Safe Primal-Dual algorithm, for which we have established the guarantee of the convergence, near-optimality on average, and feasibility on average. The Safe Primal-Dual algorithm with SPGs as well as our theoretical findings in this work is substantiated in three numerical experiments: (i) a navigation problem in cluttered environments, (ii) a lunar lander problem constraining the maximum velocity, and (iii) standard safe RL benchmarks in Safety Gym. Future work includes characterizing the running time and sample complexity of the Safe Primal-Dual algorithm, and applying SPGs to other complicated dynamic systems.

%Regardless of the constraint considered, a natural way to solve RL problems is to find the corresponding gradients of the objective functions and constraints. Despite that classic policy gradient algorithms apply in cumulative-constrained setting directly, the gradients for solving probabilistic formulations are not available. In this work, we have provided the first explicit gradient expressions for the probabilistic constraint. Moreover, we introduce two methods that harness the potential of the proposed SPGs, along with analyses that covers convergence, optimality, and feasibility. In addition, We have showcased the applicability of utilizing these methods for solving continuous navigation problems within cluttered environments and tackling the lunar lander problem, characterized by intricate and complex dynamics. We have confirmed empirically our theoretical results regarding the safety-performance trade-off that both notions of safety induce, and the lower variance that SPG-Actor-Critic prompts.

%Importantly, the integration of SPG-REINFORCE and SPG-Actor-Critic with state-of-the-art algorithms, investigating their convergence rates, data-efficiency, and their applicability to solving other RL problems with even more intricate dynamics, lies beyond the scope of this work.
\appendix
\section{Appendix}\label{Appendix}
%%%%%%%%%%%%%%%%%%
%%%%%%%%%%%%%%%%%%
%%%%%%%%%%%%%%%%%%
%%%%%%%%%%%%%%%%%%
\subsection{Technical Lemmas for the Proof of Theorem~\ref{theorem_P_star_Ptilder_star}}
\label{appendix_proposition_figure}
\begin{lemma}
\label{lemma_bound_zero_duality_gap}
Let hypotheses of Theorem~\ref{theorem_P_star_Ptilder_star} hold. Consider the function $\tilde{P}^\star(\xi)$ defined in \eqref{eqn_problem2}. Let $\xi_0,\xi_1\in\mathbb{R}$. Let $\tilde{\lambda}^\star (\xi_0)$ be the dual optimal solution to \eqref{eqn_problem2} with $\xi=\xi_0$, defined tantamount to \eqref{eqn_dual_solution}.  It holds that
\begin{align}
    \tilde{P}^\star(\xi_1) \leq \tilde{P}^\star(\xi_0) + \tilde{\lambda}^\star (\xi_0) (\xi_0 - \xi_1).
\end{align}
\end{lemma}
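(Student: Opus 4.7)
The plan is to exploit two ingredients: the affine dependence on $\xi$ of the Lagrangian of problem~\eqref{eqn_problem2}, and zero duality gap for constrained MDPs under the universal-parameterization assumption inherited from Theorem~\ref{theorem_P_star_Ptilder_star}.

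First, I would write the dual function associated with problem~\eqref{eqn_problem2} at an arbitrary threshold $\xi$ as $\tilde{d}(\lambda;\xi) = \max_{\theta \in \Real^d} V(\theta) + \lambda\bigl(V_c(\theta) - \xi\bigr)$. Because $\xi$ enters the Lagrangian only through the additive term $-\lambda\xi$, the inner maximization over $\theta$ is completely unaffected by changing $\xi$, so one obtains the exact identity
\begin{equation*}
\tilde{d}(\lambda;\xi_1) \;=\; \tilde{d}(\lambda;\xi_0) + \lambda(\xi_0 - \xi_1)
\end{equation*}
for every $\lambda \ge 0$. Next, I would apply weak duality for problem~\eqref{eqn_problem2} at $\xi = \xi_1$ to the specific choice $\lambda = \tilde{\lambda}^\star(\xi_0) \ge 0$, which gives $\tilde{P}^\star(\xi_1) \le \tilde{d}(\tilde{\lambda}^\star(\xi_0);\xi_1)$; substituting the identity above then yields $\tilde{P}^\star(\xi_1) \le \tilde{d}(\tilde{\lambda}^\star(\xi_0);\xi_0) + \tilde{\lambda}^\star(\xi_0)(\xi_0 - \xi_1)$.

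The remaining step is to replace $\tilde{d}(\tilde{\lambda}^\star(\xi_0);\xi_0)$ by $\tilde{P}^\star(\xi_0)$, which is the only nontrivial ingredient in the argument. By construction $\tilde{\lambda}^\star(\xi_0)$ is a minimizer of $\tilde{d}(\,\cdot\,;\xi_0)$, so $\tilde{d}(\tilde{\lambda}^\star(\xi_0);\xi_0)$ equals the dual optimum of problem~\eqref{eqn_problem2} at $\xi_0$. Under the universal-parameterization hypothesis imported from Theorem~\ref{theorem_P_star_Ptilder_star}, constrained MDPs enjoy zero duality gap (see \cite{paternain2019constrained}), so this dual optimum coincides with $\tilde{P}^\star(\xi_0)$. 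Substituting this equality into the bound from the previous paragraph yields the claim. The main obstacle is the invocation of strong duality; once that is in hand, the rest is simply the affine-in-$\xi$ structure of the Lagrangian.
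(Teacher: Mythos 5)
Your proof is correct, and it rests on the same essential ingredient as the paper's argument---zero duality gap for the cumulative-constrained problem \eqref{eqn_problem2} under the universal-parametrization hypothesis---but the execution is genuinely leaner. The paper invokes strong duality at \emph{both} perturbation levels: it writes $\tilde{P}^\star(\xi_1)$ and $\tilde{P}^\star(\xi_0)$ as Lagrangians evaluated at primal--dual optimal pairs (identities \eqref{zero_duality_gap_1} and \eqref{zero_duality_gap_5}), replaces $\tilde{\lambda}^\star(\xi_1)$ by the arbitrary multiplier $\tilde{\lambda}^\star(\xi_0)$ while keeping $\theta^\star(\xi_1)$ fixed, adds and subtracts $\tilde{\lambda}^\star(\xi_0)\,\xi_0$, and finally uses that $\theta^\star(\xi_0)$ maximizes the Lagrangian at $\xi_0$. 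You instead combine plain weak duality at $\xi_1$ with the exact affine-shift identity $\tilde{d}(\lambda;\xi_1)=\tilde{d}(\lambda;\xi_0)+\lambda(\xi_0-\xi_1)$, and you need strong duality only at the single point $\xi_0$. This buys two things: you never require the existence of a primal--dual optimal pair at $\xi_1$ (your bound even holds vacuously if the $\xi_1$-problem is infeasible), and you sidestep the paper's slightly delicate intermediate step in which the optimal multiplier is swapped for an arbitrary $\lambda>0$ while the maximizer $\theta^\star(\xi_1)$ is held fixed---a step that implicitly relies on feasibility (equivalently, complementary slackness) of the primal optimum. What the paper's route makes more explicit is the saddle-point/sensitivity interpretation of $\tilde{\lambda}^\star(\xi_0)$. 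In both cases the only nontrivial input is the strong-duality result for the CMDP (Theorem 3 of \cite{paternain2022safe}, equivalently \cite{paternain2019constrained}), which you correctly identify as the sole place where the hypotheses inherited from Theorem~\ref{theorem_P_star_Ptilder_star} are used.
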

\begin{proof}
Recall definition of the dual problem associated to~\eqref{eqn_problem2}
\begin{align}\label{eqn_min_max}
    \tilde{D}^\star(\xi)= \min\limits_{ \tilde{\lambda}\in\mathbb{R}_+} \, \max\limits_{\theta\in\mathbb{R}^d} \, V(\theta) +  \tilde{\lambda} (V_c(\theta) - \xi).
\end{align}
It follows from Theorem 3 in \cite{paternain2022safe} that zero duality gap holds for problem \eqref{eqn_problem2}
\begin{align}\label{zero_duality_gap_1}
    \tilde{P}^\star \! (\xi_1) \! = \! \tilde{D}^\star \! (\xi_1) \! = \! V\!(\theta^\star (\xi_1)) \! + \! \tilde{\lambda}^\star (\xi_1) (V_c(\theta^\star (\xi_1)) \!-\! \xi_1),
\end{align}
where $(\theta^\star (\xi_1), \tilde{\lambda}^\star (\xi_1))$ denote the primal-dual optimal solution of \eqref{eqn_problem2} with $\xi = \xi_1$. Likewise, we can also write 
\begin{align}\label{zero_duality_gap_5}
    \tilde{P}^\star (\xi_0) = V(\theta^\star (\xi_0)) +  \tilde{\lambda}^\star (\xi_0) (V_c(\theta^\star (\xi_0)) - \xi_0),
\end{align}
where the primal-dual solution with respect to $\xi_0$ is denoted by $(\theta^\star (\xi_0), \tilde{\lambda}^\star (\xi_0))$. 
By definition of $\tilde{\lambda}^\star (\xi_1)$, i.e., the minimizer of \eqref{eqn_min_max} with $\xi=\xi_1$, it follows that for any $\lambda>0$ we have 
\begin{align}
    \tilde{P}^\star(\xi_1)&=V(\theta^\star (\xi_1)) +   \tilde{\lambda}^\star (\xi_1) (V_c(\theta^\star (\xi_1)) - \xi_1) \nonumber \\ 
    &\leq  V(\theta^\star (\xi_1)) +   \lambda (V_c(\theta^\star (\xi_1)) - \xi_1).
\end{align}
In particular, this holds for $\lambda= \tilde{\lambda}^\star(\xi_0)$
\begin{align}\label{eqn_temp11}
    \tilde{P}^\star(\xi_1) \leq  V(\theta^\star (\xi_1)) +   \tilde{\lambda}^\star(\xi_0) (V_c(\theta^\star (\xi_1)) - \xi_1).
\end{align}

By adding and subtracting $\tilde{\lambda}^\star (\xi_0) \, \xi_0$ to \eqref{eqn_temp11} yields
\begin{align}\label{zero_duality_gap_2}
   \tilde{P}^\star (\xi_1) &\leq V(\theta^\star (\xi_1)) +  \tilde{\lambda}^\star (\xi_0) (V_c(\theta^\star (\xi_1)) - \xi_0) \nonumber \\
   &+  \tilde{\lambda}^\star (\xi_0) (\xi_0 - \xi_1).
\end{align}

Likewise, $\theta^\star (\xi_0)$ is the primal maximizer of the Lagrangian with $\xi = \xi_0$
\begin{equation}
\theta^\star(\xi_0) = \argmax_{\theta\in\Real^d} V(\theta) + \tilde{\lambda}^\star(\xi_0)\left(V_c(\theta)-\xi_0\right).
\end{equation}
Thus $\tilde{P}^\star(\xi_1)$ in \eqref{zero_duality_gap_2} is upper bounded as
\begin{align}\label{zero_duality_gap_3}
    \tilde{P}^\star(\xi_1) &\leq V(\theta^\star (\xi_0)) +   \tilde{\lambda}^\star (\xi_0) (V_c(\theta^\star (\xi_0)) - \xi_0) \nonumber\\
    &+ \tilde{\lambda}^\star (\xi_0) (\xi_0 - \xi_1).
\end{align}
Substituting \eqref{zero_duality_gap_5} into \eqref{zero_duality_gap_3} reduces to
\begin{align}
    \tilde{P}^\star(\xi_1) \leq \tilde{P}^\star(\xi_0) + \tilde{\lambda}^\star (\xi_0) (\xi_0 - \xi_1).
\end{align}
This completes the proof of Lemma~\ref{lemma_bound_zero_duality_gap}.
\end{proof}

\begin{lemma}\label{proposition_figure}
Let $\hat{\mathcal{F}}$, $\mathcal{F}$ and $\bar{\mathcal{F}}$ represent the feasible sets of problem~\eqref{eqn_problem2_mirror}, problem~\eqref{eqn_problem1} and problem~\eqref{eqn_problem_mirrormirror}, respectively. It holds that $\hat{\mathcal{F}} \subseteq \mathcal{F} \subseteq \bar{\mathcal{F}}$, as depicted in Figure~\ref{fig_three_constraints}.
\end{lemma}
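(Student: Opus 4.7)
The plan is to prove the two inclusions separately, since each one is a direct comparison between a constraint on the probability of staying in the safe set versus the expected fraction of safe states.

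For the first inclusion $\hat{\ccalF} \subseteq \ccalF$, I would simply invoke Proposition~\ref{proposition_U_cP_1-delta}, which already states that any feasible point of problem~\eqref{eqn_problem2_mirror} is feasible for problem~\eqref{eqn_problem1}. No additional work is needed here.

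The second inclusion $\ccalF \subseteq \bar{\ccalF}$ requires a short argument. I would start from any $\theta \in \ccalF$, so that $\mathbb{P}(\cap_{t=0}^{T}\{S_t \in \mathcal{S}_\text{safe}\} \mid \pi_\theta) \geq 1-\delta$. For every fixed $t$, the event $\{S_t \in \mathcal{S}_\text{safe}\}$ contains the intersection $\cap_{u=0}^{T}\{S_u \in \mathcal{S}_\text{safe}\}$, so monotonicity of probability yields $\mathbb{P}(S_t \in \mathcal{S}_\text{safe} \mid \pi_\theta) \geq 1-\delta$. Next I would exchange expectation and the finite sum to write
\begin{equation*}
\mathbb{E}\!\left[\frac{1}{T+1}\sum_{t=0}^{T} \mathbbm{1}(S_t \in \mathcal{S}_\text{safe}) \;\Big|\; \pi_\theta\right] = \frac{1}{T+1}\sum_{t=0}^{T} \mathbb{P}(S_t \in \mathcal{S}_\text{safe} \mid \pi_\theta),
\end{equation*}
and apply the per-$t$ lower bound to obtain the right-hand side $\geq 1-\delta$, which is precisely the constraint defining $\bar{\ccalF}$.

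There is no real obstacle here: both inclusions follow from elementary monotonicity facts combined with the already-established Proposition~\ref{proposition_U_cP_1-delta}. The only thing to be careful about is keeping track of which event is nested in which, and making clear that the indicator $\mathbbm{1}(S_t \in \mathcal{S}_\text{safe})$ dominates the indicator of the whole-trajectory event, giving the second inclusion almost immediately once expectations are pulled through the sum.
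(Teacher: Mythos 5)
Your proposal is correct. The first inclusion is handled exactly as in the paper, by citing Proposition~\ref{proposition_U_cP_1-delta}. For the second inclusion $\mathcal{F}\subseteq\bar{\mathcal{F}}$ your route differs slightly from the paper's: you bound each time-marginal separately, using $\cap_{u=0}^{T}\{S_u\in\mathcal{S}_\text{safe}\}\subseteq\{S_t\in\mathcal{S}_\text{safe}\}$ to get $\mathbb{P}(S_t\in\mathcal{S}_\text{safe}\mid\pi_\theta)\geq 1-\delta$ for every $t$, and then average via linearity of the (finite) sum. The paper instead works with the single non-negative random variable $\sum_{t=0}^{T}\mathbbm{1}(S_t\in\mathcal{S}_\text{safe})$, rewrites the trajectory-safety event as this sum equaling $T+1$, and uses the pointwise domination $\sum_{t=0}^{T}\mathbbm{1}(S_t\in\mathcal{S}_\text{safe})\geq (T+1)\,\mathbbm{1}\bigl(\sum_{t=0}^{T}\mathbbm{1}(S_t\in\mathcal{S}_\text{safe})=T+1\bigr)$ to obtain the bound in \eqref{eqn_lemma1_aux2} in one step. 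Both arguments are elementary and equally rigorous; yours even yields the slightly stronger per-time statement that every marginal safety probability is at least $1-\delta$. The only practical advantage of the paper's phrasing is that its intermediate inequality \eqref{eqn_lemma1_aux2} is reused later (in the proof of Corollary~\ref{corollary_dual_bound}), so deriving it inside this lemma avoids repeating the argument there; your version would not produce that reusable ingredient, but it fully establishes the lemma as stated.
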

\begin{figure}[htbp]
\centering
\includegraphics[width=3.5cm]{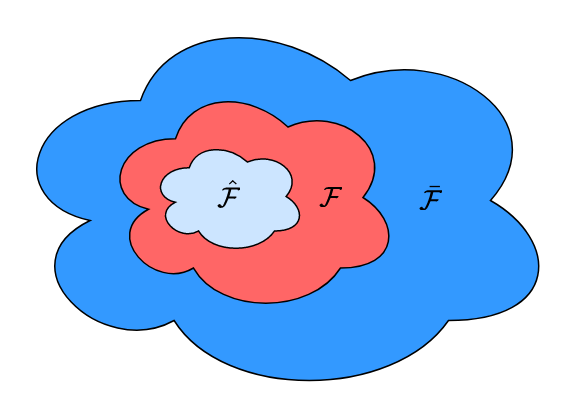}
\caption{\small{The illustration for feasible sets of problem~\eqref{eqn_problem2_mirror}--$\hat{\mathcal{F}}$ (light blue), problem~\eqref{eqn_problem1}--$\mathcal{F}$ (red), and problem~\eqref{eqn_problem_mirrormirror}--$\bar{\mathcal{F}}$ (navy blue).}}
\label{fig_three_constraints}
\end{figure}
\begin{proof}
We start by proving the leftmost inclusion. Denote by $\hat{\theta}$ the any feasible solution to problem~\eqref{eqn_problem2_mirror}, i.e., $\hat{\theta} \in \hat{\mathcal{F}}$. By virtue of Proposition~\ref{proposition_U_cP_1-delta} the policy $\pi_{\hat{\theta}}$ is $(1-\delta)$ safe in the sense of Definition~\ref{definition_safety}. In turn, this means that $\hat{\theta}$ is a feasible solution to problem~\eqref{eqn_problem1} as well. Then it holds that $\hat{\theta} \in \mathcal{F}$ for $\forall \hat{\theta} \in \hat{\mathcal{F}}$, thus $\hat{\mathcal{F}} \subseteq \mathcal{F}$.

We now focus on proving the second inclusion. Denote by $\bar{\theta}$ any point in $\mathcal{F}$. $\bar{\theta}$ is thus a feasible solution to problem~\eqref{eqn_problem1}
\begin{equation}
\mathbb{P} \left(\bigcap\limits_{t=0}^{T} \{ S_t \in \mathcal{S}_\text{safe}\} \mid \pi_{\bar{\theta}} \right) \geq 1-\delta.
\end{equation}
Observe that the previous inequality is equivalent to
\begin{equation}\label{eqn_lemma1_aux}
\mathbb{P} \left(\sum\limits_{t=0}^{T} \mathbbm{1} \left( S_t \in \mathcal{S}_\text{safe} \right) = T+1 \mid \pi_{\bar{\theta}} \right) \geq 1-\delta.
\end{equation}
Indeed, for the agent to belong to $\mathcal{S}_{\text{safe}}$ for all times, all the indicator functions in \eqref{eqn_lemma1_aux} need to take the value of 1. Since $\sum_{t=0}^T \mathbbm{1} \left( S_t \in \mathcal{S}_\text{safe} \right)$ is a non-negative random variable, it follows that 
\begin{align}\label{eqn_lemma1_aux2}
&\mathbb{E} \left[\sum\limits_{t=0}^{T} \mathbbm{1} \left( S_t \in \mathcal{S}_\text{safe} \right) |\pi_{\bar{\theta}} \right] \\  &\geq \mathbb{P} \left(\sum\limits_{t=0}^{T} \mathbbm{1} \left( S_t \in \mathcal{S}_\text{safe} \right) = T+1 |\pi_{\bar{\theta}} \right) (T+1). \nonumber
\end{align}
Combining \eqref{eqn_lemma1_aux} and \eqref{eqn_lemma1_aux2}, it follows that
\begin{align}
    \mathbb{E} \left[\frac{1}{T+1} \sum\limits_{t=0}^{T} \mathbbm{1} \left( S_t \in \mathcal{S}_\text{safe} \right) |\pi_{\bar{\theta}} \right] \geq 1-\delta.
\end{align}
Hence, $\bar{\theta}$ is a feasible point in $\bar{\mathcal{F}}$ for $\forall \bar{\theta} \in \mathcal{F}$, i.e., $\mathcal{F} \subseteq \bar{\mathcal{F}}$. This completes the proof of Lemma~\ref{proposition_figure}.
\end{proof}

%%%%%%%%%%%%%%%%%%
%%%%%%%%%%%%%%%%%%
%%%%%%%%%%%%%%%%%%
%%%%%%%%%%%%%%%%%%

%%%%%%%%%%%%%%%%%%
%%%%%%%%%%%%%%%%%%
%%%%%%%%%%%%%%%%%%
%%%%%%%%%%%%%%%%%%
\subsection{Proof of Theorem~\ref{theorem_safe_policy_gradient}}
\label{appendix_theorem_safe_policy_gradient}

We proceed by presenting two technical lemmas. 
%
% We proceed by presenting and proving the following two technical lemmas (Lemma~\ref{lemma_safe_policy_gradient_G1} and Lemma~\ref{lemma_nabla_E_G1_S0_GT_ST-1}).
%
%
\begin{lemma}[\cite{chen2023policy}, Lemma 1]
\label{lemma_safe_policy_gradient_G1}
Given $S_{t-1} \in \mathcal{S}_\text{safe}$ and $G_{t}, t=1,2,\cdots, T-1$ defined in \eqref{def_G_cumulative_product}, it holds that
\begin{align}\label{eqn_recursive_gradient}
\nabla_\theta\mathbb{E}\left[G_{t}\mid S_{t-1}\right] &= \mathbb{E}\left[\nabla_\theta\mathbb{E}\left[G_{t+1}\! \mid \! S_{t}\right]\mathbbm{1}\left(S_{t}\in\mathcal{S}_{\text{safe}}\right) \! \mid \! S_{t-1} \right] \nonumber \\
&+ \mathbb{E}\left[G_{t}\nabla_{\theta}\log\pi_\theta(A_{t-1} \!\!\mid\!\! S_{t-1}) \!\!\mid\!\! S_{t-1}\right].
\end{align}
\end{lemma}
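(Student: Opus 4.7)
The plan is to decompose $\mathbb{E}[G_t \mid S_{t-1}]$ by one step of the trajectory, then differentiate under the integral. Since $G_t = \mathbbm{1}(S_t \in \mathcal{S}_{\text{safe}}) \, G_{t+1}$ by definition \eqref{def_G_cumulative_product}, the Markov property together with the tower rule gives
\begin{align*}
 \mathbb{E}[G_t \mid S_{t-1}] = &\int_{\mathcal{A}} \pi_\theta(a \mid S_{t-1}) \int_{\mathcal{S}} \mathbb{P}_{S_{t-1} \to s}^{a}(ds) \\
 &\times \mathbbm{1}(s \in \mathcal{S}_{\text{safe}}) \, \mathbb{E}[G_{t+1} \mid S_t = s] \, da.
\end{align*}
This is the key identity that isolates the two $\theta$-dependent factors, namely the current-step policy $\pi_\theta(a \mid S_{t-1})$ and the future safety probability $\mathbb{E}[G_{t+1} \mid S_t = s]$ (which depends on $\theta$ through the downstream actions $A_t, \ldots, A_T$). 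The transition kernel and the indicator carry no $\theta$-dependence.

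Next, I would differentiate both sides with respect to $\theta$ and apply the product rule to split into two contributions. For the piece where $\nabla_\theta$ lands on the policy, the standard log-derivative identity $\nabla_\theta \pi_\theta(a \mid s) = \pi_\theta(a \mid s) \nabla_\theta \log \pi_\theta(a \mid s)$ lets me re-introduce $\pi_\theta$ as a sampling measure, yielding $\mathbb{E}[G_t \, \nabla_\theta \log \pi_\theta(A_{t-1} \mid S_{t-1}) \mid S_{t-1}]$ after recognizing $\mathbbm{1}(S_t \in \mathcal{S}_{\text{safe}}) \, \mathbb{E}[G_{t+1} \mid S_t] = \mathbb{E}[G_t \mid S_t, A_{t-1}]$ and collapsing by the tower property. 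For the piece where $\nabla_\theta$ lands on $\mathbb{E}[G_{t+1} \mid S_t]$, I would simply rewrite the remaining integral as the outer conditional expectation $\mathbb{E}[\nabla_\theta \mathbb{E}[G_{t+1} \mid S_t] \, \mathbbm{1}(S_t \in \mathcal{S}_{\text{safe}}) \mid S_{t-1}]$. Summing the two contributions produces exactly \eqref{eqn_recursive_gradient}.

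The main (essentially only) obstacle is justifying the exchange $\nabla_\theta \int = \int \nabla_\theta$, both for the integral over $\mathcal{A}$ and implicitly for the recursive $\nabla_\theta \mathbb{E}[G_{t+1} \mid S_t]$. This is a routine application of dominated convergence: $G_t \in \{0,1\}$ is bounded, and under standard regularity on the parameterization (continuous differentiability of $\pi_\theta$ with an integrable envelope for $\nabla_\theta \log \pi_\theta$, a mild assumption implicit elsewhere in the paper), both integrands admit a dominating function. No other subtlety arises, so the argument is conceptually a one-step policy-gradient computation tailored to the product-of-indicators structure of $G_t$, with the recursion surfacing naturally because the ``future return'' here is itself a $\theta$-dependent probability rather than a deterministic sum of rewards as in the classical Policy Gradient Theorem.
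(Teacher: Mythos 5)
Your proposal is correct and uses exactly the technique this paper relies on: the paper itself states the lemma without proof (citing \cite{chen2023policy}), but your one-step decomposition of $\mathbb{E}\left[G_t \mid S_{t-1}\right]$, product rule, and log-derivative trick is precisely the argument the paper carries out explicitly for the terminal-step gradient $\nabla_\theta\mathbb{E}\left[G_T\mid S_{T-1}\right]$ in Appendix~\ref{appendix_theorem_safe_policy_gradient}, here augmented by the extra product-rule term because $\mathbb{E}\left[G_{t+1}\mid S_t\right]$ still depends on $\theta$. The only cosmetic point is that when you collapse the log-trick term you should condition on $(S_{t-1},A_{t-1},S_t)$ (so that both $\nabla_\theta\log\pi_\theta(A_{t-1}\mid S_{t-1})$ and the indicator are measurable and the Markov property gives $\mathbb{E}\left[G_{t+1}\mid S_{t-1},A_{t-1},S_t\right]=\mathbb{E}\left[G_{t+1}\mid S_t\right]$), rather than writing $\mathbb{E}\left[G_t\mid S_t,A_{t-1}\right]$; this does not affect the validity of the argument.
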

\begin{lemma}[\cite{chen2023policy}, Lemma 2]
\label{lemma_nabla_E_G1_S0_GT_ST-1}
Given $S_{t-1} \in \mathcal{S}_\text{safe}$ and $G_{t}, t=1,2,\cdots, T-1$ defined in \eqref{def_G_cumulative_product}, it holds that
\begin{align}\label{eqn__nabla_E_G1_S0_GT_ST-1}
    \nabla_\theta\mathbb{E}\left[G_1\mid S_0\right] &=\sum\limits_{t=0}^{T-2}\mathbb{E}\left[G_1\nabla_{\theta}\log\pi_\theta(A_t\mid S_t)\mid S_0\right]  \\ &+\mathbb{E}\left[\nabla_\theta\mathbb{E}\left[G_T \! \mid \! S_{T-1}\right]\! \prod_{t=1}^{T-1} \! \mathbbm{1}\left(\! S_{t}\in\mathcal{S}_{\text{safe}} \!\right) \! \mid \! S_0\right].  \nonumber
\end{align}
\end{lemma}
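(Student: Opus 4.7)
The plan is to derive the identity by iterated application of Lemma~\ref{lemma_safe_policy_gradient_G1}, unrolling the recursion from time $1$ up to time $T-1$. The key insight is that each application of Lemma~\ref{lemma_safe_policy_gradient_G1} produces one policy-gradient term $\mathbb{E}[G_{k+1}\nabla_\theta\log\pi_\theta(A_k\mid S_k)\mid S_k]$ together with a recursive term $\mathbb{E}[\nabla_\theta\mathbb{E}[G_{k+2}\mid S_{k+1}]\mathbbm{1}(S_{k+1}\in\mathcal{S}_{\text{safe}})\mid S_k]$, so after $k$ steps the policy-gradient contributions accumulate in a sum while the indicator terms accumulate in a product.

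First, I would establish the induction hypothesis: for every $k\in\{1,2,\ldots,T-1\}$,
\begin{align}
\nabla_\theta\mathbb{E}[G_1\mid S_0] &= \sum_{t=0}^{k-1}\mathbb{E}\!\left[G_1\nabla_\theta\log\pi_\theta(A_t\mid S_t)\mid S_0\right] \nonumber\\
&\quad +\mathbb{E}\!\left[\nabla_\theta\mathbb{E}[G_{k+1}\mid S_k]\prod_{t=1}^{k}\mathbbm{1}(S_t\in\mathcal{S}_{\text{safe}})\mid S_0\right].\nonumber
\end{align}
The base case $k=1$ is exactly Lemma~\ref{lemma_safe_policy_gradient_G1} applied with the role of $S_{t-1}$ played by $S_0$. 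The conclusion of the Lemma then follows by taking $k=T-1$.

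For the induction step, I would apply Lemma~\ref{lemma_safe_policy_gradient_G1} to $\nabla_\theta\mathbb{E}[G_{k+1}\mid S_k]$ inside the recursive term, substitute, and distribute the indicator product $\prod_{t=1}^{k}\mathbbm{1}(S_t\in\mathcal{S}_{\text{safe}})$ across the two resulting summands. In the first (recursive) summand, the inner indicator $\mathbbm{1}(S_{k+1}\in\mathcal{S}_{\text{safe}})$ joins the existing product to form $\prod_{t=1}^{k+1}\mathbbm{1}(S_t\in\mathcal{S}_{\text{safe}})$, yielding exactly the recursive term for index $k+1$. In the second (policy-gradient) summand, I would use the tower property together with the identity $G_{k+1}\prod_{t=1}^{k}\mathbbm{1}(S_t\in\mathcal{S}_{\text{safe}})=G_1$ (which follows directly from the definition of $G_t$ in \eqref{def_G_cumulative_product}) to collapse the new term to $\mathbb{E}[G_1\nabla_\theta\log\pi_\theta(A_k\mid S_k)\mid S_0]$, which extends the sum by one index.

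The main obstacle is the careful bookkeeping of the nested conditional expectations when pulling $\prod_{t=1}^{k}\mathbbm{1}(S_t\in\mathcal{S}_{\text{safe}})$ inside the expectation conditioned on $S_k$. This product is measurable with respect to $\sigma(S_0,\ldots,S_k)$ rather than $\sigma(S_k)$ alone, so the tower property is applied through the Markov property of the state process: $\mathbb{E}[\,\cdot\mid S_k]=\mathbb{E}[\,\cdot\mid S_0,\ldots,S_k]$ on the relevant future-measurable random variables, which legitimizes moving the cumulative indicator past the inner conditional expectation. Once this bookkeeping is handled cleanly, the induction closes and the lemma follows immediately.
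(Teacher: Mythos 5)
Correct. The paper does not reproduce a proof of this lemma (it is imported from \cite{chen2023policy}), but your argument—unrolling the recursion of Lemma~\ref{lemma_safe_policy_gradient_G1} by induction on $k$, absorbing each new indicator into the product, using the Markov/tower bookkeeping to pull $\prod_{t=1}^{k}\mathbbm{1}(S_t\in\mathcal{S}_{\text{safe}})$ inside the inner conditional expectation, and collapsing $G_{k+1}\prod_{t=1}^{k}\mathbbm{1}(S_t\in\mathcal{S}_{\text{safe}})=G_1$ from \eqref{def_G_cumulative_product}—is exactly the intended route, with the hypothesis $S_k\in\mathcal{S}_{\text{safe}}$ of Lemma~\ref{lemma_safe_policy_gradient_G1} holding wherever the indicator product is nonzero, so the induction closes as you describe.
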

To prove Theorem~\ref{theorem_safe_policy_gradient}, rewrite the probability of remaining safe in terms of $G_0$ defined in \eqref{def_G_cumulative_product}. By definition of probability we have
%
% \begin{align}
%     &\mathbb{P} \left(\bigcap\limits_{t=0}^{T} \{ S_t \in \mathcal{S}_\text{safe}\} |\pi_\theta, S_0 \right) \nonumber \\
%     &= \mathbb{E} \left[\mathbbm{1} \left(\bigcap\limits_{t=0}^{T} \{ S_t \in \mathcal{S}_\text{safe}  \}\right) |\pi_\theta, S_0 \right].
% \end{align}
%
\begin{align}
    \mathbb{P} \! \left( \bigcap\limits_{t=0}^{T} \{ S_t \in \mathcal{S}_\text{safe}\} |\pi_\theta, S_0 \! \right) \!\!=\! \mathbb{E} \! \left[\mathbbm{1} \! \left(\bigcap\limits_{t=0}^{T} \{ S_t \in \mathcal{S}_\text{safe} \! \}\right) \! |\pi_\theta, S_0 \right].
\end{align}
Note that the indicator function in the previous expression takes the value one, if and only if each $S_t\in\mathcal{S}_{\text{safe}}$. Hence, it is possible to rewrite the previous expression in terms of the product of indicator functions of states satisfying the safety condition at each time
\begin{align}\label{eqn_safe_policy_gradient_G0}
    \mathbb{P} \left(\bigcap\limits_{t=0}^{T} \{ S_t \in \mathcal{S}_\text{safe}\} |\pi_\theta, S_0 \right) &= \mathbb{E} \left[\! \prod\limits_{t=0}^{T} \! \mathbbm{1} (S_t \in \mathcal{S}_\text{safe}) |\pi_\theta, S_0 \right] \nonumber \\
    &= \mathbb{E}\left[G_0 | S_0\right],
\end{align}
where $\pi_\theta$ is omitted in the last equation for simplicity. By virtue of $S_0 \in \mathcal{S}_\text{safe}$, we obtain $ \mathbb{E}[G_0 | S_0] = \mathbb{E}[G_1 \cdot \mathbbm{1} (S_0 \in \mathcal{S}_\text{safe}) | S_0]=\mathbb{E}[G_1 | S_0].$ Then, using \eqref{eqn_safe_policy_gradient_G0}, the gradient of the probability of remaining safe reduces to
\begin{equation}\label{eqn_safe_policy_gradient_G1}
    \nabla_\theta \mathbb{P} \left(\bigcap\limits_{t=0}^{T} \{ S_t \in \mathcal{S}_\text{safe}\} |\pi_\theta, S_0 \right)=\nabla_\theta \mathbb{E}\left[G_1 | S_0\right].
\end{equation}
In Lemma~\ref{lemma_safe_policy_gradient_G1} we derive a recursive relationship for the gradient of $\mathbb{E}\left[G_t\mid S_{t-1}\right], t=1,2,\cdots, T-1 $. By virtue of Lemma~\ref{lemma_nabla_E_G1_S0_GT_ST-1}, to complete the proof of the result it suffices to establish that 
\begin{align}\label{eqn_thing_to_show}
    &\mathbb{E}\left[\nabla_\theta\mathbb{E}\left[G_T\mid S_{T-1}\right]\prod_{t=1}^{T-1}\mathbbm{1}\left(S_{t}\in\mathcal{S}_{\text{safe}}\right) \mid S_0\right] \nonumber \\
    &= \mathbb{E}\left[G_1\nabla_\theta \log \pi_{\theta}(A_{T-1} \mid S_{T-1})\mid S_0\right]. 
\end{align}
We establish this result next. Let us start by working with the gradient of the inner expectation on the left-hand side of \eqref{eqn_thing_to_show}. 

Using the fact that $G_T = \mathbbm{1}\left(S_T\in\mathcal{S}_{\text{safe}}\right)$ and the definition of expectation one can write $\nabla_\theta\mathbb{E}\left[G_T\mid S_{T-1}\right]$ in the left hand side of the previous expression as
\begin{equation}\label{eqn_nabla_G_T}
   \nabla_\theta\mathbb{E}\left[G_T \!\!\mid\!\! S_{T-1}\right] \!=\! \nabla_\theta \!\! \int_{\mathcal{S}} \! \mathbbm{1} (s_T \in \mathcal{S}_\text{safe}) p(s_T | S_{T-1}) \, ds_T, 
\end{equation}
where $p\left(s_T\mid S_{T-1}\right)$ denotes the conditional probability of $S_T$ given $S_{T-1}$. Marginalizing the probability distribution it follows that 
\begin{align}
    p(\!s_T | S_{T-1}\!) \! = \! \!\!\int_{\mathcal{A}} \!\!\!\! p(\! s_T | S_{T-1}, \! a_{T-1} \!)  \pi_{\theta}(\! a_{T-1} | S_{T-1} \!) da_{T-1}.
\end{align}
Consequently, \eqref{eqn_nabla_G_T} can be converted to
\begin{align}
   &\nabla_\theta\mathbb{E}\left[G_T\mid S_{T-1}\right] \nonumber \\
   &= \nabla_\theta\int_{\mathcal{S}\times \mathcal{A}} \mathbbm{1}\left(s_T \in \mathcal{S}_{\text{safe}}\right) p(s_T \mid S_{T-1}, \, a_{T-1}) \nonumber \\
   &\quad \quad \quad \quad \quad \quad \pi_{\theta}(a_{T-1}\mid S_{T-1}) \, ds_T da_{T-1}.
\end{align}
Note that in the previous expression, the only term dependent on $\theta$ is the policy, hence we have that
\begin{align}\label{eqn_temp22}
   \nabla_\theta\mathbb{E}\left[G_T\mid S_{T-1}\right]  =&\int_{\mathcal{S}\times \mathcal{A}}\! \mathbbm{1}\left(s_T \in \mathcal{S}_{\text{safe}}\right) p(s_T \! \mid \! S_{T-1}, \, a_{T-1}) \nonumber \\
   &\nabla_\theta\pi_{\theta}(a_{T-1}\mid S_{T-1}) \, ds_T da_{T-1}. 
\end{align}
Applying the ``log-trick'' to the right-hand side of \eqref{eqn_temp22} yields
\begin{align}
   &\nabla_\theta\mathbb{E}\left[G_T\mid S_{T-1}\right] \nonumber \\ &=\int_{\mathcal{S}\times \mathcal{A}}\mathbbm{1}\left(s_T \in \mathcal{S}_{\text{safe}}\right) p(s_T \mid S_{T-1}, \, a_{T-1})  \\
   &\quad~\pi_{\theta}(a_{T-1}\mid S_{T-1}) \nabla_\theta\log\pi_{\theta}(a_{T-1}\mid S_{T-1}) \, ds_T da_{T-1}.  \nonumber
   \end{align}
Since $p(s_T| S_{T-1},a_{T-1})\pi_\theta(a_{T-1}| S_{T-1}) \!= \! p(s_T,a_{T-1}| S_{T-1})$ is the joint probability distribution of $S_{T}$ and $A_{T-1}$ given $S_{T-1}$ the previous expression can be rewritten as 
\begin{equation}
    \nabla_\theta\mathbb{E}\left[G_T\mid S_{T-1}\right]  =\mathbb{E}\left[G_T\nabla_{\theta}\log\pi_\theta(A_{T-1}\mid S_{T-1})\mid S_{T-1}\right]. 
\end{equation}

Since $S_1,\ldots, S_{T-1}$ are measurable with respect to $S_{T-1}$ it follows that 
\begin{align}
   &\nabla_\theta\mathbb{E}\left[G_T\mid S_{T-1}\right] \prod_{t=1}^{T-1}\mathbbm{1}\left(S_{t}\in\mathcal{S}_{\text{safe}}\right) \nonumber \\
   &=\mathbb{E}\left[G_1\nabla_{\theta}\log\pi_\theta(A_{T-1}\mid S_{T-1})\mid S_{T-1}\right],
\end{align}
where we have used that $G_1 = G_T\prod_{t=1}^{T-1}\mathbbm{1}\left(S_{t}\in\mathcal{S}_{\text{safe}}\right)$. Substituting the previous expression in the left hand side of \eqref{eqn_thing_to_show} it follows that 
\begin{align}
    &\mathbb{E}\left[\nabla_\theta\mathbb{E}\left[G_T\mid S_{T-1}\right]\prod_{t=1}^{T-1}\mathbbm{1}\left(S_{t}\in\mathcal{S}_{\text{safe}}\right) \mid S_0\right] \nonumber \\
    &=\mathbb{E}\left[\mathbb{E}\left[G_1\nabla_\theta \log \pi_{\theta}(A_{T-1} \mid S_{T-1}) \mid S_{T-1}\right]\mid S_0\right]. 
\end{align}
The law of total expectation completes the result claimed in \eqref{eqn_thing_to_show} and therefore completes the proof of Theorem~\ref{theorem_safe_policy_gradient}.

%%%%%%%%%%%%%%%%%%
%%%%%%%%%%%%%%%%%%
%%%%%%%%%%%%%%%%%%
%%%%%%%%%%%%%%%%%%
\subsection{Technical Lemmas used in Section~\ref{Losses_of_Imposing_Relaxation}}
\label{appendix_lemma_primal_error}
\begin{lemma}
\label{lemma_primal_error}
Consider $g(\cdot)$ as defined in \eqref{eqn_def_g}.
$\forall \lambda^1, \lambda^2 \in \Real$, denote by $\theta^*(\lambda^1)$ and $\theta^*(\lambda^2)$ the optimum of the Lagrangian in \eqref{eqn_dual_funtion2} with respect to $\lambda^1$ and $\lambda^2$. Suppose that there exists a $\epsilon > 0$ and a $\theta^{\dagger} (\lambda^1)$ such that
\begin{align}\label{eqn_primaL_err}
    L(\theta^\dagger(\lambda^1), \lambda^1) \leq L(\theta^*(\lambda^1), \lambda^1) \leq L(\theta^\dagger(\lambda^1), \lambda^1) + \epsilon.
\end{align}
% Let $g(\theta^{\dagger}(\lambda^1)) = \mathbb{P} \left(\cap_{t=0}^{T} \{ S_t \in \mathcal{S}_\text{safe}\} | \theta^{\dagger}(\lambda^1) \right)  -  (1-\delta)$. 
Then, it holds that
\begin{align}
    d(\lambda^2) \geq d(\lambda^1) + (\lambda^2 - \lambda^1) g(\theta^\dagger(\lambda^1)) - \epsilon.
\end{align}
\end{lemma}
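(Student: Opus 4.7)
The plan is to prove the inequality by a direct chain argument, using only (i) the definition of the dual function as a maximum of the Lagrangian, (ii) linearity of the Lagrangian in $\lambda$, and (iii) the $\epsilon$-optimality assumption \eqref{eqn_primaL_err}. No nontrivial tools beyond these are needed; the lemma is essentially a standard ``$\epsilon$-subgradient'' type result for the dual function of \eqref{eqn_dual_funtion2}.

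First, I would start from the definition $d(\lambda^2)=\max_{\theta} L(\theta,\lambda^2)$, where $L(\theta,\lambda)=V(\theta)+\lambda g(\theta)$, and lower-bound it by evaluating the Lagrangian at the specific point $\theta^\dagger(\lambda^1)$:
\begin{equation*}
d(\lambda^2)\;\geq\;L(\theta^\dagger(\lambda^1),\lambda^2)\;=\;V(\theta^\dagger(\lambda^1))+\lambda^2\,g(\theta^\dagger(\lambda^1)).
\end{equation*}
Next, I would add and subtract $\lambda^1 g(\theta^\dagger(\lambda^1))$ on the right-hand side to rewrite this as
\begin{equation*}
d(\lambda^2)\;\geq\;L(\theta^\dagger(\lambda^1),\lambda^1)+(\lambda^2-\lambda^1)\,g(\theta^\dagger(\lambda^1)),
\end{equation*}
which isolates the affine dependence in $\lambda$.

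Finally, I would invoke the $\epsilon$-optimality hypothesis \eqref{eqn_primaL_err}, namely $L(\theta^*(\lambda^1),\lambda^1)\leq L(\theta^\dagger(\lambda^1),\lambda^1)+\epsilon$, together with the definition $d(\lambda^1)=L(\theta^*(\lambda^1),\lambda^1)$, to deduce $L(\theta^\dagger(\lambda^1),\lambda^1)\geq d(\lambda^1)-\epsilon$. Substituting this back yields exactly
\begin{equation*}
d(\lambda^2)\;\geq\;d(\lambda^1)+(\lambda^2-\lambda^1)\,g(\theta^\dagger(\lambda^1))-\epsilon,
\end{equation*}
which is the claim.

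There is really no ``hard part'' here: the only mild point requiring care is making sure the $\epsilon$ in the hypothesis is used with the correct sign (it bounds how much worse $\theta^\dagger(\lambda^1)$ is than the true maximizer at $\lambda^1$, not at $\lambda^2$), which is why the argument evaluates $L$ at $\theta^\dagger(\lambda^1)$ for both $\lambda^1$ and $\lambda^2$ rather than mixing in $\theta^*(\lambda^2)$. The left inequality in \eqref{eqn_primaL_err} is not used, so the statement could in fact be slightly strengthened; but the proof as outlined uses exactly the right inequality and nothing else.
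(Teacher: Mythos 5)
Your proposal is correct and is essentially the paper's own argument, just organized as a direct lower-bounding chain on $d(\lambda^2)$ rather than as an upper bound on the difference $d(\lambda^1)-d(\lambda^2)$: both use $d(\lambda^2)\geq L(\theta^\dagger(\lambda^1),\lambda^2)$, the affine dependence of $L$ on $\lambda$, and only the right-hand inequality of \eqref{eqn_primaL_err}. No substantive difference.
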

\begin{proof}
We proceed by recalling the definition of the dual function $d(\lambda^1)$ and $d(\lambda^2)$ and computing their difference
\begin{align}\label{eqn_dual_difference}
    d(\lambda^1) - d(\lambda^2) = L(\theta^*(\lambda^1), \lambda^1) - L(\theta^*(\lambda^2), \lambda^2),
\end{align}
where the Lagrangian $L(\theta(\lambda), \lambda)$ is defined by \eqref{eqn_dual_funtion2}.

Since $\theta^*(\lambda^2)$ is the optimal, \eqref{eqn_dual_difference} can be converted to
\begin{align}
    d(\lambda^1) - d(\lambda^2) \leq L(\theta^*(\lambda^1), \lambda^1) - L(\theta^\dagger(\lambda^1), \lambda^2).
\end{align}
Substituting \eqref{eqn_primaL_err} into the previous inequality yields
\begin{align}
    d(\lambda^1) - d(\lambda^2) \leq L(\theta^\dagger(\lambda^1), \lambda^1) + \epsilon - L(\theta^\dagger(\lambda^1), \lambda^2).
\end{align}
Writing $L(\theta^\dagger(\lambda), \lambda)$ as in \eqref{eqn_dual_funtion2}, the previous inequality yields
\begin{align}\label{eqn_dlam1_dlam2}
    d(\lambda^1) - d(\lambda^2) &\leq V(\theta^\dagger(\lambda^1)) + \lambda^1 g(\theta^\dagger(\lambda^1)) \nonumber \\
    &- \left(V(\theta^\dagger(\lambda^1)) + \lambda^2 g(\theta^\dagger(\lambda^1)) \right) + \epsilon \nonumber \\
    &=-(\lambda^2 - \lambda^1) g(\theta^\dagger(\lambda^1)) + \epsilon.
\end{align}
Reordering \eqref{eqn_dlam1_dlam2} completes the proof of Lemma~\ref{lemma_primal_error}.
\end{proof}
\begin{lemma}\label{lemma_lam_bounded}
        Assume that there exists a strictly feasible policy $\tilde{\pi}_\theta$ such that $\exists \, C > 0, g(\tilde{\pi}_\theta) \geq C$, and $V(\tilde{\pi}_\theta)$ is bounded. Let hypotheses of Theorem~\ref{proposition_dual_converge} hold. Then, it holds that
\begin{align}
    \limsup_{k\to \infty} 1/k \, \mathbb{E}\left[\lambda^k\right] = 0.
\end{align}
\end{lemma}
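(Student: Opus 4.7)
The plan is to combine the Slater-type hypothesis $g(\tilde\pi_\theta)\ge C>0$ with the averaged dual-convergence bound from Theorem~\ref{proposition_dual_converge} to conclude that the Ces\`aro averages $\frac{1}{k}\sum_{u=0}^{k-1}\mathbb{E}[\lambda^u]$ stay bounded, and then to leverage the one-step Lipschitz bound $|\lambda^{k+1}-\lambda^k|\le\eta_\lambda(1-\delta)$ coming from $|\hat g|\le 1-\delta$ to upgrade bounded averages into the desired sublinear growth of $\mathbb{E}[\lambda^k]$ itself.

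First I would lower-bound the dual function using the strictly feasible policy: by the very definition of $d$ in \eqref{eqn_dual_funtion2},
\begin{equation}
d(\lambda^u)\ \ge\ V(\tilde\pi_\theta)+\lambda^u g(\tilde\pi_\theta)\ \ge\ V(\tilde\pi_\theta)+C\lambda^u .
\end{equation}
Next, I would take expectations, average from $u=0$ to $k-1$, and invoke the inequality \eqref{eqn_convexity1} established inside the proof of Theorem~\ref{proposition_dual_converge}, which gives
\begin{equation}
\frac{1}{k}\sum_{u=0}^{k-1}\mathbb{E}[d(\lambda^u)]\ \le\ D^\star+\frac{\mathbb{E}[(\lambda^0-\lambda^\star)^2]}{2\eta_\lambda k}+\frac{\eta_\lambda(1-\delta)^2}{2}+\epsilon .
\end{equation}
Combining the last two displays and using that $V(\tilde\pi_\theta)$ and $D^\star$ are finite yields a constant $M$, independent of $k$, such that
\begin{equation}\label{eqn_plan_average}
\frac{1}{k}\sum_{u=0}^{k-1}\mathbb{E}[\lambda^u]\ \le\ M\qquad\text{for all sufficiently large }k .
\end{equation}

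The remaining step is to promote the bounded Ces\`aro average \eqref{eqn_plan_average} to $\limsup_k \mathbb{E}[\lambda^k]/k=0$. For this I would exploit the recursion in Step~5 of Algorithm~\ref{alg_pd}: since $|\hat g(\theta^{k+1})|\le 1-\delta$, we have the a.s.\ one-step bound $|\lambda^{k+1}-\lambda^k|\le\eta_\lambda(1-\delta)$, hence $|\mathbb{E}[\lambda^{k+1}]-\mathbb{E}[\lambda^k]|\le\eta_\lambda(1-\delta)$. I would argue by contradiction: if $\limsup_k \mathbb{E}[\lambda^k]/k=a>0$, then along some subsequence $k_i\to\infty$ we have $\mathbb{E}[\lambda^{k_i}]\ge a k_i/2$; by the Lipschitz bound, for every $u$ in the window $[k_i-j^\star,k_i]$ with $j^\star=\lfloor a k_i/(4\eta_\lambda(1-\delta))\rfloor$, we still have $\mathbb{E}[\lambda^u]\ge a k_i/4$. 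Summing over this window of length $\Theta(k_i)$ forces $\sum_{u=0}^{k_i-1}\mathbb{E}[\lambda^u]=\Omega(k_i^2)$, contradicting \eqref{eqn_plan_average}.

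I expect the main obstacle to be bookkeeping in the last paragraph: making the backward propagation through the Lipschitz recursion precise, choosing the window length $j^\star$ carefully so that the lower bound $\mathbb{E}[\lambda^u]\ge a k_i/4$ remains valid throughout, and then showing rigorously that the resulting $\Omega(k_i^2)$ lower bound on the partial sum is incompatible with \eqref{eqn_plan_average}. Everything else is a direct combination of the Slater hypothesis with tools already assembled in Theorem~\ref{proposition_dual_converge} and in the dual update of Algorithm~\ref{alg_pd}.
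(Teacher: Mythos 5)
Your proposal is correct, and its first half coincides with the paper's argument: both exploit the Slater policy to get $d(\lambda)\ge V(\tilde{\pi}_\theta)+C\lambda$ and then feed the averaged dual bound \eqref{eqn_convexity1} from the proof of Theorem~\ref{proposition_dual_converge} into it to control the multipliers. You diverge at the final upgrade step. The paper applies convexity of $d$ to bound $\mathbb{E}\left[d\left(\tfrac{1}{k}\sum_{u=0}^{k-1}\lambda^u\right)\right]$, uses \eqref{eqn_any_lam_bound} to bound the averaged multiplier, and then invokes the Stolz--Ces\`aro theorem to conclude $\limsup_k \mathbb{E}[\lambda^k]<\infty$, after which dividing by $k$ finishes. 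You instead bound the Ces\`aro average $\tfrac{1}{k}\sum_{u=0}^{k-1}\mathbb{E}[\lambda^u]$ directly and upgrade to sublinear growth of $\mathbb{E}[\lambda^k]$ via the one-step bound $|\lambda^{k+1}-\lambda^k|\le\eta_\lambda(1-\delta)$ (valid because $|\hat g|\le 1-\delta$ when $\delta<1/2$, as Theorem~\ref{proposition_dual_converge} assumes, together with non-expansiveness of the projection) and a windowed contradiction argument. What each route buys: the paper's is shorter and nominally yields the stronger conclusion that $\mathbb{E}[\lambda^k]$ is bounded, but passing from a bounded Ces\`aro average to a bounded sequence is delicate and needs some slow-variation property of the iterates that the paper leaves implicit; your argument supplies exactly that slow variation explicitly through the Lipschitz increment bound, proves only the $o(k)$ growth the lemma actually asserts, and is therefore self-contained. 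Your bookkeeping also goes through: truncating the window at $0$ is harmless since $\lambda^u\ge 0$, the window length is $\Theta(k_i)$ in either case, and the resulting $\Omega(k_i^2)$ partial sum contradicts the $O(k_i)$ bound from the Ces\`aro estimate, so I see no gap.
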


\begin{proof}
Let $\tilde{\pi}_\theta$ be a strictly feasible policy such that $g(\tilde{\pi}_\theta) \geq C > 0$. Then, by the definition of dual function in \eqref{eqn_dual_funtion2}, one can lower bound $d(\lambda)$ as
    \begin{align}
        d(\lambda) &\geq V(\tilde{\pi}_\theta) + \lambda g(\tilde{\pi}_\theta)  \geq V(\tilde{\pi}_\theta) + \lambda C.
    \end{align}
The previous inequality is equivalent to
    \begin{align}\label{eqn_any_lam_bound}
         \lambda \leq \frac{d(\lambda) - V(\tilde{\pi}_\theta)}{C}, \, \forall \lambda \in \Real.
    \end{align}
Since $d(\lambda^*)$ i.e., $D^*$ and $V(\tilde{\pi}_\theta)$ are bounded, \eqref{eqn_any_lam_bound} indicates that $\lambda^*$ is bounded as well.

We next derive the following inequality using \eqref{eqn_convexity1} and the convexity of the dual function~{\cite[Chapter 5.2]{boyd2004convex}}
\begin{align}\label{eqn_bounded_expDual}
    \mathbbm{E} \left[ d\left(\frac{1}{k} \sum_{u=0}^{k-1} \lambda^u\right) \right]
    \leq &D^* + \frac{\mathbbm{E} \left[ (\lambda^{0}- \lambda^* )^2  \right]}{2 \eta_{\lambda} k}  + \frac{\eta_{\lambda} (1-\delta)^2}{2} \nonumber \\
    &+\epsilon.
\end{align}
Since the rightmost of \eqref{eqn_bounded_expDual} is bounded, we obtain 
\begin{align}\label{eqn_bounded_expDual2}
    \limsup_{k\to\infty} \mathbb{E}\left[ d(\frac{1}{k}\sum_{u=0}^{k-1}\lambda^u ) \right] < \infty.
\end{align} 
Combining \eqref{eqn_bounded_expDual2} with \eqref{eqn_any_lam_bound} further reveals that 
\begin{align}
    \limsup_{k\to\infty} \mathbb{E}\left[ \frac{1}{k}\sum_{u=0}^{k-1}\lambda^u \right] < \infty.
\end{align}

By virtue of the Stolz–Cesàro Theorem~\cite{choudary2014real}, it follows that 
\begin{align}
    \limsup_{k\to\infty} \mathbb{E}\left[ \lambda^k \right] < \infty.
\end{align} 
Ultimately, dividing by $1/k$
completes the proof of the result.
\end{proof}

\bibliographystyle{IEEEtran}
\bibliography{bib}
% \vspace{-0.5cm}

\begin{IEEEbiography}[{\includegraphics[width=1in,height=1.25in,clip,keepaspectratio]{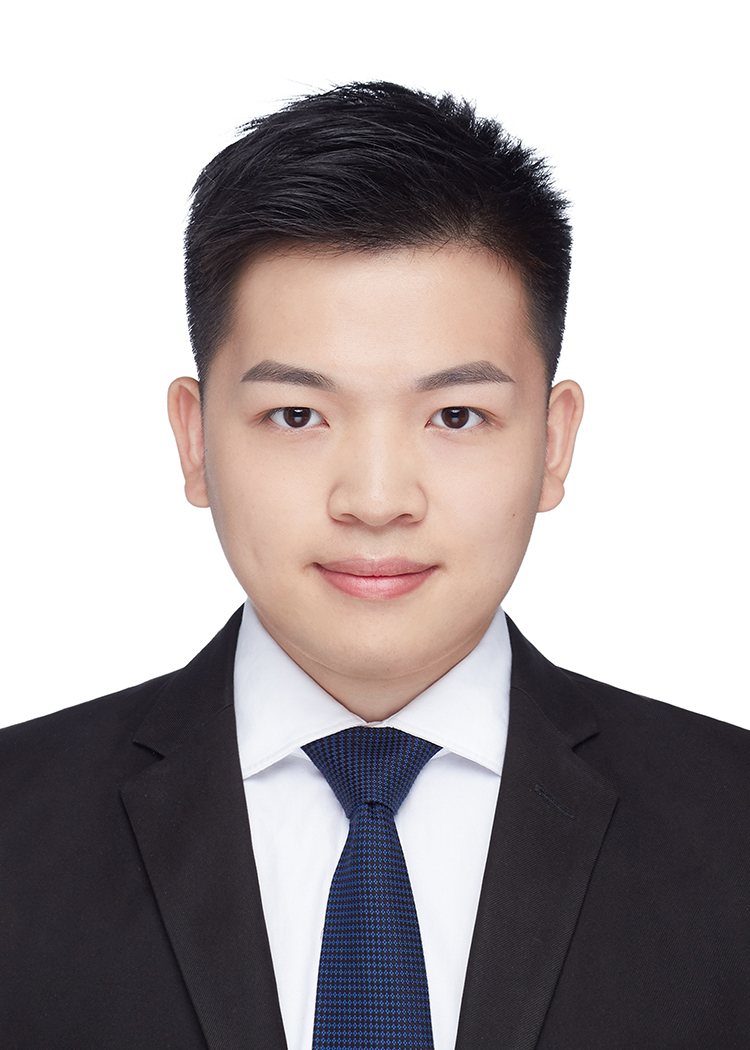}}]{Weiqin Chen} has been
working toward the Ph.D. degree in the Department of Electrical Computer and Systems Engineering at Rensselaer Polytechnic Institute since August 2021. Mr. Chen was the recipient of the Meritorious Winner Award in the Mathematical / Interdisciplinary Contest in Modeling in 2019 and Belsky Award for Computational Sciences and Engineering in 2023. His research interests include optimization, reinforcement learning, and LLM.
\end{IEEEbiography}

\vspace{-0.5cm}

\begin{IEEEbiography}[{\includegraphics[width=1in,height=1.25in,clip,keepaspectratio]{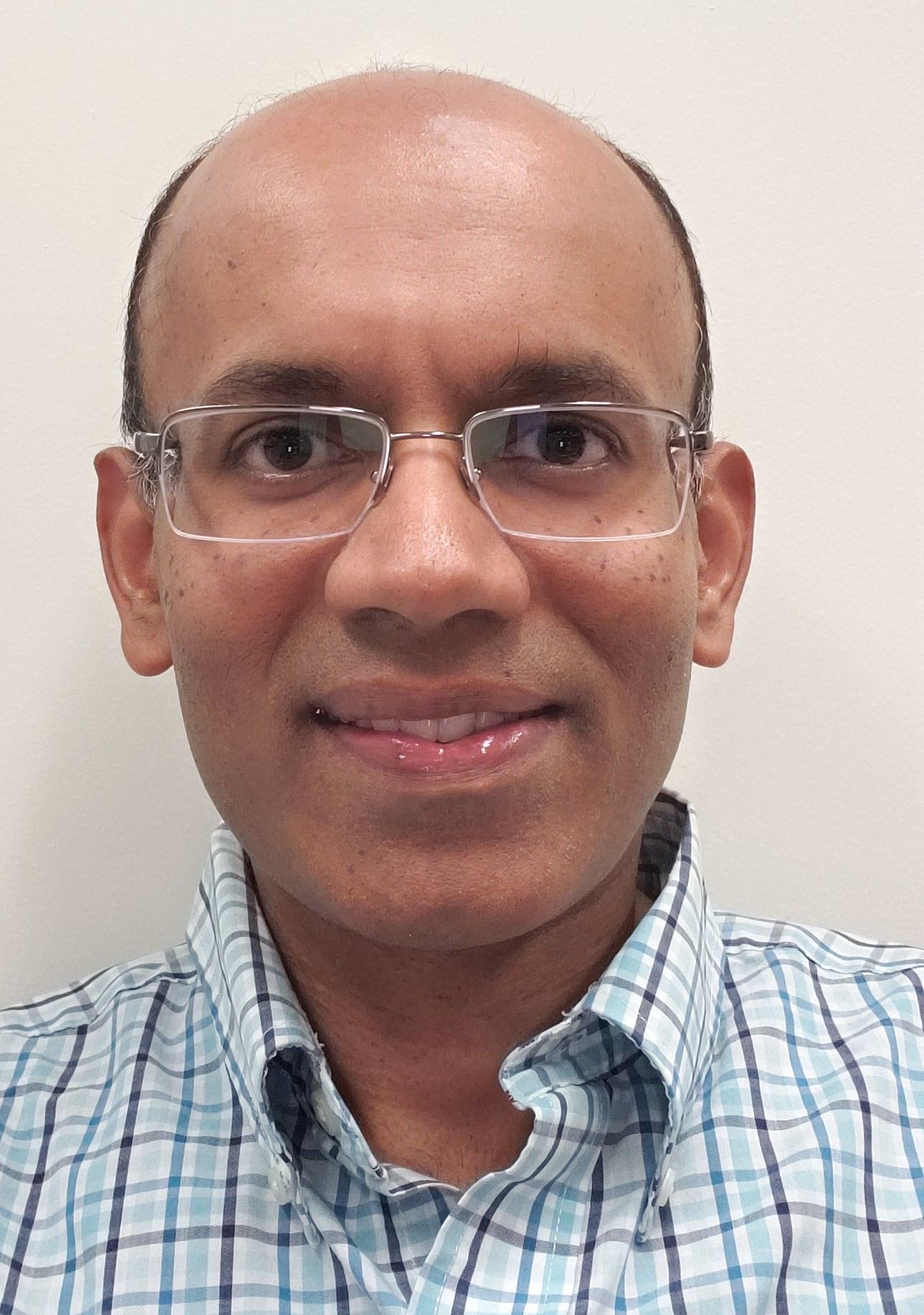}}]{Dharmashankar Subramanian} received the Ph.D. degree in chemical engineering from Purdue University, West Lafayette, IN, where he focused on combinatorial decision-making under uncertainty. He is currently a Principal Research Scientist \& Manager at IBM T.J. Watson Research Center, Yorktown Heights, NY. Prior to joining IBM Research, he was a Senior Research Scientist at Honeywell Labs, Minneapolis, MN. His research interests and expertise lie at the intersection of machine learning and optimal decision-making, and applications of mathematical modeling, and reinforcement learning in various domains for process control, optimization, risk analysis and alignment of large language models.
\end{IEEEbiography}

\vspace{-0.5cm}

\begin{IEEEbiography}[{\includegraphics[width=1in,height=1.25in,clip,keepaspectratio]{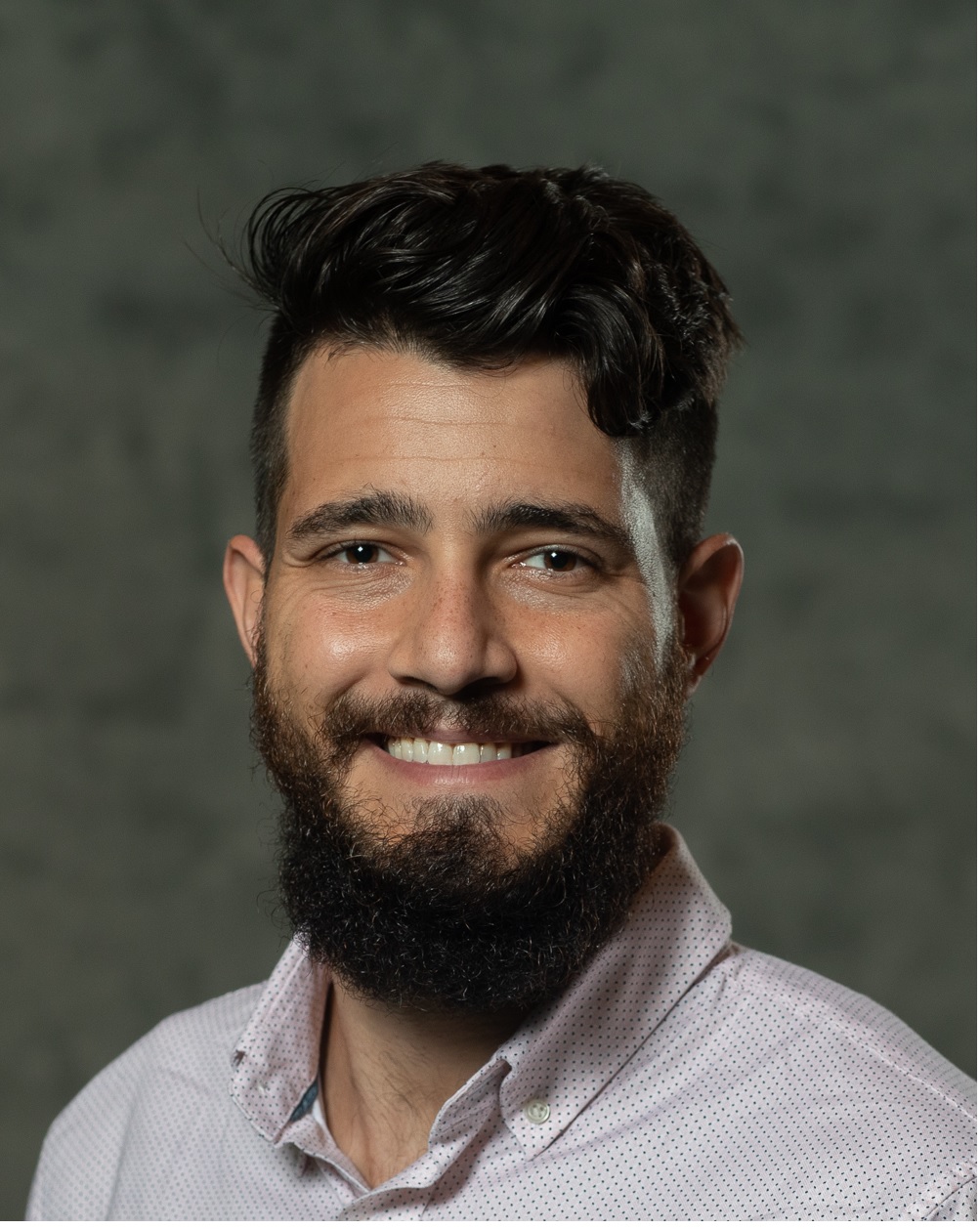}}]{Santiago Paternain} received the B.Sc. degree in electrical engineering from Universidad de
la República Oriental del Uruguay, Montevideo, Uruguay in 2012, the M.Sc. in Statistics from the Wharton School in 2018 and the Ph.D. in Electrical and Systems Engineering from the Department of Electrical and Systems Engineering, the University of Pennsylvania in 2018. He is currently an Assistant Professor in the Department of Electrical Computer and Systems Engineering at Rensselaer Polytechnic Institute. Prior to joining Rensselaer, Dr.
Paternain was a postdoctoral Researcher at the University of Pennsylvania. His research interests lie at the intersection of machine learning and control of dynamical systems. Dr. Paternain was the recipient of the 2017 CDC Best Student Paper Award and the 2019 Joseph and Rosaline Wolfe Best Doctoral Dissertation Award from the Electrical and Systems Engineering Department at the University of Pennsylvania.
\end{IEEEbiography}

\end{document}